\newtheorem{thm}{Theorem}[section]
\newtheorem{lem}[thm]{Lemma}
\newtheorem{prop}[thm]{Proposition}
\theoremstyle{definition}
\newtheorem{ass}{Assumption}
\crefname{equation}{eq.}{eqs.}
\crefname{figure}{fig.}{figs.}
\crefname{algorithm}{algorithm}{algorithms}
\crefname{table}{Table.}{Tables.}
\Crefname{table}{Table.}{Tables.}
\crefname{table*}{Table.}{Tables.}
\Crefname{table*}{Table.}{Tables.}
\crefname{lem}{Lemma.}{Lemmas.}
\Crefname{lem}{Lemma.}{Lemmas.}
\crefname{thm}{Theorem.}{Theorems.}
\Crefname{thm}{Theorem.}{Theorems.}
\crefname{prop}{Proposition.}{Propositions.}
\Crefname{prop}{Proposition.}{Propositions.}
\DeclareMathOperator*{\argmin}{arg\,min\,}
\DeclareMathOperator*{\trace}{Tr}
\DeclareMathOperator*{\rank}{Rank}
\DeclareMathOperator*{\conv}{conv}
\DeclareMathOperator*{\sbjto}{s.t.}
\DeclareMathOperator*{\dprime}{\prime \prime}
\title{Online Structured Sparsity-based Moving Object Detection from Satellite Videos}
\author{
		Junpeng Zhang, ~\IEEEmembership{Student Member, ~IEEE, }
		Xiuping Jia, ~\IEEEmembership{Senior Member, ~IEEE,} 
		Jiankun Hu, ~\IEEEmembership{Senior Member, ~IEEE} and
		Jocelyn Chanussot, ~\IEEEmembership{Fellow, ~IEEE} 
}
\begin{document}
\pgfplotsset{compat=1.14}
	
\maketitle

\begin{abstract}
	
	Inspired by the recent developments in computer vision, low-rank and structured sparse matrix decomposition can be potentially be used for extract moving objects in satellite videos. 
	This set of approaches seeks for rank minimization on the background that typically requires batch-based optimization over a sequence of frames, which causes delays in processing and limits their applications.
	To remedy this delay, we propose an \textbf{O}nline \textbf{L}ow-rank and \textbf{S}tructured Sparse \textbf{D}ecomposition (O-LSD). 
	O-LSD reformulates the batch-based low-rank matrix decomposition with the structured sparse penalty to its equivalent frame-wise separable counterpart, which then defines a stochastic optimization problem for online subspace basis estimation. 
	In order to promote online processing, O-LSD conducts the foreground and background separation and the subspace basis update alternatingly for every frame in a video. 
	We also show the convergence of O-LSD theoretically.
	Experimental results on two satellite videos demonstrate the performance of O-LSD in term of accuracy and time consumption is comparable with the batch-based approaches with significantly reduced delay in processing.
	
\end{abstract}

\begin{IEEEkeywords}
	Satellite Video Processing, Moving Object Detection, Online Robust Principle Component Analysis, Structured Sparsity-Inducing Norm, Background Subtraction
\end{IEEEkeywords}

\IEEEpeerreviewmaketitle

\section{Introduction}

\IEEEPARstart{O}{bject} detection on high resolution aerial images has been actively investigated in recent years \cite{han2016detection_survey_remote_sensing, xia2018dota}. 
Inspired by the state-of-the-art Deep Learning methods, such as Regional Convolution Neural Network (R-CNN) \cite{girshick2014rcnn}, Fast R-CNN \cite{girshick2015fast_rcnn}, Faster R-CNN \cite{ren2015faster_rcnn},  You Only Look Once (YOLO) \cite{redmon2016yolo} and Single Shot MultiBox Detector (SSD) \cite{liu2016ssd}, object detection performance on these images has been improved significantly \cite{long2017rcnn_rs_bbox, li2017faster_rcnn_rotation,ding2018faster_rcnn_rs,liu2018ssd_rs}.
These approaches are mainly exploring the spectral (or color) and spatial (texture or context) information on objects of interest, they detect objects of interest image by image, as none temporal information is available on those images. 
Recently, with satellite videos captured by Jilin-1 \cite{luo2017Jilin_1} and Skybox \cite{team2016planet}, dense temporal information becomes available, which benefits moving objects detection from space. 
Target tracking becomes possible and can then be conducted for various applications \cite{mou2016tracklet_satelite_video,du2018kcf_satellite_video,zhang2018bilevel_tracking,uzkent2018kcf_hsi}.

Detecting moving objects in a video is achieved by separating the temporal varying foreground, which is associated with the moving objects, and the background that lays in a low dimensional subspace from a video \cite{bouwmans2014rpca_video_review, bouwmans2017DLAM, bouwmans2018rpca_formulation_bg}. 
Given the moving objects account for a limited number of pixels in the foreground, the foreground is assumed sparse.
Robust Principle Component Analysis (RPCA), as one fundamental method in foreground extraction, defines a low-rank matrix decomposition problem with a sparse penalty \cite{candes2011RPCA}, which is solved by Principle Component Pursuit (RPCA-PCP) \cite{lin2011rpca_pcp,candes2011RPCA,wright2009RPCA_Proximal_Gradient} and Fast Low Rank Approximation (GoDec) \cite{zhou2011GoDec}. 
Based on the duality between sparsity and Laplace distribution, Probabilistic Robust Matrix Factorization (PRMF) provides a probabilistic interpretation to RPCA by combining Laplace error and Gaussian prior \cite{wang2012PRMF}.

As a moving object is commonly a set of neighboring pixels, spatial prior on the foreground is considered in low-rank matrix decomposition to improve moving object detection performance. 
Total Variation (TV) regularization is introduced to enforce smoothness on the foreground in the matrix decomposition \cite{xu2017rpca_vt_hsi}. 
DEtecting Contiguous Outliers in the LOw-rank Representation (DECOLOR) constrains the edges of moving objects to be contiguous, then first-order Markov Random Field (MRF) is integrated into low-rank matrix decomposition  \cite{zhou2013DECOLOR}.
Another possible spatial prior on the foreground is the sparsity over groups of spatial neighboring pixels on the foreground other than pixel-wise sparse, which is measured by Structured Sparsity-Inducing Norm  \cite{jenatton2011structured_sparsity}. 
Low-rank and Structured Sparse Decomposition (LSD) obeys this prior and penalizes the low-rank matrix decomposition by the structured sparsity-inducing norm of the foreground \cite{liu2015LSD}. 
As moving object detection in satellite video is more sensitive to random noises, integrating spatial prior should improve the quality of the estimated foreground, thus the moving object detection performance. 
By integrating structured sparsity, LSD presents boosted Moving Object Detection (MOD) performance in satellite videos \cite{zhang2019elsd}. 
DECOLOR, however, has a limited improvement in MOD performance for satellite videos, as the introduced MRF constraint tends to merge neighboring targets when the distance between them is too small.

\begin{table*}
	\caption{Comparison on Online Low-rank Matrix Decomposition Algorithms for Moving Object Detection}
	\label{tbl:comparison_methods}
	\centering
	\footnotesize
	\begin{tabular}{c|c|c|c|c|c}
		\Xhline{1pt}
		Method & Objective  Function& Constraints & Spatial Prior & \makecell{Optimization Scheme} & \makecell{Proven \\ Convergence}\\
		\Xhline{1pt}
		
		OPRMF \cite{wang2012PRMF} & \makecell{$\min \lambda \left\Vert \mathbf{D} - \mathbf{L} \mathbf{R} \right\Vert_{1}$ \\ $+ \frac{\lambda_{1}}{2} \left\Vert \mathbf{L} \right\Vert_{F}^{2} + \frac{\lambda_{2}}{2} \left\Vert \mathbf{R} \right\Vert_{F}^{2}$} & \makecell{ $\mathbf{L}_{ij} | \lambda_{1} \sim N(\mathbf{L}_{ij}| \mathbf{0}, \lambda_{1}^{-1} ),$ \\ $\mathbf{R}_{ij} | \lambda_{2} \sim N(\mathbf{R}_{ij}| \mathbf{0}, \lambda_{2}^{-1} )$ } & - & \makecell{Online Expectation \\ Maximization} & No \\
		\hline
		
		GRASTA \cite{he2012GRASTA} & $\min\left\Vert \mathbf{s}\right\Vert_{1}$  & \makecell{ $\mathbf{d} = \mathbf{L} \mathbf{r} + \mathbf{s}, $ \\ $\mathbf{L}^{T} \mathbf{L} = \mathbf{I}$}& -  & \makecell{Incremental Gradient \\  Descent Method on \\  Grassmannian Manifold} & No \\
		\hline
		
		OR-PCA \cite{feng2013ORPCA_SO} & 
		\makecell{$\min \frac{1}{2} \left\Vert \mathbf{D} - \mathbf{L} \mathbf{R} - \mathbf{S} \right\Vert^{2}_{F}  $ \\ $+ \frac{\lambda_{1}}{2} \left\Vert \mathbf{L} \right\Vert_{F}^{2}  + \frac{\lambda_{1}}{2} \left\Vert \mathbf{R} \right\Vert_{F}^{2} $ \\ $+ \lambda_{2} \left\Vert \mathbf{S}\right\Vert_{1}$ }& - & -  & \makecell{Stochastic \\ Optimization} & \textbf{Yes} \\
		\Xhline{1pt}
		
		COROLA \cite{shakeri2016COROLA} & \makecell{$\min \frac{1}{2} \left\Vert \mathcal{P}_{S^{\perp}}(\mathbf{D} - \mathbf{L} \mathbf{R} ) \right\Vert_{F}^{2}$ \\ $ + \frac{\alpha}{2} \left\Vert \mathcal{P}_{S^{\perp}}(\mathbf{L}) \right\Vert_{F}^{2}$ \\ $ + \frac{\alpha}{2} \left\Vert \mathcal{P}_{S^{\perp}}(\mathbf{R}) \right\Vert_{F}^{2} $ \\ $ + \beta \left\Vert \mathbf{S} \right\Vert_{1} + \gamma \left\Vert \mathbf \mathbf{A} vec(\mathbf{S}) \right\Vert $} & $\mathbf{S}_{ij} \in \{0,1\}$ & \makecell{Edge \\ Contiguousness} & \makecell{Stochastic \\ Optimization} & \textbf{Yes} \\ 
		\Xhline{1pt}
		
		GOSUS \cite{xu2013GOSUS} & \makecell{$\min \sum_{i=1}^{l} \mu_{i} \left\Vert \mathbf{G \mathbf{s}} \right\Vert_2$ \\ $+ \frac{\lambda}{2} \left\Vert \mathbf{L} \mathbf{r} + \mathbf{s} - \mathbf{d} \right\Vert_{2}^{2}$ }& $\mathbf{L}^{T} \mathbf{L} = \mathbf{I}$ & \makecell{\textbf{Structured Sparsity} \\ \textbf{of Foreground} }  &  \makecell{Incremental Gradient \\  Descent Method on \\  Grassmannian Manifold} & No \\
		\hline
		
		\textbf{Proposed O-LSD} & \makecell{$\min \frac{1}{2} \left\Vert \mathbf{D} - \mathbf{L} \mathbf{R} - \mathbf{S} \right\Vert^{2}_{F}$ \\ $ + \frac{\lambda_{1}}{2} \left\Vert \mathbf{L} \right\Vert_{F}^{2} + \frac{\lambda_{1}}{2} \left\Vert \mathbf{R}  \right\Vert_{F}^{2} $ \\ $ +  \lambda_{2} \sum_{\mathbf{s} \in \mathbf{S}} \left\Vert \mathbf{s} \right\Vert_{\ell_{1}/\ell_{\infty}} $ }& - & \makecell{\textbf{Structured Sparsity} \\ \textbf{of Foreground} }  & \makecell{Stochastic \\ Optimization} & \textbf{Yes} \\ 
		\Xhline{1pt}
		
		\multicolumn{6}{l}{\makecell{$*$There exist a variety of low-rank decomposition algorithm in the literature, however, we select the most related works here. Interested readers \\ may refer to \cite{bouwmans2014rpca_video_review,bouwmans2017DLAM,bouwmans2018rpca_formulation_bg,javed2019sssr} for more comprehensive reviews in low-rank matrix decomposition and their applications in video processing.}} \\
	\end{tabular} 
\end{table*}

Regardless of the detection performance of the algorithms above, a pitfall of them is that their solutions are based on optimization in a batch manner. 
These approaches use Singular Value Decomposition (SVD) for low-rank background estimation, which couples all the samples in each iteration of the optimization. 
The detection results are not available until the optimization terminates, which results in delays in processing.
Another shortcoming of batch-based approaches is the difficulty in handling a video with an incremental length. 
Both issues of the batch-based algorithms limit their application in various online systems.

In order to reduce the delay and to make MOD adaptable to videos of incremental length, online method is expected to sequentially estimate foreground and background for each new incoming frame. 
By low-rank matrix decomposition, the estimated Principal Component basis vectors represent a subspace, where the background lays. 
This subspace can be identified by a point on the Grassmannian manifold, and the incremental gradient descent method on Grassmannian Manifold is employed for online subspace tracking or updating \cite{turaga2008Stiefel_Grassmannian_manifold_cv, balzano2010GROUSE, harandi2013dictionary_learning_grassmannian_manifold}. 
For moving object detection, Grassmannian Robust Adaptive Subspace Tracking Algorithm (GRASTA) \cite{he2012GRASTA} and Grassmannian Online Subspace Updates with Structured-sparsity (GOSUS) \cite{xu2013GOSUS} are developed for online low-rank matrix decomposition with the pixel-wise sparse penalty and the structured sparse penalty, respectively. 
These set of approaches, however, provide no theoretical guarantee on their convergence, and their performance is heavily sensitive to the selection of the learning rate.

Another possibility for online low-rank matrix decomposition is the increasingly common matrix factorization approximation of nuclear norm, where rank-minimization is replaced by the sum of square penalties of its factorization \cite{recht2010guaranteed_min_rank,feng2013ORPCA_SO,sprechmann2015MF_PAMI_online,shakeri2016COROLA,shen2016online_subspace_clustering}. 
With this reformulation, iterative optimization scheme is then developed based on stochastic optimization for solving low-rank decomposition problem online. 
Online Robust Principle Component Analysis (OR-PCA) solves the online low-rank matrix decomposition problem with pixel-wise sparsity penalty, which, more importantly, proves that iterative optimization algorithm converges to the global optimum of the original RPCA approach \cite{feng2013ORPCA_SO}. 
Utilizing first-order Markov Random Field, spatial prior on contiguous edges is also integrated with this reformulation for online low-rank matrix decomposition \cite{shakeri2016COROLA}.

It has been observed that, by introducing the structured sparse penalty, LSD boosts the moving object detection performance in satellite videos \cite{zhang2019elsd}. 
While GOSUS combines online low-dimensional subspace tracking with structured sparsity, no theoretical guarantee on its convergence is provided. 
To the best of our knowledge, there exists a gap between online algorithm with theoretically guaranteed convergence and the one with the structured sparsity penalty, as demonstrated in \Cref{tbl:comparison_methods}. 
In order to fill this gap, we present an online low-rank matrix decomposition approach with structured sparse penalty, named as \textbf{O}nline \textbf{L}ow-rank and \textbf{S}tructured Sparse \textbf{D}ecomposition (O-LSD), which not only combines the structured sparsity penalty but also provides theoretically guaranteed convergence.
We follow the matrix factorization approximation of nuclear norm for online learning in \cite{feng2013ORPCA_SO,sprechmann2015MF_PAMI_online,shakeri2016COROLA,shen2016online_subspace_clustering}, 
and decompose the background matrix to a set of background frames that are reconstructed by the estimated subspace basis and their associated coefficients. 
To promote online processing, the proposed O-LSD algorithm is composed of two building blocks. 
For each frame, its corresponding foreground and background frames are reconstructed by the current subspace basis, then the subspace basis is updated for this new input. 
This procedure defines a stochastic optimization problem, and we show that O-LSD algorithm converges almost surely. 
Existing convergence analysis in \cite{mairal2010online_dictionary_learning,feng2013ORPCA_SO, shen2016online_subspace_clustering,dohmatob2016online_structured_dictionary_learning_st_lap} is built on necessary and sufficient conditions on the unique solution in sparse encoding. 
In O-LSD, no such conditions exist for structured sparsity encoding to the best of our knowledge, and we show the convergence of O-LSD based on the boundedness of the sub-gradients in structured sparsity encoding, then a set of related properties of O-LSD are demonstrated. 
Experimental evaluations and analysis were performed on a satellite video dataset with two videos, where we compared our algorithm with five state-of-the-art algorithms.

In summary, the main contributions of this work are four-fold: 
\begin{enumerate}
	\item We propose an \textbf{O}nline \textbf{L}ow-rank and \textbf{S}tructured Sparse \textbf{D}ecomposition (O-LSD) for moving object detection in satellite videos by reformulating batch-based LSD using the matrix factorization approximation of nuclear norm.
	
	\item To solve the new reformulated optimization problem, two iterating steps are designed and developed.
	For each frame, the corresponding foreground and background frames are first reconstructed by the current subspace basis, then the subspace basis is updated by the given frame.
	
	\item We show that O-LSD converges almost surely. 
	In contrary to most current online algorithms, we show that O-LSD can converge without meeting the conditions for the unique solution of structured sparsity encoding. 
	Due to the lack of these conditions, the solution of O-LSD can converge to neither a stationary point nor the global optimum of its batch-based counterpart LSD. 
	This finding and its corresponding proof are useful beyond the scope of this paper.
	
	\item Due to the better convergence characteristics of O-LSD, it can further reduce its processing delay with negligible effects on the detection performance, by down-sampling in the temporal domain. 
\end{enumerate}

The remainder of this paper is organized as follows. 
The proposed O-LSD is presented in \Cref{sec:method}, where its convergence analysis is provided in \Cref{subsec:convergence}. 
The experimental parameter settings and performance comparison against state-of-the-art approaches are presented in \Cref{sec:experiments}.
Finally, conclusions and suggestions for future research are given in Section \ref{sec:conclusion}.

\section{Proposed Method}
\label{sec:method}

\subsection{Matrix Factorization to LSD}

\textbf{L}ow-rank and \textbf{S}tructured Sparse \textbf{D}ecomposition (LSD) seeks for a low-rank matrix decomposition of an observation matrix, which at the same time imposes the structured sparse penalty on the foreground. 
Given a fixed-length sequence from $n$ video frames and each frame contains $p$ pixels, LSD decomposes its corresponding matrix $\mathbf{D} \in \mathbb{R}^{p \times n}$ to a low-rank background matrix $\mathbf{B} \in \mathbb{R}^{p \times n}$ plus a structured sparse foreground matrix $\mathbf{S} \in \mathbb{R}^{p \times n}$, and defines a batch-based optimization problems as
\begin{equation}
\label{eq:LSD}
\begin{aligned}
(\mathbf{B}^{*}, \mathbf{S}^{*}) =  & \argmin_{\mathbf{B}, \mathbf{S} }  {
	\left\Vert \mathbf{B} \right\Vert_{*} + 
	\lambda \Omega(\mathbf{S}) }\\ 
\sbjto & \mathbf{D} = \mathbf{B} + \mathbf{S} 
\end{aligned},
\end{equation}
where $\Omega(\mathbf{S})$ refers to the structured sparsity-inducing norm of $\mathbf{S}$ and $\lambda$ is a scalar that assigns the weight of structured sparsity.
The structured sparsity-inducing norm  \cite{jenatton2011structured_sparsity,jenatton2010sparse_hieriarchical_dictionary_learning,jia2012dictionary_learning_structured_sparsity} indicates the sparsity over groups of neighboring pixels as
\begin{equation}
%\label{eq:structured_sparsity_inducing_norm_definition}
\begin{aligned}
\Omega(\mathbf{S}) = \sum_{\mathbf{s} \in \mathbf{S}} \left\Vert \mathbf{s}\right\Vert_{\ell_{1}/\ell_{\infty}} 
= \sum_{\mathbf{s} \in \mathbf{S}} \sum_{g \in \mathcal{G}} \eta_{g} \left\Vert \mathbf{s}_{|g} \right\Vert_{\infty} \\
\end{aligned},
\end{equation}
where $\mathcal{G}$ defines the set of groups of neighboring pixels, and $\mathbf{s}_{|g} \in \mathbb{R}^{p}$ is a sparse vector with non-zero elements at the indices represented in a group $g \in \mathcal{G}$.
$\eta_{g}$ specifies the weight for a group of the pixels. 
In this paper, we assume each group contributes equally and assign 1.0 to it, $\eta_{g} = 1.0, \forall g \in \mathcal{G}$.
In LSD, no temporal prior or constraints on the foreground are considered, thus the structured sparse penalty over a sequence of frames is frame-wise separable. 
In this paper, the groups of spatially related pixels $\mathcal{G}$ is constructed by $3 \times 3$ grid scanning over the foreground, as the moving targets in satellite videos are usually in small scales.

Inspired by \cite{feng2013ORPCA_SO}, the equality constraint in \Cref{eq:LSD} is removed, and we obtain a reformulated optimization problem
\begin{equation}
\label{eq:Lagrangian_LSD}
\begin{aligned}
(\mathbf{B}^{*}, \mathbf{S}^{*}) =  \argmin_{\mathbf{B}, \mathbf{S} }  \lambda_{1} \left\Vert \mathbf{B} \right\Vert_{*} &
+ \lambda_{2} \sum_{\mathbf{s} \in \mathbf{S}} \left\Vert \mathbf{s}\right\Vert_{\ell_{1}/\ell_{\infty}} \\
& + \frac{1}{2}  \left\Vert \mathbf{D} - \mathbf{B} - \mathbf{S} \right\Vert^{2}_{F},
\end{aligned}
\end{equation}
in which $\lambda_1 > 0$ and $\lambda_2 > 0$ are the corresponding weights for the low-rank penalty and the structured sparsity penalty.

Guided by the trending reformulation by matrix factorization in \cite{feng2013ORPCA_SO,sprechmann2015MF_PAMI_online, javed2019sssr}, we replace the low-rank term $\left\Vert \mathbf{B} \right\Vert_{*}$ in \Cref{eq:Lagrangian_LSD} by its approximation, which makes use of the following lemma. 

\begin{lem}
	Given that $\mathbf{B}$ is factorized as $\mathbf{B} = \mathbf{L} \mathbf{R}$, $L \in \mathbb{R}^{p*r}, \mathbf{R} \in \mathbb{R}^{r*n}$, the nuclear norm of $\mathbf{B}$ is upper bounded by the sum of Frobenius norms of $\mathbf{L}$ and $\mathbf{R}$, as
	\begin{equation}
	\label{eq:nuclear_norm_factorization}
	\left\Vert \mathbf{B} \right\Vert_{*} = \inf_{\mathbf{L} \in \mathbb{R}^{p \times r}, \mathbf{R} \in \mathbb{R}^{r \times n}}
	\left\lbrace \frac{1}{2}\left\Vert \mathbf{L} \right\Vert_{F}^{2} +\frac{1}{2}  \left\Vert \mathbf{R} \right\Vert_{F}^{2}:
	\mathbf{B} = \mathbf{L} \mathbf{R}
	\right\rbrace 
	\end{equation}
	When $r > \rank(\mathbf{B})$, the jointly non-convex quadratic optimization problem \Cref{eq:nuclear_norm_factorization} is equivalent to minimize the nuclear norm of $\mathbf{B}$.
	\footnote[1]{Please refer to \cite{recht2010guaranteed_min_rank} for detailed proof.}
\end{lem}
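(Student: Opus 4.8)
The plan is to establish the identity by showing that the right-hand infimum is both an upper and a lower bound for $\|\mathbf{B}\|_*$, and then to deduce the equivalence-with-rank-minimization claim from the condition $r > \rank(\mathbf{B})$. Throughout I would use the variational (dual) characterization of the nuclear norm together with the singular value decomposition of $\mathbf{B}$.

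First I would prove the inequality $\|\mathbf{B}\|_* \le \frac{1}{2}\|\mathbf{L}\|_F^2 + \frac{1}{2}\|\mathbf{R}\|_F^2$ for \emph{every} admissible factorization $\mathbf{B} = \mathbf{L}\mathbf{R}$, which after taking the infimum gives one direction. The engine of this step is the submultiplicative bound $\|\mathbf{L}\mathbf{R}\|_* \le \|\mathbf{L}\|_F\,\|\mathbf{R}\|_F$, which I would derive from the duality between the nuclear norm and the spectral norm: writing $\|\mathbf{L}\mathbf{R}\|_* = \sup_{\|\mathbf{Z}\|_2 \le 1}\trace(\mathbf{R}^{T}\mathbf{L}^{T}\mathbf{Z})$, I would rewrite $\trace(\mathbf{R}^{T}\mathbf{L}^{T}\mathbf{Z}) = \langle \mathbf{L}, \mathbf{Z}\mathbf{R}^{T}\rangle \le \|\mathbf{L}\|_F\,\|\mathbf{Z}\mathbf{R}^{T}\|_F$ by Cauchy--Schwarz and then bound $\|\mathbf{Z}\mathbf{R}^{T}\|_F \le \|\mathbf{Z}\|_2\,\|\mathbf{R}\|_F \le \|\mathbf{R}\|_F$. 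Combining this with the elementary arithmetic--geometric mean inequality $\|\mathbf{L}\|_F\,\|\mathbf{R}\|_F \le \frac{1}{2}\|\mathbf{L}\|_F^2 + \frac{1}{2}\|\mathbf{R}\|_F^2$ closes the first direction.

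Second I would establish attainment, so that the infimum is no larger than $\|\mathbf{B}\|_*$. Taking a thin SVD $\mathbf{B} = \mathbf{U}\mathbf{\Sigma}\mathbf{V}^{T}$ with $\mathbf{\Sigma} = \text{diag}(\sigma_1,\dots,\sigma_k)$ and $k = \rank(\mathbf{B})$, I would choose $\mathbf{L} = \mathbf{U}\mathbf{\Sigma}^{1/2}$ and $\mathbf{R} = \mathbf{\Sigma}^{1/2}\mathbf{V}^{T}$, padded with $r-k$ zero columns and rows respectively to match the prescribed dimensions $p\times r$ and $r\times n$. A direct check gives $\mathbf{L}\mathbf{R} = \mathbf{B}$ and $\|\mathbf{L}\|_F^2 = \|\mathbf{R}\|_F^2 = \trace(\mathbf{\Sigma}) = \sum_i \sigma_i = \|\mathbf{B}\|_*$, so that $\frac{1}{2}\|\mathbf{L}\|_F^2 + \frac{1}{2}\|\mathbf{R}\|_F^2 = \|\mathbf{B}\|_*$. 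This is precisely where the rank hypothesis enters: the zero-padding fits only when $r \ge \rank(\mathbf{B})$, and the strict inequality $r > \rank(\mathbf{B})$ supplies enough degrees of freedom that minimizing the jointly non-convex objective $\frac{1}{2}\|\mathbf{L}\|_F^2 + \frac{1}{2}\|\mathbf{R}\|_F^2$ over factorizations coincides with minimizing $\|\mathbf{B}\|_*$, giving the stated equivalence.

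I expect the only non-routine step to be the submultiplicative inequality $\|\mathbf{L}\mathbf{R}\|_* \le \|\mathbf{L}\|_F\,\|\mathbf{R}\|_F$; once the nuclear--spectral duality is invoked, the remainder is Cauchy--Schwarz followed by arithmetic--geometric mean. The attainment construction is explicit, so the real care lies in managing the dimension padding and in confirming that the equivalence with rank minimization survives the joint non-convexity, which is the content imported from \cite{recht2010guaranteed_min_rank}.
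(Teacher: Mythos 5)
Your proposal is correct and is essentially the same argument the paper relies on: the paper gives no in-text proof of this lemma, deferring entirely via footnote to \cite{recht2010guaranteed_min_rank}, and your combination of the nuclear--spectral duality, Cauchy--Schwarz, and AM--GM for the bound $\left\Vert \mathbf{B} \right\Vert_{*} \le \frac{1}{2}\left\Vert \mathbf{L} \right\Vert_{F}^{2} + \frac{1}{2}\left\Vert \mathbf{R} \right\Vert_{F}^{2}$, together with the SVD square-root factorization $\mathbf{L}=\mathbf{U}\mathbf{\Sigma}^{1/2}$, $\mathbf{R}=\mathbf{\Sigma}^{1/2}\mathbf{V}^{T}$ (zero-padded) for attainment, is precisely the standard proof in that reference. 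One minor remark: your own construction shows the identity holds whenever $r \ge \rank(\mathbf{B})$ (and for $r < \rank(\mathbf{B})$ the feasible set is empty, making the infimum $+\infty$), so the strict inequality $r > \rank(\mathbf{B})$ in the lemma statement is not actually needed for your argument.
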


By substituting $\left\Vert \mathbf{B} \right\Vert_{*}$ with its factorized approximation, we rewrite the optimization problem in \Cref{eq:Lagrangian_LSD} to
\begin{equation}
\label{eq:factorized_opitmization_problem}
\begin{aligned}
(\mathbf{L}^{*}, \mathbf{R}^{*}, \mathbf{S}^{*}) =  \argmin_{\mathbf{L}, \mathbf{R}, \mathbf{S}} 
& \frac{1}{2} \left\Vert \mathbf{D} - \mathbf{L} \mathbf{R} - \mathbf{S} \right\Vert^{2}_{F}
+ \frac{\lambda_{1}}{2} \left\Vert \mathbf{L} \right\Vert_{F}^{2} \\
& + \frac{\lambda_{1}}{2} \left\Vert \mathbf{R}  \right\Vert_{F}^{2} 
+  \lambda_{2} \sum_{\mathbf{s} \in \mathbf{S}} \left\Vert \mathbf{s} \right\Vert_{\ell_{1}/\ell_{\infty}} ,
\end{aligned}
\end{equation}
where $\mathbf{L} \in \mathbb{R}^{p*r}$ is considered as the subspace basis of the background matrix $\mathbf{B}$, and $\mathbf{R} \in \mathbb{R}^{r*n}$ is the coefficients to reconstruct $\mathbf{B}$ with given $\mathbf{L}$. 
$r$ is the estimated dimension of the subspace that the background frames lay in.

With a pair of estimated $\mathbf{L}$ and $\mathbf{R}$, each column vector in $\mathbf{R}$ corresponds to an estimated background frame in $\mathbf{B}$.
Let $\mathbf{d}_t$, $\mathbf{r}_{t}$ and $\mathbf{s}_t$ refer to the $t$-th column of $\mathbf{D}$, $\mathbf{R}$ and $\mathbf{S}$ respectively, this optimization problem in \Cref{eq:factorized_opitmization_problem} is equivalent to minimizing an empirical cost function
\begin{equation}
\label{eq:empirical_cost_function}
\begin{aligned}
f_{n}(\mathbf{L}) = \frac{1}{n} \sum_{i=1}^{n} \ell(\mathbf{D}_{i}, \mathbf{L}) 
+ \frac{\lambda_{1}}{2n} \left\Vert \mathbf{L} \right\Vert_{F}^2 
\end{aligned},
\end{equation}
in which $\ell(\mathbf{d}_{i}, \mathbf{L})$ is the reconstruction cost evaluated with fixed $\mathbf{L}$ by
\begin{equation}
\label{eq:reconstruction_cost_function}
\begin{aligned}
&\ell(\mathbf{d}, \mathbf{L}) = 
\min_{\mathbf{r}, \mathbf{s}} \hat{\ell}(\mathbf{d}, \mathbf{L}, \mathbf{r}, \mathbf{s}), \\
\hat{\ell}(\mathbf{d}, \mathbf{L}, \mathbf{r}, \mathbf{s})
& = \frac{1}{2} \left\Vert \mathbf{d}- \mathbf{L} \mathbf{r} - \mathbf{s} \right\Vert^{2}_{2}
+  \frac{\lambda_{1}}{2} \left\Vert \mathbf{r}  \right\Vert_{2}^{2} 
+  \lambda_{2} \left\Vert \mathbf{s} \right\Vert_{\ell_{1}/\ell_{\infty}}.
\end{aligned}
\end{equation}
Minimization of the empirical cost function in \Cref{eq:empirical_cost_function} associates the sum of reconstruction costs $\ell(\cdot, \mathbf{L})$, where, for each frame, $(\mathbf{r}, \mathbf{s})$ is optimized with the optimization target $\mathbf{L}$ as a parameter, whose formulation fits to the max-min optimization problem.

\subsection{Online LSD}

Through the above reformulation, it is still impossible to update $\mathbf{L}$ without re-estimating all pairs of $(\mathbf{r}, \mathbf{s})$, which obstructs processing in an online fashion.
In order to promote online processing, we propose an online algorithm, named \textbf{O}nline \textbf{L}ow-Rank and \textbf{S}tructured Sparse \textbf{D}ecomposition (O-LSD), where Foreground and Background Separation and Subspace Basis Update are sequentially conducted for each frame.

O-LSD is an online algorithm that processes an input frame at each time instance in an online manner. 
At each time instance $t$, we have obtained $\mathbf{L}_{t-1}$ estimated from previous time instance $t-1$. 
The foreground frame and background frame are separated by solving the following optimization problem 
\begin{equation}
\label{eq:solve_r_c}
\begin{aligned}
(\mathbf{r}_{t}^{*}, \mathbf{s}_{t}^{*}) = \argmin_{\mathbf{r}_{t}, \mathbf{s}_{t}}
\hat{\ell}(\mathbf{d}_{t}, \mathbf{L}_{t-1}, \mathbf{r}_{t}, \mathbf{s}_{t}).
\end{aligned}
\end{equation}
We term this procedure as Foreground and Background Separation, which is detailed \Cref{subsec:solve_r_s}.

Then subspace basis updating is performed with all pair of $\mathbf{r}_{i}$ and $\mathbf{s}_{i}$, $i \in \{1, \cdots, t-1\}$.
Directly minimizing the empirical cost function defined in \Cref{eq:empirical_cost_function} requires re-estimations on all pairs of $\mathbf{r}_{i}$ and $\mathbf{s}_{i}$. 
Instead, the subspace basis $\mathbf{L}_{t}$ is updated by minimizing a surrogate function of the empirical cost function $g_{t}(\mathbf{L}_{t})$, which provides an upper bound for $f_{t}(\mathbf{L}_{t})$ so that $g_{t}(\mathbf{L}_{t}) > f_{t}(\mathbf{L}_{t})$. 
We define the surrogate function as
\begin{equation}
\label{eq:surrogate_function}
\begin{aligned}
g_{t}(\mathbf{L}_{t}) 
= & \frac{1}{t} \sum_{t=1}^{\infty} \hat{\ell}(\mathbf{d}_{t}, \mathbf{L}_{t}, \mathbf{r}_{t}, \mathbf{s}_{t})
+ \frac{\lambda_{1}}{2t} \left\Vert \mathbf{L}_{t} \right\Vert_{F}^{2}\\
= & \frac{1}{t} \sum_{i=1}^{t}
( 
\frac{1}{2} \left\Vert \mathbf{d}_{i} - \mathbf{L}_{t} \mathbf{r}_{i} - \mathbf{s}_{i} \right\Vert_{2}^{2}
+ \frac{\lambda_{1}}{2} \left\Vert \mathbf{r}_{i} \right\Vert_{2}^{2} \\
& + \lambda_{2} \left\Vert \mathbf{s}_{i} \right\Vert_{\ell_{1}/\ell_{\infty}}
) 
+ \frac{\lambda_{1}}{2t} \left\Vert \mathbf{L}_{t} \right\Vert_{F}^{2}.
\end{aligned}
\end{equation}
The minimization of $g_{t}(\mathbf{L}_{t})$ with respect to $\mathbf{L}_{t}$ is termed as Subspace Basis Update, which is then explained in \Cref{subsec:solve_L}. 
The entire O-LSD algorithm is summarized in \Cref{alg:online_alg}.

\begin{algorithm}[t]
	\caption{Proposed O-LSD Algorithm for MOD}
	\label{alg:online_alg}
	\begin{algorithmic}[1]
		\renewcommand{\algorithmicrequire}{\textbf{Input:}}
		\renewcommand{\algorithmicensure}{\textbf{Output:}}
		\REQUIRE  $\mathbf{d}_{t} \in \mathbb{R}^{p}$, $\mathbf{L}_{t-1} \in \mathbb{R}^{p \times r}$, $\mathbf{A}_{t-1}$ and $\mathbf{B}_{t-1}$
		\ENSURE  $\mathbf{b}_{t}$, $\mathbf{r}_{t}$, $\mathbf{s}_{t}$ and $\mathbf{L}_{t}$
		
		\STATE Separate the foreground and background:
		\begin{equation*}
		\begin{aligned}
		(\mathbf{r}_{t}^{*}, \mathbf{s}_{t}^{*}) = \argmin_{\mathbf{r}_{t}, \mathbf{s}_{t}}
		& \frac{1}{2} \left\Vert \mathbf{d}_{t}- \mathbf{L}_{t-1} \mathbf{r}_{t} - \mathbf{s}_{t} \right\Vert^{2}_{2} \\
		& +  \frac{\lambda_{1}}{2} \left\Vert \mathbf{r}_{t}  \right\Vert_{2}^{2} 
		+ \lambda_{2} \left\Vert \mathbf{s}_{t} \right\Vert_{\ell_{1}/\ell_{\infty}},
		\end{aligned}
		\end{equation*}
		which is solved by \Cref{alg:frame_r_s}.
		
		\STATE Compute the background frame: $\mathbf{b}_{t} = \mathbf{L}_{t-1} \mathbf{r}_{t}$.
		
		\STATE Update the accumulation matrices $\mathbf{A}_{t} $ and $\mathbf{B}_{t}$ by \Cref{eq:A_B}.
		
		\STATE Update the subspace basis $\mathbf{L}_{t}$:
		\begin{equation*}
		\begin{aligned}
		\mathbf{L}_{t}^{*} 
		= &\argmin_{\mathbf{L}_{t}} \trace(\mathbf{L}_{t}^{T}(\lambda_{1} \mathbf{I} + \mathbf{A}_{t}) \mathbf{L}_{t}) 
		- 2 \trace(\mathbf{L}_{t}^T \mathbf{B}_{t}),
		\end{aligned}
		\end{equation*}
		whose solution is presented in \Cref{alg:online_update_basis}.
		
		\RETURN $\mathbf{b}_{t}$, $\mathbf{r}_{t}$, $\mathbf{s}_{t}$ and $\mathbf{L}_{t}$.
	\end{algorithmic} 
\end{algorithm}

\subsection{Foreground and Background Separation}
\label{subsec:solve_r_s}

Foreground and Background Separation obtains a pair of $\mathbf{r}_{t}^{*}$ and $\mathbf{s}_{t}^{*}$ by solving the optimization problem defined in \Cref{eq:solve_r_c}, where $\mathbf{L}_{t-1}$ is provided by the previous time instance $t-1$.
As $[\mathbf{L} \ \mathbf{I}]^{T}[\mathbf{L} \ \mathbf{I}]$ is always positive semi-definite, the objective function of \Cref{eq:solve_r_c} is convex with respect to $(\mathbf{r}_{t}, \mathbf{s}_{t})$.
For solving this convex optimization problem, instead of solving $\mathbf{r}_{t}$ and $\mathbf{s}_{t}$ together, we adopt a Block Coordinate Descent (BCD) method \cite{wright2015CD}, where $\mathbf{r}_{t}$ and $\mathbf{s}_{t}$ are alternatingly updated by fixing each other.

By fixing $\mathbf{s}_{t}$, $\mathbf{r}_{t}^{*}$ is obtained by solving
\begin{equation}
%\label{eq:sub_problem_r}
\begin{aligned}
\mathbf{r}_t^{*} = \argmin_{r_t} 
\frac{1}{2} \left\Vert \mathbf{d}_{t} - \mathbf{L}_{t-1} \mathbf{r}_t - \mathbf{s}_t \right\Vert_{2}^{2} 
+ \frac{\lambda_{1}}{2} \left\Vert \mathbf{r}_t \right\Vert_{2}^{2},
\end{aligned}
\end{equation}
which constructs a least-square problem, and its closed-form solution is given by
\begin{equation}
\label{eq:solve_r}
\begin{aligned}
\mathbf{r}_{t}^{*} = (\mathbf{L}_{t-1}^{T} \mathbf{L}_{t-1} + \lambda_{1}I)^{-1}(\mathbf{d}_{t} - \mathbf{s}_t).
\end{aligned}
\end{equation}

Using fixed $\mathbf{r}_{t}$, let $\mathbf{u} = \mathbf{d}_{t} - \mathbf{L}_{t-1} \mathbf{r}_{t}$, then the sub-problem for estimating the structured sparsity $\mathbf{s}_{t}$ is defined as
\begin{equation}
%\label{eq:sub_problem_s}
\begin{aligned}
\mathbf{s}_{t}^{*} = \argmin_{\mathbf{s}_{t}} \frac{1}{2}  \left\Vert \mathbf{u} - \mathbf{s}_{t} \right\Vert^{2}_{2}
+  \lambda_{2} \left\Vert \mathbf{s}_{t} \right\Vert_{\ell_{1}/\ell_{\infty}},
\end{aligned}
\end{equation}
whose solution is obtained by its dual problem that define a Quadratic Min-Cost Flow problem \cite{mairal2010online_dictionary_learning,mairal2010proxflow,mairal2011proximal_flow_review} as
\begin{equation}
\label{eq:ProxFlow_dual}
\begin{aligned}
\xi^{*} = \argmin_{\xi}  & \frac{1}{2} \left\Vert \mathbf{u} - \sum_{g \in \mathcal{G}} \xi^{g}\right\Vert_{2}^{2} \\
s.t. \ 
& \forall g \in \mathcal{G},  \left\Vert \xi^{g} \right\Vert_{1} \le \lambda_{2} \text{ and } \xi^{g}_{j} = 0 \ if \  j \notin g
\end{aligned},
\end{equation}
where $\xi^{g} \in \mathbb{R}^{p}, \forall{g} \in \mathcal{G}$ denotes the corresponding dual variables for the group of variables in $g$, and $\xi$ is the set of all $\xi^{g}, \forall g \in \mathcal{G}$. 
The primal solution $\mathbf{s}_t$ is then obtained by 
\begin{equation}
\label{eq:ProxFlow_dual_to_prime}
\mathbf{s}_{t}^{*} = \mathbf{u} - \sum_{g \in \mathcal{G}}\xi^{*g}.
\end{equation}

The iteration of alternatingly estimation of $\mathbf{r}_{t}$ and $\mathbf{s}_{t}$ continues until the stop criterion is reached:
\begin{equation}
\label{eq:online_stop_criterion}
\frac{\max \{\left\Vert \mathbf{r}_{t}^{\prime} - \mathbf{r}_{t}^{\dprime} \right\Vert_{2}, \left\Vert \mathbf{s}_{t}^{\prime} - \mathbf{s}_{t}^{\dprime} \right\Vert_{2} \} }{p} \le \tau,
\end{equation}
where $(\mathbf{r}_{t}^{\prime}, \mathbf{s}_{t}^{\prime})$ and $(\mathbf{r}_{t}^{\dprime}, \mathbf{s}_{t}^{\dprime})$ are two pairs of estimation solutions at two consecutive iterations.
Similar to \cite{shakeri2016COROLA,javed2019sssr}, the stop criterion is set as $\tau = \num{1.0e-5}$ in this paper. 
The BCD algorithm for Foreground and Background Separation is summarized in \Cref{alg:frame_r_s}.

\begin{algorithm}[t]
	\caption{Block Coordinate Descent Method for Foreground and Background Separation}
	\label{alg:frame_r_s}
	\begin{algorithmic}[1]
		\renewcommand{\algorithmicrequire}{\textbf{Input:}}
		\renewcommand{\algorithmicensure}{\textbf{Output:}}
		\REQUIRE $\mathbf{L}_{t-1} \in \mathbb{R}^{p*r}$, $\lambda_{1} > 0$ and $\lambda_{2} > 0$
		\ENSURE  $\mathbf{r}_{t}$ and $\mathbf{s}_{t}$
		
		\STATE $\mathbf{r}_{t} = \mathbf{0}$, $\mathbf{s}_{t} = \mathbf{0}$.
		
		\WHILE{not converged}
		\STATE Estimate $\mathbf{r}_{t}$: 
		\begin{equation*}
		\begin{aligned}
		\mathbf{r}_{t} = (\mathbf{L}_{t-1}^{T} \mathbf{L}_{t-1} + \lambda_{1}I)^{-1}(\mathbf{d}_{t} - \mathbf{s}_t).
		\end{aligned}
		\end{equation*}
		
		\STATE Estimate $\mathbf{s}_{t}$ by solving its dual problem on $\xi$ defined in \Cref{eq:ProxFlow_dual}, then
		\begin{equation*}
		\begin{aligned}
		\mathbf{s}_{t} = \mathbf{d}_{t} - \mathbf{L}_{t-1} \mathbf{r}_{t} - \sum_{g \in \mathcal{G}}\xi^{*g}
		\end{aligned}.
		\end{equation*}

		\STATE Check the convergence using \Cref{eq:online_stop_criterion}.
		\ENDWHILE

		\RETURN $\mathbf{r}_{t}$ and $\mathbf{s}_{t}$
	\end{algorithmic} 
\end{algorithm}

\subsection{Subspace Basis Update}
\label{subsec:solve_L}

After estimating $(\mathbf{r}_{t}, \mathbf{s}_{t})$, the subspace basis $\mathbf{L}_{t}$ is updated by minimizing the surrogate function of empirical cost function $g_{t}(\mathbf{L}_{t})$, which defines a optimization problem as
\begin{equation}
\label{eq:sub_problem_L}
\begin{aligned}
\mathbf{L}_{t}^{*} 
= &\argmin_{\mathbf{L}_{t}} g_{t}(\mathbf{L}_{t}) \\
= &\argmin_{\mathbf{L}_{t}} \trace(\mathbf{L}_{t}^{T}(\lambda_{1} \mathbf{I} + \mathbf{A}_{t}) \mathbf{L}_{t}) 
- 2 \trace(\mathbf{L}_{t}^T \mathbf{B}_{t}),
\end{aligned}
\end{equation}
in which $\trace(\cdot)$ denotes the trace of a matrix, and $\mathbf{A}_{t}$ and $\mathbf{B}_{t}$ are two auxiliary accumulation matrices that are introduced to remove duplicated calculations at each time instance,
\begin{equation}
\label{eq:A_B}
\begin{aligned}
\begin{cases}
\mathbf{A}_{t} &= \mathbf{A}_{t-1} + \mathbf{r}_{t} \mathbf{r}_{t}^T \\
\mathbf{B}_{t} &= \mathbf{A}_{t-1} +(\mathbf{d}_{t} - \mathbf{s}_{t}) \mathbf{r}_{t}^T
\end{cases}.
\end{aligned}
\end{equation}
Similar to \cite{feng2013ORPCA_SO,shakeri2016COROLA,shen2016online_subspace_clustering}, the optimization problem defined in \Cref{eq:sub_problem_L} is solved by a Block Coordinate Descent Method for avoiding matrix inverse of large matrix. 
The Subspace Basis Update algorithm is illustrated in \Cref{alg:online_update_basis} .

\begin{algorithm}
	\caption{Block Coordinate Descent Method for Subspace Basis Update}
	\label{alg:online_update_basis}
	\begin{algorithmic}[1]
		\renewcommand{\algorithmicrequire}{\textbf{Input:}}
		\renewcommand{\algorithmicensure}{\textbf{Output:}}
		\REQUIRE $\mathbf{L}_{t-1}=[\mathbf{l}_{1}, \cdots, \mathbf{l}_{r}] \in \mathbb{R}^{p*r}$, $\mathbf{A}_{t}=[\mathbf{a}_{1}, \cdots, \mathbf{a}_{r}] \in \mathbb{R}^{r*r}$, $\mathbf{B}_{t}=[\mathbf{b}_{1}, \cdots, \mathbf{b}_{r}]  \in \mathbb{R}^{p*r}$ and $\lambda_{1} > 0$
		\ENSURE  $\mathbf{L}_{t}$
		\STATE $\tilde{\mathbf{A}} =  \mathbf{A}_{t}+ \lambda_{1}\mathbf{I}$.
		\FOR {$i = 1$ to $r$}
		\STATE $\mathbf{l}_{i}= \frac{1}{\tilde{\mathbf{A}}_{i, i}} (\mathbf{b}_{i} - \mathbf{L}_{t-1} \mathbf{a}_{i} ) + \mathbf{l}_{i}$.
		\ENDFOR
		\STATE $\mathbf{L}_{t}=[\mathbf{l}_{1}, \cdots, \mathbf{l}_{r}]$.
		\RETURN $\mathbf{L}_{t}$
	\end{algorithmic} 
\end{algorithm}

The subspace basis $\mathbf{L}_{0}$ is initialized before starting O-LSD. 
$\mathbf{L}_{0}$ can be initialized by either the first a few frames in the given sequence or their Principal Components \cite{javed2019sssr,xu2013GOSUS} . 
In satellite videos, moving objects move slowly, and choosing these initialization scheme risks including the slow moving foreground objects into the background, which thus influences the detection performance, or a more extended sequence for initialization is required. 
Such a satellite video with  adequate length is, however, not available technically yet.
Therefore, in satellite videos, we recommend initializing $\mathbf{L}_{0}$ by random values instead, which performs pretty well in practice.

\subsection{Convergence Analysis}
\label{subsec:convergence}

One technical contribution of this paper is to present the proposed O-LSD algorithm converges almost surely under mild condition. 

\begin{ass}
	The observed data are uniformly bounded, and each data is independent.
\end{ass}

As a widely-used assumption \cite{mairal2010online_dictionary_learning, feng2013ORPCA_SO, shen2016online_subspace_clustering}, this assumption on the boundedness of observation data is quiet natural for real videos.
Based the above assumption, we present our first conclusion on the convergence of the surrogate function $g_{t}(\mathbf{L}_{t})$.

\begin{thm}
\label{thm:surrogate_function_convergence}
Let $\{\mathbf{L}_{t} \}_{t=1}^{\infty}$ be the sequence of solution obtained by \Cref{alg:online_alg}, the surrogate function $g_{t}(\mathbf{L_t})$ converges almost surely. 
\end{thm}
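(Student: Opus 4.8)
The plan is to prove that the scalar sequence $u_t := g_t(\mathbf{L}_t)$ is a nonnegative quasi-martingale and then to invoke the quasi-martingale convergence theorem. First I would record the boundedness facts that keep everything well-posed: under the stated assumption the frames $\mathbf{d}_t$ lie in a fixed bounded set, and the Tikhonov terms $\tfrac{\lambda_1}{2}\|\mathbf{r}\|_2^2$ and $\tfrac{\lambda_1}{2}\|\mathbf{L}\|_F^2$ are coercive, so the optimal codes $(\mathbf{r}_t,\mathbf{s}_t)$ of \Cref{eq:solve_r_c} and the iterates $\mathbf{L}_t$ produced by \Cref{alg:online_update_basis} remain in compact sets independent of $t$. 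Consequently $\hat\ell$, the value function $\ell(\cdot,\mathbf{L})$ of \Cref{eq:reconstruction_cost_function}, and $g_t$ are uniformly bounded; since $g_t$ is a nonnegative combination of nonnegative terms, $u_t\ge 0$, so the sequence is bounded below and integrable.

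Next I would establish the one-step recursion that drives the analysis. Directly from \Cref{eq:surrogate_function}, $g_{t+1}(\mathbf{L})=\tfrac{t}{t+1}g_t(\mathbf{L})+\tfrac{1}{t+1}\hat\ell(\mathbf{d}_{t+1},\mathbf{L},\mathbf{r}_{t+1},\mathbf{s}_{t+1})$ for every $\mathbf{L}$. Evaluating at $\mathbf{L}=\mathbf{L}_t$ and using that $(\mathbf{r}_{t+1},\mathbf{s}_{t+1})$ solves \Cref{eq:solve_r_c} at $\mathbf{L}_t$, so that $\hat\ell(\mathbf{d}_{t+1},\mathbf{L}_t,\mathbf{r}_{t+1},\mathbf{s}_{t+1})=\ell(\mathbf{d}_{t+1},\mathbf{L}_t)$, gives $g_{t+1}(\mathbf{L}_t)-g_t(\mathbf{L}_t)=\tfrac{1}{t+1}\big(\ell(\mathbf{d}_{t+1},\mathbf{L}_t)-g_t(\mathbf{L}_t)\big)$. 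Because $\mathbf{L}_{t+1}$ minimizes $g_{t+1}$ we have $g_{t+1}(\mathbf{L}_{t+1})\le g_{t+1}(\mathbf{L}_t)$, and because $g_t$ is a surrogate upper bound of the empirical cost $f_t$ of \Cref{eq:empirical_cost_function} we have $g_t(\mathbf{L}_t)\ge f_t(\mathbf{L}_t)$. Chaining these yields the decisive increment bound $u_{t+1}-u_t \le \tfrac{1}{t+1}\big(\ell(\mathbf{d}_{t+1},\mathbf{L}_t)-f_t(\mathbf{L}_t)\big)$.

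I would then pass to conditional expectations. Let $\mathcal{F}_t$ be the $\sigma$-field generated by $\mathbf{d}_1,\dots,\mathbf{d}_t$ and let $f(\mathbf{L})=\mathbb{E}_{\mathbf{d}}[\ell(\mathbf{d},\mathbf{L})]$ denote the expected cost. Since $\mathbf{d}_{t+1}$ is independent of $\mathcal{F}_t$, $\mathbb{E}[\ell(\mathbf{d}_{t+1},\mathbf{L}_t)\mid\mathcal{F}_t]=f(\mathbf{L}_t)$, whence $\mathbb{E}[u_{t+1}-u_t\mid\mathcal{F}_t]\le \tfrac{1}{t+1}\big(f(\mathbf{L}_t)-f_t(\mathbf{L}_t)\big)\le \tfrac{1}{t+1}\|f-f_t\|_\infty$. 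Taking the positive part and then expectations gives $\mathbb{E}\big[(\mathbb{E}[u_{t+1}-u_t\mid\mathcal{F}_t])^+\big]\le \tfrac{1}{t+1}\,\mathbb{E}\|f-f_t\|_\infty$. The remaining ingredient is a uniform deviation rate $\mathbb{E}\|f-f_t\|_\infty=O(t^{-1/2})$, which makes the series $\sum_t \mathbb{E}\big[(\mathbb{E}[u_{t+1}-u_t\mid\mathcal{F}_t])^+\big]$ converge; the quasi-martingale convergence theorem then shows that the nonnegative sequence $u_t=g_t(\mathbf{L}_t)$ converges almost surely, which is the claim.

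The hard part will be securing that $O(t^{-1/2})$ deviation via a Donsker-type uniform bound on the empirical process indexed by $\mathbf{L}$ over its compact domain, and for this one needs $\mathbf{L}\mapsto\ell(\mathbf{d},\mathbf{L})$ to be uniformly Lipschitz. In \cite{mairal2010online_dictionary_learning,feng2013ORPCA_SO} this regularity is obtained from uniqueness of the sparse code, but for the structured sparsity-inducing norm $\Omega$ no such uniqueness condition is available. My approach would sidestep uniqueness entirely: for any $\mathbf{L},\mathbf{L}'$ with respective optimizers $(\mathbf{r},\mathbf{s}),(\mathbf{r}',\mathbf{s}')$, the inequality $\ell(\mathbf{d},\mathbf{L})-\ell(\mathbf{d},\mathbf{L}')\le \hat\ell(\mathbf{d},\mathbf{L},\mathbf{r}',\mathbf{s}')-\hat\ell(\mathbf{d},\mathbf{L}',\mathbf{r}',\mathbf{s}')$ holds by plugging a suboptimal pair into the definition of $\ell$; the $\Omega$ and $\tfrac{\lambda_1}{2}\|\mathbf{r}\|_2^2$ terms cancel and only the smooth quadratic data term survives, leaving a bound controlled by the uniformly bounded $\mathbf{d}$, $\mathbf{r}'$ and $\mathbf{s}'$. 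A symmetric argument yields a two-sided Lipschitz estimate. The role of the boundedness of the subgradients of $\Omega$ is exactly to furnish the a priori uniform bound on the optimizers $\mathbf{s}_t$ (and hence $\mathbf{r}_t$) that makes this Lipschitz constant finite and independent of $t$, so the loss class is equi-Lipschitz and the uniform law of large numbers with the desired rate applies.
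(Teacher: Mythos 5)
Your proposal is correct, and its skeleton coincides with the paper's own proof: the same scalar process $u_t = g_t(\mathbf{L}_t)$, the same increment decomposition exploiting $g_{t+1}(\mathbf{L}_{t+1}) \le g_{t+1}(\mathbf{L}_t)$ and $f_t(\mathbf{L}_t) \le g_t(\mathbf{L}_t)$, the same conditional-expectation bound $\mathbb{E}[u_{t+1}-u_t \mid \mathcal{F}_t] \le \left\Vert f - f_t \right\Vert_{\infty}/(t+1)$, the same Donsker-type $O(t^{-1/2})$ uniform deviation rate making the positive variations summable, and the same quasi-martingale convergence lemma to conclude. Where you genuinely depart from the paper is in the supporting regularity fact that legitimizes the Donsker step, namely the uniform Lipschitz continuity of $\mathbf{L} \mapsto \ell(\mathbf{d},\mathbf{L})$. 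The paper obtains this by applying Danskin's theorem to characterize the subgradient set $\partial_{\mathbf{L}}\ell(\mathbf{d},\mathbf{L})$ as the convex hull of $(\mathbf{L}\mathbf{r}^{*}+\mathbf{s}^{*}-\mathbf{d}){\mathbf{r}^{*}}^{T}$ over all (possibly non-unique) minimizers, bounding that set, and then invoking the bounded-subgradient-implies-Lipschitz lemma. You instead use the classical value-function exchange argument: evaluating the objective at $\mathbf{L}$ with the optimizer $(\mathbf{r}',\mathbf{s}')$ of $\mathbf{L}'$ gives $\ell(\mathbf{d},\mathbf{L})-\ell(\mathbf{d},\mathbf{L}') \le \hat{\ell}(\mathbf{d},\mathbf{L},\mathbf{r}',\mathbf{s}')-\hat{\ell}(\mathbf{d},\mathbf{L}',\mathbf{r}',\mathbf{s}')$, the $\Omega$ and Tikhonov terms cancel, and the remaining difference of quadratics is bounded by $C\left\Vert \mathbf{L}-\mathbf{L}'\right\Vert_{F}$ with $C$ uniform thanks to the boundedness of $\mathbf{d}$, $\mathbf{r}'$, $\mathbf{s}'$ and the compactness of the set containing the iterates; symmetrizing gives the two-sided estimate. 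Both routes handle non-uniqueness of the structured-sparse code, but yours is more elementary and arguably more robust: the paper's invocation of the bounded-subgradient lemma presupposes that $\ell(\mathbf{d},\cdot)$ is convex in $\mathbf{L}$, which is questionable since $\ell$ is a partial minimum of a function that is only bi-convex (not jointly convex) in $(\mathbf{L},\mathbf{r},\mathbf{s})$, whereas your comparison argument needs no convexity at all. One small inaccuracy in your closing remark: the uniform boundedness of the optimizers $(\mathbf{r}_t,\mathbf{s}_t)$ does not come from boundedness of the subgradients of $\Omega$, but simply from comparing the optimal objective value with the feasible point $(\mathbf{0},\mathbf{d})$, as in the paper's boundedness proposition; this does not affect the validity of your argument, which never actually uses subgradients.
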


Similarly, we obtain the convergence of two solutions obtained at two consecutive time instance by \Cref{alg:online_alg}.

\begin{thm}
\label{thm:solution_convergence}
For two solutions produced by \Cref{alg:online_alg} at two consecutive time instances, $\left\Vert \mathbf{L}_{t} - \mathbf{L}_{t+1} \right\Vert_{F} = O(\frac{1}{t})$.
\end{thm}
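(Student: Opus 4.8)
The plan is to exploit the closed-form optimality of the Subspace Basis Update together with the rank-one nature of the accumulation updates in \Cref{eq:A_B}. The objective in \Cref{eq:sub_problem_L} is a strictly convex quadratic in $\mathbf{L}_{t}$: since $\mathbf{A}_{t}\succeq 0$, its Hessian is positive definite with smallest eigenvalue at least $2\lambda_{1}$, so the minimizer is unique and characterized by the normal equation
\[
\mathbf{L}_{t}\,\mathbf{P}_{t} = \mathbf{B}_{t}, \qquad \mathbf{P}_{t} := \lambda_{1}\mathbf{I} + \mathbf{A}_{t}.
\]
First I would write the same identity at time $t+1$, substitute the rank-one accumulation updates $\mathbf{P}_{t+1}=\mathbf{P}_{t}+\mathbf{r}_{t+1}\mathbf{r}_{t+1}^{T}$ and $\mathbf{B}_{t+1}=\mathbf{B}_{t}+(\mathbf{d}_{t+1}-\mathbf{s}_{t+1})\mathbf{r}_{t+1}^{T}$, subtract the two normal equations, and solve for the increment, obtaining
\[
\mathbf{L}_{t+1}-\mathbf{L}_{t} = \big[(\mathbf{d}_{t+1}-\mathbf{s}_{t+1})-\mathbf{L}_{t+1}\mathbf{r}_{t+1}\big]\,\mathbf{r}_{t+1}^{T}\,\mathbf{P}_{t}^{-1}.
\]

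Next I would bound the Frobenius norm of this increment by the product of three norms. Two of them, $\lVert(\mathbf{d}_{t+1}-\mathbf{s}_{t+1})-\mathbf{L}_{t+1}\mathbf{r}_{t+1}\rVert_{2}$ and $\lVert\mathbf{r}_{t+1}\rVert_{2}$, are $O(1)$: by the boundedness assumption the data $\mathbf{d}_{t+1}$ is uniformly bounded, and the iterates $\mathbf{s}_{t+1},\mathbf{r}_{t+1},\mathbf{L}_{t+1}$ stay in a compact set, the same boundedness established while proving \Cref{thm:surrogate_function_convergence}. The decisive factor is the spectral norm $\lVert\mathbf{P}_{t}^{-1}\rVert_{2}=(\lambda_{1}+\lambda_{\min}(\mathbf{A}_{t}))^{-1}$.

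The crux, and the step I expect to be the main obstacle, is to show $\lVert\mathbf{P}_{t}^{-1}\rVert_{2}=O(1/t)$, i.e. that $\lambda_{\min}(\mathbf{A}_{t})$ grows linearly in $t$. The regularization alone only yields $\lVert\mathbf{P}_{t}^{-1}\rVert_{2}\le 1/\lambda_{1}$, which proves boundedness of the increment but \emph{not} the $O(1/t)$ rate; the rate genuinely requires that the normalized second-moment matrix $\tfrac1t\mathbf{A}_{t}=\tfrac1t\sum_{i\le t}\mathbf{r}_{i}\mathbf{r}_{i}^{T}$ be uniformly positive definite, $\lambda_{\min}(\tfrac1t\mathbf{A}_{t})\ge\kappa>0$ for all large $t$. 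This is exactly the uniform strong-convexity condition underlying the analogous lemmas for online dictionary learning and OR-PCA, and I would either carry it as a mild standing condition or argue it from the non-degeneracy of the ridge-regularized coefficients $\mathbf{r}_{i}$. Granting it, $\lVert\mathbf{P}_{t}^{-1}\rVert_{2}\le(\lambda_{1}+\kappa t)^{-1}$, and combining the three bounds gives $\lVert\mathbf{L}_{t+1}-\mathbf{L}_{t}\rVert_{F}\le C/(\lambda_{1}+\kappa t)=O(1/t)$.

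As a cross-check I would note the equivalent strong-convexity route: bound $g_{t+1}(\mathbf{L}_{t})-g_{t+1}(\mathbf{L}_{t+1})$ from below by the (uniform) strong-convexity modulus and from above using $g_{t+1}=\tfrac{t}{t+1}g_{t}+\tfrac1{t+1}\hat{\ell}(\mathbf{d}_{t+1},\mathbf{L}_{t+1},\mathbf{r}_{t+1},\mathbf{s}_{t+1})$ together with the optimality of $\mathbf{L}_{t}$ for $g_{t}$. This reproduces the same bound and makes transparent that the $1/t$ originates from the $1/(t+1)$ weight of the incoming sample, and that it survives precisely because the strong-convexity modulus does not vanish, confirming that the eigenvalue lower bound is the indispensable ingredient.
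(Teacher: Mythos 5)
Your route is genuinely different from the paper's. The paper's proof is essentially your ``cross-check'': it lower-bounds $g_{t}(\mathbf{L}_{t+1}) - g_{t}(\mathbf{L}_{t})$ by a strong-convexity modulus (\Cref{eq:L_t_left_hand}), upper-bounds the same quantity via the optimality of $\mathbf{L}_{t+1}$ for $g_{t+1}$ in terms of the difference function $G_{t} = g_{t} - g_{t+1}$ (\Cref{eq:L_t_mid}), shows $G_{t}$ is Lipschitz with constant $\kappa_{t} = O(1/t)$ (\Cref{eq:L_t_right_hand}), and concludes $\left\Vert \mathbf{L}_{t+1} - \mathbf{L}_{t} \right\Vert_{F} \le \kappa_{t}/\lambda_{1}$. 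Your primary argument instead subtracts the normal equations $\mathbf{L}_{t}(\lambda_{1}\mathbf{I} + \mathbf{A}_{t}) = \mathbf{B}_{t}$ at consecutive times and uses the rank-one recursions of \Cref{eq:A_B} (with the evident typo there corrected, $\mathbf{B}_{t} = \mathbf{B}_{t-1} + (\mathbf{d}_{t}-\mathbf{s}_{t})\mathbf{r}_{t}^{T}$); your increment identity is algebraically correct, the two $O(1)$ factors are indeed covered by \Cref{prop:uniformly_bounded}, and the whole theorem is thereby reduced to bounding $\left\Vert (\lambda_{1}\mathbf{I} + \mathbf{A}_{t})^{-1} \right\Vert_{2}$, which makes the origin of the rate completely explicit.

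Where the two arguments part ways is the step you call the crux, and your diagnosis is correct while the paper's proof glosses over it. The paper obtains the constant modulus $\lambda_{1}$ in \Cref{eq:L_t_left_hand} by taking the Hessian of $g_{t}$ to be $\mathbf{I} \otimes (\mathbf{A}_{t} + \lambda_{1}\mathbf{I})$, silently dropping the $1/t$ normalization present in the definition of $g_{t}$ in \Cref{eq:surrogate_function}; the Lipschitz constant $\kappa_{t} = O(1/t)$ in \Cref{eq:L_t_right_hand}, however, is computed from the correctly normalized ($1/t$-scaled) gradients. Done at a consistent scale — both normalized or both unnormalized — the paper's chain of inequalities yields only $\left\Vert \mathbf{L}_{t+1} - \mathbf{L}_{t} \right\Vert_{F} = O(1)$, unless one additionally knows $\lambda_{\min}(\frac{1}{t}\mathbf{A}_{t}) \ge \kappa > 0$ uniformly, which is exactly the condition you isolate (the analogue of the lower-bounded-Hessian assumption in \cite{mairal2010online_dictionary_learning}). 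That condition is not stated in the paper and does not follow from the boundedness assumption or from \Cref{prop:uniformly_bounded}; nor will your fallback of deriving it from ``non-degeneracy of the ridge-regularized coefficients'' succeed, since nothing in the algorithm prevents $\frac{1}{t}\mathbf{A}_{t}$ from becoming arbitrarily ill-conditioned. So your proposal, carried with the standing eigenvalue condition, proves a conditional version of the statement; the paper's unconditional claim rests on the normalization slip, and its proof in fact needs the same condition yours does.
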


Then, we analyze the gap between the empirical cost function $f_{t}(\mathbf{L}_{t})$ and its surrogate function $g_{t}(\mathbf{L}_{t})$ with the estimated $\mathbf{L}_t$.

\begin{thm}
\label{thm:surrogate_function_converge_to_emiprical_cost_function}
Note $f_{t}(\mathbf{L})$ is the empirical cost function defined in \Cref{eq:empirical_cost_function}, and $g_{t}(\mathbf{L})$ is its surrogate function defined in \Cref{eq:surrogate_function}. $\mathbf{L}_{t}$ is the solution obtained by \Cref{alg:online_alg}, when $t$ tends to infinity, $g_t(\mathbf{L}_{t}) - f_t(\mathbf{L}_{t})$ converges to 0 almost surely.
\end{thm}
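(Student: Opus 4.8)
\emph{Proof plan.} The plan is to show that the nonnegative gap $h_t := g_t(\mathbf{L}_t) - f_t(\mathbf{L}_t)\ge 0$ tends to $0$ by combining a summability statement with a slow-variation estimate, and then invoking a deterministic lemma of the form: if $h_t\ge 0$, $\sum_t h_t/(t+1)<\infty$ while $\sum_t 1/(t+1)=\infty$, and $|h_{t+1}-h_t|\le K/(t+1)$, then $h_t\to 0$. Nonnegativity is immediate: the surrogate \Cref{eq:surrogate_function} evaluates $\hat\ell$ at the particular $(\mathbf{r}_i,\mathbf{s}_i)$ produced online, whereas $f_t$ in \Cref{eq:empirical_cost_function} uses the minimized reconstruction cost $\ell(\mathbf{d}_i,\mathbf{L})$ of \Cref{eq:reconstruction_cost_function}; since $(\mathbf{r}_i,\mathbf{s}_i)$ are feasible but not minimizing for a general $\mathbf{L}$, we get $g_t(\mathbf{L})\ge f_t(\mathbf{L})$ for every $\mathbf{L}$, hence $h_t\ge 0$.

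First I would establish $\sum_t h_t/(t+1)<\infty$ almost surely. Expanding $g_{t+1}$ and using that $(\mathbf{r}_{t+1},\mathbf{s}_{t+1})$ minimize $\hat\ell(\mathbf{d}_{t+1},\mathbf{L}_t,\cdot,\cdot)$ yields $g_{t+1}(\mathbf{L}_t)-g_t(\mathbf{L}_t)=\tfrac{1}{t+1}\bigl[\ell(\mathbf{d}_{t+1},\mathbf{L}_t)-g_t(\mathbf{L}_t)\bigr]$, and optimality of $\mathbf{L}_{t+1}$ gives $g_{t+1}(\mathbf{L}_{t+1})\le g_{t+1}(\mathbf{L}_t)$. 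Writing $u_t=g_t(\mathbf{L}_t)$, letting $\mathcal{F}_t$ denote the history up to time $t$, and setting $f(\mathbf{L}):=\mathbb{E}_{\mathbf{d}}[\ell(\mathbf{d},\mathbf{L})]$, the independence of the fresh sample gives $\mathbb{E}[u_{t+1}-u_t\mid\mathcal{F}_t]\le \tfrac{1}{t+1}(f(\mathbf{L}_t)-f_t(\mathbf{L}_t))-\tfrac{1}{t+1}h_t$. Rearranging and bounding $-\mathbb{E}[u_{t+1}-u_t\mid\mathcal{F}_t]\le|\mathbb{E}[u_{t+1}-u_t\mid\mathcal{F}_t]|$ yields $\tfrac{h_t}{t+1}\le \tfrac{1}{t+1}(f(\mathbf{L}_t)-f_t(\mathbf{L}_t))^{+}+|\mathbb{E}[u_{t+1}-u_t\mid\mathcal{F}_t]|$. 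The first term is summable because a uniform law of large numbers on the bounded, uniformly Lipschitz family gives $\mathbb{E}\|f_t-f\|_\infty=O(t^{-1/2})$, so $\sum_t (t+1)^{-1}t^{-1/2}<\infty$; the second is summable almost surely since the same quasi-martingale argument that proves \Cref{thm:surrogate_function_convergence} also yields $\sum_t|\mathbb{E}[u_{t+1}-u_t\mid\mathcal{F}_t]|<\infty$. Hence $\sum_t h_t/(t+1)<\infty$ almost surely.

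Next I would prove the slow-variation estimate $|h_{t+1}-h_t|=O(1/t)$. I split each of $g$ and $f$ into an argument-change part, $g_{t+1}(\mathbf{L}_{t+1})-g_{t+1}(\mathbf{L}_t)$ and $f_{t+1}(\mathbf{L}_{t+1})-f_{t+1}(\mathbf{L}_t)$, and an index-change part, $g_{t+1}(\mathbf{L}_t)-g_t(\mathbf{L}_t)$ and $f_{t+1}(\mathbf{L}_t)-f_t(\mathbf{L}_t)$. The index-change parts equal $\tfrac{1}{t+1}\bigl[\ell(\mathbf{d}_{t+1},\mathbf{L}_t)-g_t(\mathbf{L}_t)\bigr]$ and $\tfrac{1}{t+1}\bigl[\ell(\mathbf{d}_{t+1},\mathbf{L}_t)-f_t(\mathbf{L}_t)\bigr]$, both $O(1/t)$ once $\mathbf{L}_t,\mathbf{r}_t,\mathbf{s}_t,\mathbf{d}_t$ are uniformly bounded. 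The argument-change parts are bounded by the Lipschitz moduli of $g_{t+1}$ and $f_{t+1}$ times $\|\mathbf{L}_{t+1}-\mathbf{L}_t\|_F$, which is $O(1/t)$ by \Cref{thm:solution_convergence}. The obstacle specific to structured sparsity, which I expect to be the crux, is showing these moduli are uniform in $t$: because $\ell(\mathbf{d},\cdot)$ is the value function of a problem whose structured-sparsity encoding need not have a unique minimizer, I cannot invoke differentiability as in the pixelwise-sparse analyses; instead I bound a subgradient of $\ell(\mathbf{d},\cdot)$ by $\|(\mathbf{d}-\mathbf{L}\mathbf{r}^{*}-\mathbf{s}^{*})(\mathbf{r}^{*})^{T}\|$ and control $\mathbf{r}^{*},\mathbf{s}^{*}$ through boundedness of the data together with the explicitly $\ell_1$-constrained feasible set of the dual variables $\xi$ in \Cref{eq:ProxFlow_dual}. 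This delivers a uniform Lipschitz modulus for $\ell(\mathbf{d},\cdot)$ on the compact set containing all $\mathbf{L}_t$, which transfers to $f_t$ and $g_t$.

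Finally, with $h_t\ge 0$, $\sum_t h_t/(t+1)<\infty$, $\sum_t 1/(t+1)=\infty$, and $|h_{t+1}-h_t|\le K/(t+1)$, the auxiliary lemma forces $h_t\to 0$ almost surely, that is, $g_t(\mathbf{L}_t)-f_t(\mathbf{L}_t)\to 0$ almost surely. The supporting fact that must be in place first is the compactness of $\{\mathbf{L}_t\}$, which itself follows from the strong convexity injected by the $\lambda_1\mathbf{I}$ term in the update of \Cref{eq:sub_problem_L}; I would state it as a preliminary bound and then feed it into every Lipschitz estimate above.
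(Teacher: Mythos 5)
Your proposal is correct and follows essentially the same route as the paper's proof: nonnegativity of the gap, summability of $\sum_t \bigl(g_t(\mathbf{L}_t)-f_t(\mathbf{L}_t)\bigr)/(t+1)$ obtained from the quasi-martingale machinery of \Cref{thm:surrogate_function_convergence} plus the Donsker-type $O(t^{-1/2})$ bound on $\left\Vert f-f_t\right\Vert_{\infty}$, an $O(1/t)$ slow-variation estimate from the uniform Lipschitz properties of $g_t$, $f_t$ (via bounded subgradients, exactly the paper's workaround for non-unique structured-sparsity minimizers) and \Cref{thm:solution_convergence}, and finally \Cref{lem:positive_converging_sums} to conclude. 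The only cosmetic difference is that you control the stochastic residual by $\left| \mathbb{E}[u_{t+1}-u_t \mid \mathcal{F}_t] \right|$ directly, where the paper bounds it through the negative variation $[g_{t+1}(\mathbf{L}_{t+1})-g_t(\mathbf{L}_t)]^{-}$; both are consequences of the same quasi-martingale conclusion.
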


In stochastic optimization, the expected cost function over $\mathbf{L}$ is defined as
\begin{equation}
f(\mathbf{L}) = \mathbb{E}_{\mathbf{d}} [ \ell(\mathbf{d}, \mathbf{L}) ] = \lim_{t \to \infty} f_{t}(\mathbf{L}),
\end{equation}
then we present the convergence of the gap between the expect cost function $f(\mathbf{L}_{t})$ and the surrogate function $g_{t}(\mathbf{L}_{t})$.

\begin{thm}
\label{thm:surrogate_function_converge_to_expected_cost_function}
As $t$ tends to infinity, given the $\mathbf{L}_{t}$ is obtained by \Cref{alg:online_alg}, $g_{t}(\mathbf{L}_{t}) - f(\mathbf{L}_{t})$ converges to 0 almost surely.
\end{thm}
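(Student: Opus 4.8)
The plan is to leverage the gap already controlled in \Cref{thm:surrogate_function_converge_to_emiprical_cost_function} and then bridge from the empirical cost $f_t$ to the expected cost $f$. I would start from the decomposition
\begin{equation*}
g_{t}(\mathbf{L}_{t}) - f(\mathbf{L}_{t}) = \big( g_{t}(\mathbf{L}_{t}) - f_{t}(\mathbf{L}_{t}) \big) + \big( f_{t}(\mathbf{L}_{t}) - f(\mathbf{L}_{t}) \big).
\end{equation*}
The first bracket vanishes almost surely by \Cref{thm:surrogate_function_converge_to_emiprical_cost_function}, so the whole task reduces to showing $f_{t}(\mathbf{L}_{t}) - f(\mathbf{L}_{t}) \to 0$ almost surely. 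The subtlety is that $\mathbf{L}_{t}$ is random and moves with $t$, so pointwise convergence of the empirical average $f_{t}$ to its expectation $f$ does not by itself suffice; I would instead establish convergence that is uniform over the region in which the iterates are confined.

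First I would argue that all iterates remain in a fixed bounded, hence relatively compact, set $\mathcal{C} \subset \mathbb{R}^{p \times r}$. Under the standing boundedness assumption on the data together with the Tikhonov term $\tfrac{\lambda_{1}}{2}\left\Vert \mathbf{L} \right\Vert_{F}^{2}$ penalizing the surrogate, the minimizer $\mathbf{L}_{t}$ of $g_{t}$ cannot drift to infinity, and a uniform bound on $\left\Vert \mathbf{L}_{t}\right\Vert_{F}$ follows exactly as in the boundedness arguments of \cite{mairal2010online_dictionary_learning, feng2013ORPCA_SO}; the bounded data likewise force the optimal $(\mathbf{r}_{i}, \mathbf{s}_{i})$ and the accumulation matrices $\mathbf{A}_{t}, \mathbf{B}_{t}$ to stay bounded.

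Next I would promote pointwise to uniform convergence on $\mathcal{C}$. For each fixed $\mathbf{L} \in \mathcal{C}$, the strong law of large numbers applied to the i.i.d.\ bounded losses $\ell(\mathbf{d}_{i}, \mathbf{L})$ gives $f_{t}(\mathbf{L}) \to f(\mathbf{L})$ almost surely, the regularizing remainder $\tfrac{\lambda_{1}}{2t}\left\Vert \mathbf{L}\right\Vert_{F}^{2}$ being $O(1/t)$. To make this uniform, I would show that $\mathbf{L} \mapsto \ell(\mathbf{d}, \mathbf{L})$ is Lipschitz on $\mathcal{C}$ with a constant independent of $\mathbf{d}$ over the bounded data range; a Lipschitz-indexed class over a bounded parameter set is $P$-Donsker, which yields $\sup_{\mathbf{L} \in \mathcal{C}} \left| f_{t}(\mathbf{L}) - f(\mathbf{L}) \right| \to 0$ almost surely through the associated uniform law of large numbers. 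Chaining this bound with $\mathbf{L}_{t} \in \mathcal{C}$ gives $\left| f_{t}(\mathbf{L}_{t}) - f(\mathbf{L}_{t}) \right| \le \sup_{\mathbf{L} \in \mathcal{C}} \left| f_{t}(\mathbf{L}) - f(\mathbf{L}) \right| \to 0$ almost surely, which combined with the first bracket completes the proof.

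The hard part will be the uniform Lipschitz property of $\ell(\mathbf{d}, \cdot)$. Because the structured-sparsity encoding need not admit a unique minimizer $(\mathbf{r}, \mathbf{s})$, the customary route of differentiating through the unique optimal solution, as used in the sparse-coding analyses of \cite{mairal2010online_dictionary_learning, feng2013ORPCA_SO, shen2016online_subspace_clustering}, is unavailable here. I would instead control the variation of $\ell$ via the boundedness of the sub-gradients of the $\ell_{1}/\ell_{\infty}$ encoding, the same mechanism underpinning the earlier theorems, which is precisely the non-standard ingredient the authors single out as their contribution.
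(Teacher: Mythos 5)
Your proposal is correct and takes essentially the same route as the paper: the paper's own (very terse) proof likewise rests on the uniform convergence $\left\Vert f - f_{t} \right\Vert_{\infty} \to 0$ almost surely over the compact set supporting the iterates (obtained from the boundedness and uniform-Lipschitz propositions together with the Donsker-class lemma) combined with the gap $g_{t}(\mathbf{L}_{t}) - f_{t}(\mathbf{L}_{t}) \to 0$ from \Cref{thm:surrogate_function_converge_to_emiprical_cost_function}. Your write-up merely makes explicit the two-term decomposition, the confinement of $\mathbf{L}_{t}$ to a compact set, and the subgradient-based Lipschitz argument that the paper leaves implicit (and, incidentally, is cleaner than the paper's proof, which nominally cites \Cref{thm:solution_convergence} for the uniform convergence step).
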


Furthermore, the solution $\mathbf{L}_{t}$ obtained by \Cref{alg:online_alg} is not a stationary point of expected cost function $f(\mathbf{L})$, when $t$ tends to infinity, which on contrary is proved true in \cite{mairal2010online_dictionary_learning,feng2013ORPCA_SO, shen2016online_subspace_clustering}.
Due to the existence of more than one solutions to \Cref{eq:solve_r_c}, $\hat{\ell}(\mathbf{d}, \mathbf{L}, \mathbf{r}, \mathbf{s})$ is no longer strictly convex (or strongly convex) with respect to $(\mathbf{r}, \mathbf{s})$, and the gradient of the expected cost function $\nabla_{\mathbf{L}} f(\mathbf{L})$ is no longer Lipschitz.
Therefore, the gradient of the expected cost function $\nabla_{\mathbf{L}} f(\mathbf{L})$ would not become zero when $t$ tends to infinity, based on which we conclude the solution $\mathbf{L}_{t}$ may not be the stationary point of the expected cost function as $t$ tends infinity.

Please refer to the appendices for detailed proofs of the presented theorems.

\section{Experiments}
\label{sec:experiments}

The detection performance of O-LSD was evaluated on a dataset of two satellite videos. 
This dataset is constructed from a satellite video captured over Las Vagas, USA on March 25, 2014, whose spatial resolution is 1.0 meter and the frame rate is 30 frames per second. 
Both videos contains 700 frame with boundary boxes for moving vehicles as groundtruth, and details on both videos are listed in \Cref{tbl:dataset_info} 
\footnote[2]{Moving vehicles are manually labeled by the Computer Vision Annotation Tool (CVAT), and a boundary box is provided for each moving object on each frame.}.
In this paper, we used the first 200 frames in each video for the discussion on parameter selection, and the remaining frames were utilized for performance evaluation against existing state-of-the-art methods.

\begin{table}[h]
	\caption{Information on the evaluation datasets}
	\label{tbl:dataset_info}
	\centering
	\begin{tabular}{c|c|c|c|c|c}
		\hline
		\multirow{2}{*}{Video} & \multirow{2}{*}{Frame Size} & \multicolumn{2}{c|}{Cross Validation} & \multicolumn{2}{c}{Performance Evaluation} \\
		\cline{3-6}
		& & \#Frames & \#Vehicles & \#Frames & \#Vehicles \\
		
		\hline
		001 & $400 \times 400$ & 200 & 9306 & 500 & 18167 \\ 
		\hline
		002 & $600 \times 400$ & 200 & 13443 & 500 & 39362 \\ 
		\hline
	\end{tabular} 
\end{table}

The detection performance on moving object detection is evaluated on recall, precision and $F_{1}$ scores given by
\begin{equation}
\begin{aligned}
&\text{recall} = TP/(TP + FN) \\
&\text{precision} = TP/(TP + FP) \\
&F_1 =  \frac{2 \times \text{recall} \times \text{precision}}{\text{recall} + \text{precision}}
\end{aligned},
\end{equation}
where $TP$ denotes the number of correct detections, $FN$ and $FP$ are the numbers of missed detections and false alarms, respectively.
In this paper, we define a correct detection with maximum Intersection over Union (IoU) against the groundtruth greater than a threshold. 
To complement the vehicles in small size in satellite videos, the threshold is set as 0.3
\footnote[3]{The estimated foreground is built by contiguous values rather than binary value, so we deploy threshold segmentation as post-processing for extracting the foreground mask and the moving objects \cite{gao2012block_sparsity_bg}.}.
In this paper, we refer 5-Frame detection performance at each time instance to the metrics obtained from its 5 latest frames, which is used for observing the convergence, and the accumulated detection performance is measured on all frames before current time instances, which is used for comparing the overall performance over a sequence.

\subsection{Parameter Setting}
\label{subsec:parameter_setting}

The performance of O-LSD is controlled by the dimension of the estimated subspace $r$, and two weights for the low-rank term and the structured sparsity penalty separably, $\lambda_1$ and $\lambda_2$. 
The following experiments are conducted on the cross-validation sequence from Video 001.

The weight $\lambda_{1}$ assigns the importance of the low-rank subspace term. 
With the fixed $\lambda_{2}$, a significantly small $\lambda_{1}$ would encourage more information encoded in the low-rank subspace factors and hurt the detection performance in both terms of recall and precision. 
Increasing $\lambda_{1}$ improves the detection performance by the increased emphasis on the low-rank subspace modeling. 
After approaching the best detection performance, continuing increasing $\lambda_{1}$ would prevent the information encoded into the background.
As illustrated in \Cref{fig:lambda1_fixed_lambda2}, with the fixed $\lambda_{2} = 0.025$ and $r=5$, as $\lambda_{1}$ increases, the $F_{1}$ score gradually increases to about 75\% from around 60\%, then the detection performance starts dropping with continuously increasing $\lambda_{1}$.

\begin{figure}[t]
	\footnotesize
	\centering
	\subfloat[Varying $\lambda_{1}$ with $\lambda_{2} = 0.025$]{
		\begin{tikzpicture}
		\begin{axis}[ 
		width = 3.5cm,
		height= 3.5cm,
		xlabel={$\lambda_{1}$},
		xmin=0.00025,
		xmax=2.5,
		xmode=log, 
		ymin = 0,
		ymax=85,
		ylabel={$F_1$},
		xlabel near ticks,
		ylabel near ticks,
		label style={font=\tiny},
		tick label style={font=\tiny},
		legend style={ font=\tiny},
		legend pos= south west,
		]	
		\addplot +[black, thick, smooth, mark=none] table[x =lambda1, y expr=\thisrow{f1}*100, col sep=comma] {data/parameter_setting/lambda1_fixed_lambda2_10.csv};
		\end{axis}
		\end{tikzpicture}
		
		\begin{tikzpicture}
		\begin{axis}[ 
		width = 3.5cm,
		height= 3.5cm,
		xlabel={$\lambda_{1}$},
		xmin=0.00025,
		xmax=2.5,
		xmode=log, 
		ymin = 0,
		ymax=85,
		ylabel={Recall},
		xlabel near ticks,
		ylabel near ticks,
		label style={font=\tiny},
		tick label style={font=\tiny},
		legend style={ font=\tiny},
		legend pos= south west,
		]	
		\addplot +[red, thick, smooth, mark=none] table[x =lambda1, y expr=\thisrow{recall}*100, col sep=comma] {data/parameter_setting/lambda1_fixed_lambda2_10.csv};
		\end{axis}
		\end{tikzpicture}
		
		\begin{tikzpicture}
		\begin{axis}[ 
		width = 3.5cm,
		height= 3.5cm,
		xlabel={$\lambda_{1}$},
		xmin=0.00025,
		xmax=2.5,
		xmode=log, 
		ymin = 0,
		ymax=85,
		ylabel={Precision},
		xlabel near ticks,
		ylabel near ticks,
		label style={font=\tiny},
		tick label style={font=\tiny},
		legend style={ font=\tiny},
		legend pos= south west,
		]		
		\addplot +[blue, thick, smooth, mark=none] table[x =lambda1, y expr=\thisrow{precision}*100, col sep=comma] {data/parameter_setting/lambda1_fixed_lambda2_10.csv};
		\end{axis}
		\end{tikzpicture}
		\label{fig:lambda1_fixed_lambda2}
	} \\
	\subfloat[Varying $\lambda_{2}$ with $\lambda_{1} = 0.0025$]{
		\begin{tikzpicture}
		\begin{axis}[ 
		width = 3.5cm,
		height= 3.5cm,
		xlabel={$\lambda_{2}$},
		xmin=0.00001,
		xmax=2.5,
		xmode=log, 
		ymin = 0,
		ymax=85,
		ylabel={$F_1$},
		xlabel near ticks,
		ylabel near ticks,
		label style={font=\tiny},
		tick label style={font=\tiny},
		legend style={ font=\tiny},
		legend pos= south west,
		]	
		\addplot +[black, thick, smooth, mark=none] table[x =lambda2, y expr=\thisrow{f1}*100, col sep=comma] {data/parameter_setting/lambda2_fixed_lambda1.csv};
		\end{axis}
		\end{tikzpicture}
		
		\begin{tikzpicture}
		\begin{axis}[ 
		width = 3.5cm,
		height= 3.5cm,
		xlabel={$\lambda_{2}$},
		xmin=0.00001,
		xmax=2.5,
		xmode=log, 
		ymin = 0,
		ymax=85,
		ylabel={Recall},
		xlabel near ticks,
		ylabel near ticks,
		label style={font=\tiny},
		tick label style={font=\tiny},
		legend style={ font=\tiny},
		legend pos= south west,
		]	
		\addplot +[red, thick, smooth, mark=none] table[x =lambda2, y expr=\thisrow{recall}*100, col sep=comma] {data/parameter_setting/lambda2_fixed_lambda1.csv};
		\end{axis}
		\end{tikzpicture}
		
		\begin{tikzpicture}
		\begin{axis}[ 
		width = 3.5cm,
		height= 3.5cm,
		xlabel={$\lambda_{2}$},
		xmin=0.00001,
		xmax=2.5,
		xmode=log, 
		ymin = 0,
		ymax=85,
		ylabel={Precision},
		xlabel near ticks,
		ylabel near ticks,
		label style={font=\tiny},
		tick label style={font=\tiny},
		legend style={ font=\tiny},
		legend pos= south west,
		]		
		\addplot +[blue, thick, smooth, mark=none] table[x =lambda2, y expr=\thisrow{precision}*100, col sep=comma] {data/parameter_setting/lambda2_fixed_lambda1.csv};
		\end{axis}
		\end{tikzpicture}
		\label{fig:lambda2_fixed_lambda1}
	}
	
	\caption{Moving object detection performance with different $\lambda_{1}$ and $\lambda_{2}$.}
	\label{fig:lambda1_lambda2}
\end{figure}
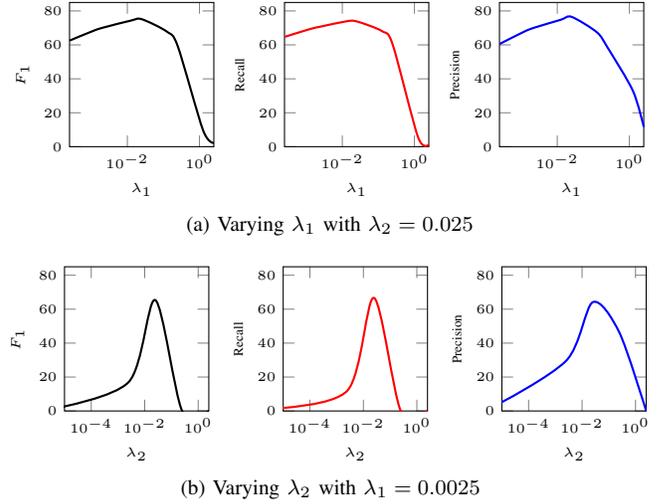

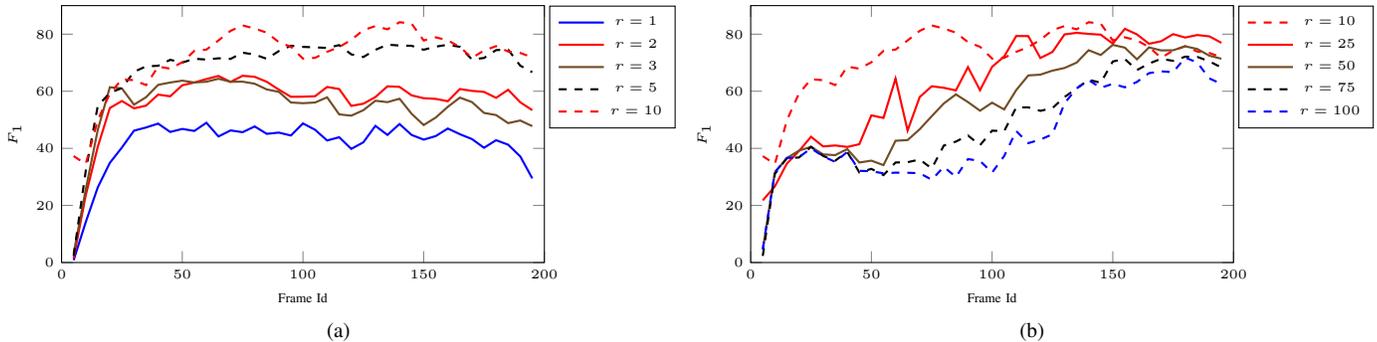
\begin{figure*}[t]
	\footnotesize
	\centering
	\subfloat[]{
		\begin{tikzpicture}
		\begin{axis}[ 
		width = 8.0cm,
		height= 5.0cm,
		xlabel={Frame Id},
		xmin=0,
		xmax=200,
		%		xmode=log, 
		ymin = 0,
		ymax=90,
		ylabel={$F_1$},
		xlabel near ticks,
		ylabel near ticks,
		label style={font=\tiny},
		tick label style={font=\tiny},
		legend style={at={(1.01,1)},anchor=north west, font=\tiny},
		]	
		\addplot +[thick, mark=none ] table[x=frame_id, y expr=\thisrow{f1}*100, col sep=comma] {data/random/MODE_RANDOM_RANK_1_INITIAL_FRAME_0_FACTOR_0.1.avg.csv};
		\addlegendentry{$r=1$}
		\addplot +[thick, mark=none ] table[x=frame_id, y expr=\thisrow{f1}*100, col sep=comma] {data/random/MODE_RANDOM_RANK_2_INITIAL_FRAME_0_FACTOR_0.1.avg.csv};
		\addlegendentry{$r=2$}
		\addplot +[thick, mark=none ] table[x=frame_id, y expr=\thisrow{f1}*100, col sep=comma] {data/random/MODE_RANDOM_RANK_3_INITIAL_FRAME_0_FACTOR_0.1.avg.csv};
		\addlegendentry{$r=3$}
		\addplot +[thick, dashed, mark=none ] table[x=frame_id, y expr=\thisrow{f1}*100, col sep=comma] {data/random/MODE_RANDOM_RANK_5_INITIAL_FRAME_0_FACTOR_0.1.avg.csv};
		\addlegendentry{$r=5$}
		\addplot +[thick, dashed, mark=none, red] table[x=frame_id, y expr=\thisrow{f1}*100, col sep=comma] {data/random/MODE_RANDOM_RANK_10_INITIAL_FRAME_0_FACTOR_0.1.avg.csv};
		\addlegendentry{$r=10$}
		\end{axis}
		\end{tikzpicture}
	}
	\subfloat[]{
		\begin{tikzpicture}
		\begin{axis}[ 
		width = 8.0cm,
		height= 5.0cm,
		xlabel={Frame Id},
		xmin=0,
		xmax=200,
		%		xmode=log, 
		ymin = 0,
		ymax=90,
		ylabel={$F_1$},
		xlabel near ticks,
		ylabel near ticks,
		label style={font=\tiny},
		tick label style={font=\tiny},
		legend style={at={(1.01,1)},anchor=north west, font=\tiny},
		]	
		\addplot +[thick, dashed, mark=none, red] table[x=frame_id, y expr=\thisrow{f1}*100, col sep=comma] {data/random/MODE_RANDOM_RANK_10_INITIAL_FRAME_0_FACTOR_0.1.avg.csv};
		\addlegendentry{$r=10$}
		\addplot +[thick, mark=none] table[x=frame_id, y expr=\thisrow{f1}*100, col sep=comma] {data/random/MODE_RANDOM_RANK_25_INITIAL_FRAME_0_FACTOR_0.1.avg.csv};
		\addlegendentry{$r=25$}
		\addplot +[thick, mark=none] table[x=frame_id, y expr=\thisrow{f1}*100, col sep=comma] {data/random/MODE_RANDOM_RANK_50_INITIAL_FRAME_0_FACTOR_0.1.avg.csv};
		\addlegendentry{$r=50$}
		\addplot +[thick, dashed, mark=none] table[x=frame_id, y expr=\thisrow{f1}*100, col sep=comma] {data/random/MODE_RANDOM_RANK_75_INITIAL_FRAME_0_FACTOR_0.1.avg.csv};
		\addlegendentry{$r=75$}
		\addplot +[thick, dashed, mark=none] table[x=frame_id, y expr=\thisrow{f1}*100, col sep=comma] {data/random/MODE_RANDOM_RANK_100_INITIAL_FRAME_0_FACTOR_0.1.avg.csv};
		\addlegendentry{$r=100$}
		\end{axis}
		\end{tikzpicture}
	}
	\caption{5-Frame performance evaluation of O-LSD with different $r$ on cross-validation sequence from Video 001.} 
	\label{fig:random_initialization_sq_1}
\end{figure*}

\begin{table*}[t]
	\caption{Detection Performance Comparison with Online Algorithms}
	\label{tbl:comparison_online}
	\centering
	\begin{tabular}{c|ccc|ccc|c}		
		\hline 
		\multirow{2}{*}{Video} & \multicolumn{3}{c|}{001} & \multicolumn{3}{c|}{002} & \multirow{2}{*}{Avg($F_1$)}\\
		\cline{2-7}
		& Recall &  Precision&  $F_{1}$ score  & Recall &  Precision&  $F_{1}$ score  & \\ 
		\hline 
		GRASTA	& \textbf{76.96\%} & 31.76\% & 44.97\% & \underline{72.22\%} &46.73\% & 56.75\% &  50.86\% \\ 
		\hline 
		
		OR-PCA	& \underline{66.51\%} & 41.50\%  & 51.11\% & 71.94\% & \underline{73.79\%}  & \underline{72.86\%} &  \underline{61.99\%} \\ 
		\hline 
		
		GUSOS	& 59.35\% & \underline{49.15\%}  & 53.77\% & 68.03\%  & 62.61\%  &  65.21\% & 59.49\% \\
		\hline 
		%		COROLA 	& 53.97\% & 29.45\%  & 38.10\% & 79.1\% & 80.8\%  & 79.55\% & 73.13\%\\ 
		%		\hline
		\textbf{O-LSD} & 64.99\% & \textbf{63.75\%}  &  \textbf{64.36\%} & \textbf{73.00}\% & \textbf{90.21\%}  &  \textbf{80.69}\% &  \textbf{72.48\%} \\
		\hline
	\end{tabular} 
\end{table*}

Increasing $\lambda_{2}$ with fixed $\lambda_{1}$ would put more emphasis on the structured sparsity of the extracted foreground, which thus improves the precision of the detected moving objects by restraining the random noises. 
When $\lambda_{2}$ continuously increases, the weight for the structured sparsity norm tends too large to encode information into the foreground, which then decreases the detection performance. 
As illustrated in \Cref{fig:lambda2_fixed_lambda1}, with fixed $\lambda_{1} = 0.0025$ and $r=5$, the $F_{1}$ score first approaches to the highest point as $\lambda_{2}$ increases, then the same metric drops when $\lambda_{2}$ tends too large.

Then we discuss the selection of the dimension of the subspace $r$.
With fixed $\lambda_{1}$ and $\lambda_{2}$, the O-LSD with smaller $r$ probably converges faster, however, it may fail in modeling the permutation of the background for a long video sequence.
On the contrary, selecting a higher $r$ would disadvantage the updating of subspace basis and require more frames before O-LSD converges. 
As presented in \Cref{fig:random_initialization_sq_1}, for $r \le 10$, the 5-frame $F_1$ scores increase faster than those with $r \ge 25$.
As the sequence length increases, 5-frame $F_1$ scores by the O-LSD with $r \le 10$ show a trend of dropping. 
For $r=25$, this trend is negligible, and the same metric is still rising for $r > 25$, which means the O-LSD requires a longer sequence to converge.
The highest $F_{1}$ score at Frame-200 is achieved by $r=25$ in \Cref{fig:random_initialization_sq_1}.

In the rest of the paper, based on cross-evaluation, we select $\lambda_{1} = 1 / {\sqrt{p}}$ and $\lambda_{1} / \lambda_{2} = 0.1$ with $r = 25$ for evaluation, and further fine-tuning on the parameter selection would improve the detection performance by O-LSD.

\subsection{Comparison with Online Approach}

To verify the effectiveness of O-LSD, detection results by O-LSD are compared against the state-of-the-art online approaches, GRASTA \cite{he2012GRASTA}, OR-PCA \cite{feng2013ORPCA_SO} and GOSUS \cite{xu2013GOSUS}. 
For all methods, the subspace is initialized by the random scheme, and their parameters are selected through cross-validation.

O-LSD boosts the detection performance in terms of the precision and $F_{1}$ scores.
This improvement should be owned to the structured sparsity penalty, which suppress the random noises in estimated foreground frames. 
As present in \Cref{tbl:comparison_online}, the O-LSD achieves the highest precision and $F_{1}$ scores on both videos, although there is a little drop of recall score on Video 001, compared with GRASTA and OR-PCA. 
In terms of online methods employing structured sparsity penalty, O-LSD outperforms GOSUS on both satellite videos.
For common moving object detection tasks GOSUS is proved effective, however, it produces no improvement to OR-PCA on the satellite videos.

Another advantage of O-LSD is its faster convergence against other state-of-the-art online algorithms. 
As demonstrated in \Cref{fig:framewise_comparison}, on both videos, the O-LSD achieves the higher $F_{1}$ scores earlier than GRASTA, OR-PCA and GOSUS, implying that O-LSD converges faster by introducing the structured sparsity penalty. 
Similar trends can also be observed from the perspective of accumulated detection performance, as the accumulated detection performance of O-LSD is always better than the three existing methods as the length of sequence increases, as illustrated in \Cref{fig:accumulated_comparison}.

Besides, as more frames are processed by O-LSD algorithm, the cleanness of the estimated background frame gradually improves, as shown in \Cref{fig:detection_visualization_sq_1}.

\begin{table}
	\caption{Detection Performance by O-LSD with Temporally Down-sampling}
	\label{tbl:temporally_downsampling}
	\centering
	\begin{tabular}{c|ccccc}		
		\hline 
		Video & $T$ & Recall & Precision & $F_{1}$ & Time per Frame\\
		\hline 
		
		\multirow{4}{*}{001} & 1 & 64.99\% & 63.75\%  &  64.36\% & 6.57s \\
		& 3	& 67.85\% & 59.79\% &63.57\% & 12.74s \\
		& 5	& 69.22\% & 62.02\% &65.42\% & 14.62s \\
		& 10	& 67.72\% & 62.14\% & 64.91\% & 14.26s \\
		\hline
		
		\multirow{4}{*}{002} & 1 & 73.00\% & 90.21\%  &  80.69\% & 10.75s \\
		& 3	& 74.21\% & 88.46\% & 80.71\% & 14.48s \\
		& 5	& 74.41\% & 87.72\%  & 80.52\% & 17.86s \\
		& 10 & 73.46\% & 88.68\% & 80.36\% &21.12s  \\
		\hline
	\end{tabular} 
\end{table}

\begin{figure*}
\footnotesize
\centering
\subfloat[Video 001]{
	\begin{tikzpicture}
	\begin{axis}[ 
	width = 8.5cm,
	height= 5.0cm,
	xlabel={Frame Id},
	xmin=0,
	xmax=500,
	%		xmode=log, 
	ymin = 0,
	ymax=80,
	ylabel={$F_1$},
	xlabel near ticks,
	ylabel near ticks,
	label style={font=\tiny},
	tick label style={font=\tiny},
	legend style={ font=\tiny},
	legend pos= south east,
	]	
	\addplot +[mark=none] table[x expr=\thisrow{frame_id}-200, y expr=\thisrow{f1}*100, col sep=comma] {data/eval/olsd_sq_1.csv};
	\addlegendentry{O-LSD}
	\addplot +[mark=none] table[x expr=\thisrow{frame_id}-200, y expr=\thisrow{f1}*100, col sep=comma] {data/eval/gosus_sq_1.csv};
	\addlegendentry{GOSUS}
	\addplot +[mark=none] table[x expr=\thisrow{frame_id}-200, y expr=\thisrow{f1}*100, col sep=comma] {data/eval/orpca_sq_1.csv};
	\addlegendentry{OR-PCA}
	\addplot +[mark=none] table[x expr=\thisrow{frame_id}-200, y expr=\thisrow{f1}*100, col sep=comma] {data/eval/grasta_sq_1.csv};
	\addlegendentry{GRASTA}

%	\addplot +[mark=none] table[x expr=\thisrow{frame_id}-200, y expr=\thisrow{f1}*100, col sep=comma] {data/eval/corola_sq_1.csv};
%	\addlegendentry{COROLA}
	\end{axis}
	\end{tikzpicture}
}
\subfloat[Video 002]{
	\begin{tikzpicture}
	\begin{axis}[ 
	width = 8.5cm,
	height= 5.0cm,
	xlabel={Frame Id},
	xmin=0,
	xmax=500,
	%		xmode=log, 
	ymin = 0,
	ymax=90,
	ylabel={$F_1$},
	xlabel near ticks,
	ylabel near ticks,
	label style={font=\tiny},
	tick label style={font=\tiny},
	legend style={ font=\tiny},
	legend pos= south east,
	]	
	\addplot +[mark=none] table[x expr=\thisrow{frame_id}-200, y expr=\thisrow{f1}*100, col sep=comma] {data/eval/olsd_sq_2.csv};
	\addlegendentry{O-LSD}
	\addplot +[mark=none] table[x expr=\thisrow{frame_id}-200, y expr=\thisrow{f1}*100, col sep=comma] {data/eval/gosus_sq_2.csv};
	\addlegendentry{GOSUS}
	\addplot +[mark=none] table[x expr=\thisrow{frame_id}-200, y expr=\thisrow{f1}*100, col sep=comma] {data/eval/orpca_sq_2.csv};
	\addlegendentry{OR-PCA}
	\addplot +[mark=none] table[x expr=\thisrow{frame_id}-200, y expr=\thisrow{f1}*100, col sep=comma] {data/eval/grasta_sq_2.csv};
	\addlegendentry{GRASTA}

	\end{axis}
	\end{tikzpicture}
}
\caption{5-Frame detection performance by different online algorithms.}
\label{fig:framewise_comparison}
\end{figure*}
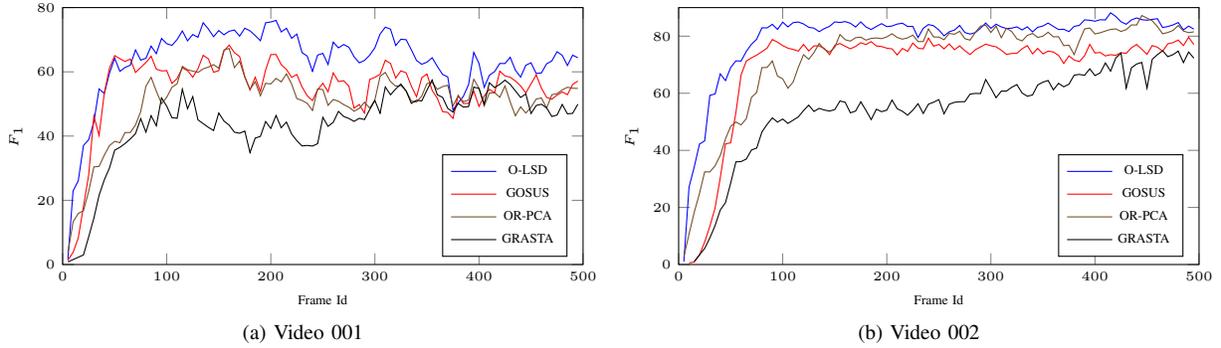

\begin{figure*}
	\footnotesize
	\centering
	\subfloat[Video 001]{
		\begin{tikzpicture}
		\begin{axis}[ 
		width = 8.5cm,
		height= 5.0cm,
		xlabel={Frame Id},
		xmin=0,
		xmax=500,
		%		xmode=log, 
		ymin = 0,
		ymax=75,
		ylabel={$F_1$},
		xlabel near ticks,
		ylabel near ticks,
		label style={font=\tiny},
		tick label style={font=\tiny},
		legend style={ font=\tiny},
		legend pos= south east,
		]	
		\addplot +[mark=none] table[x expr=\thisrow{frame_id}-200, y expr=\thisrow{f1}*100, col sep=comma] {data/eval/olsd_sq_1.acc.csv};
		\addlegendentry{O-LSD}
		\addplot +[mark=none] table[x expr=\thisrow{frame_id}-200, y expr=\thisrow{f1}*100, col sep=comma] {data/eval/gosus_sq_1.acc.csv};
		\addlegendentry{GOSUS}
		\addplot +[mark=none] table[x expr=\thisrow{frame_id}-200, y expr=\thisrow{f1}*100, col sep=comma] {data/eval/orpca_sq_1.acc.csv};
		\addlegendentry{OR-PCA}
		\addplot +[mark=none] table[x expr=\thisrow{frame_id}-200, y expr=\thisrow{f1}*100, col sep=comma] {data/eval/grasta_sq_1.acc.csv};
		\addlegendentry{GRASTA}

%		\addplot +[mark=none] table[x=frame_id, y expr=\thisrow{f1}*100, col sep=comma] {data/eval/gosus_sq_1.acc.csv};
%		\addlegendentry{GOSUS}
		
		\end{axis}
		\end{tikzpicture}
	}
	\subfloat[Video 002]{
		\begin{tikzpicture}
		\begin{axis}[ 
		width = 8.5cm,
		height= 5.0cm,
		xlabel={Frame Id},
		xmin=0,
		xmax=500,
		%		xmode=log, 
		ymin = 0,
		ymax=90,
		ylabel={$F_1$},
		xlabel near ticks,
		ylabel near ticks,
		label style={font=\tiny},
		tick label style={font=\tiny},
		legend style={ font=\tiny},
		legend pos= south east,
		]	
		\addplot +[mark=none] table[x expr=\thisrow{frame_id}-200, y expr=\thisrow{f1}*100, col sep=comma] {data/eval/olsd_sq_2.acc.csv};
		\addlegendentry{O-LSD}
		\addplot +[mark=none] table[x expr=\thisrow{frame_id}-200, y expr=\thisrow{f1}*100, col sep=comma] {data/eval/gosus_sq_2.acc.csv};
		\addlegendentry{GOSUS}
		\addplot +[mark=none] table[x expr=\thisrow{frame_id}-200, y expr=\thisrow{f1}*100, col sep=comma] {data/eval/orpca_sq_2.acc.csv};
		\addlegendentry{OR-PCA}
		\addplot +[mark=none] table[x expr=\thisrow{frame_id}-200, y expr=\thisrow{f1}*100, col sep=comma] {data/eval/grasta_sq_2.acc.csv};
		\addlegendentry{GRASTA}

		\end{axis}
		\end{tikzpicture}
	}
	\caption{Accumulated detection performance by different online algorithms.}
	\label{fig:accumulated_comparison}
\end{figure*}
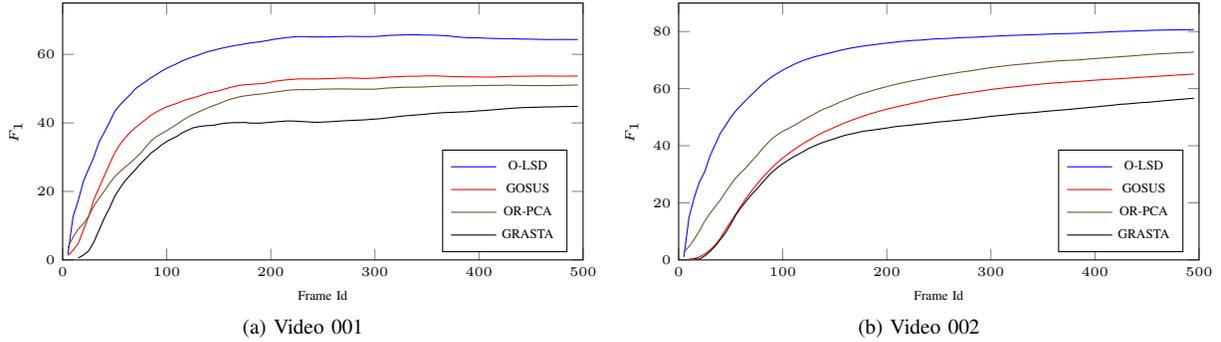

\subsection{Comparison with Batch-based Approach}

\begin{table*}[t]
	\caption{Detection Performance Comparison against Batch-based Algorithms}
	\label{tbl:comparison_batch}
	\centering
	\begin{tabular}{c|cccc|cccc|c}		
		\hline 
		\multirow{2}{*}{Video} & \multicolumn{4}{c|}{001} & \multicolumn{4}{c|}{002} & \multirow{2}{*}{Avg($F_1$)}\\
		\cline{2-9}
		& Recall &  Precision&  $F_{1}$ score  & Time Per Frame & Recall &  Precision&  $F_{1}$ score  & Time Per Frame & \\ 
		\hline
		
		RPCA & \textbf{94.57\%} &40.65\% & 56.86\% & \textbf{3.16s} & \textbf{90.15\%} &78.06\% & \underline{83.67\%} & \textbf{5.45s} & 70.27\% \\ 
		\hline 
		
		LSD	& \underline{86.80\%} & \textbf{70.79\%}  & \textbf{77.98\%} & 68.48s & \underline{82.19\%} & \textbf{90.87\%}  & \textbf{86.31\%} & 119.50s &  \textbf{82.15\%} \\
		\Xhline{1pt}
		%		LSD & 84.95\% & 78.92\%  &  81.82\% & 23.88s & 79.63\% & 93.80\%  &  86.13\% & 46.08s & 83.98\% \\
		%		\Xhline{1pt}
		\textbf{O-LSD} & 64.99\% & \underline{63.75\%}  &  \underline{64.36\%} & \underline{6.57s}  & 73.00\% & \underline{90.21\%}  &  80.69\% & \underline{10.75s} &  \underline{72.48\%} \\
		\Xhline{1pt}
	\end{tabular} 
\end{table*}

Besides the comparison against the state-of-the-art online algorithms, we also compare O-LSD with the batch-based algorithms, which are RPCA \cite{lin2010RPCA} and LSD \cite{liu2015LSD}. 
O-LSD achieves slightly descent detection performance with significantly reduced delay in processing on both videos.

Compared with batch-based approaches, O-LSD achieves comparable performance with the batch-based approach RPCA, since it generates less false alarms, as visualized in \Cref{fig:detection_visualization_methods_sq_1}. 
O-LSD fails in matching the detection performance by its batch-based counterpart, LSD, and \Cref{tbl:comparison_batch} presents a drop of about 10\% in $F_{1}$ score by O-LSD.
This little gap in detection performance between LSD- and O-LSD implies that O-LSD works pretty well, although it may not converge to the global optimum of LSD.

In term of the processing time for each frame, O-LSD significantly reduces this metric, compared with LSD.
As presented in \Cref{tbl:comparison_batch}, the time cost per frame for O-LSD is ten times smaller than LSD. 
Compared with RPCA, O-LSD improves the detection performance with moderately increased time cost per frame. 
As the detection results by those batch-based approaches are not available until the entire optimization is completed, O-LSD significantly reduces the delay in moving object detection by the O-LSD.

\subsection{Performance Evaluation with Temporally Down-sampling}

Additionally, we evaluate the effects of temporally down-sampling on detection performance.
In this set of experiments, one frame from every $T$ frames, $T \in \{1, 3, 5, 10 \}$, is fed to O-LSD.

As shown in \Cref{tbl:temporally_downsampling}, with increasing $T$ from 1 to 10, the $F_1$ scores fluctuate negligible, which implies that detection performance of O-LSD is almost not influenced by the temporally down-sampling.
At the same time, the time costs for each frame are more or less the same with different temporally down-sampling frequencies.
Since fewer frames need to be processed with large temporally down-sampling scales, the delay for processing is reduced. 
However, considering that it takes more than 1 second to obtain detection results for each frame by O-LSD, it is still hard to directly apply O-LSD for real-time applications without appropriate accelerations.

\begin{figure*}[t]
	\footnotesize
	\centering
	\begin{tabular}{cccccc}
		\centering
		
	 	\includegraphics[width=0.145 \linewidth]{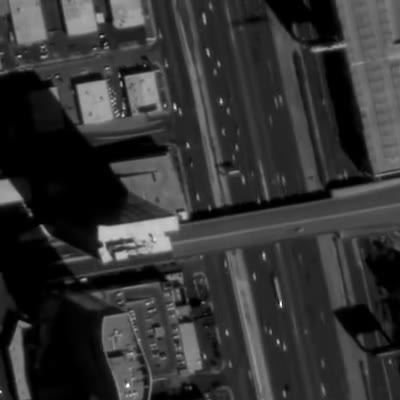}	&	
		\includegraphics[width=0.145 \linewidth]{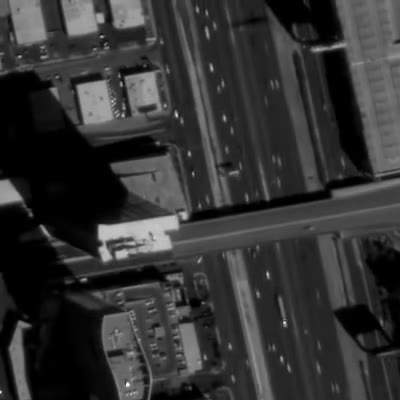}	&	
		\includegraphics[width=0.145 \linewidth]{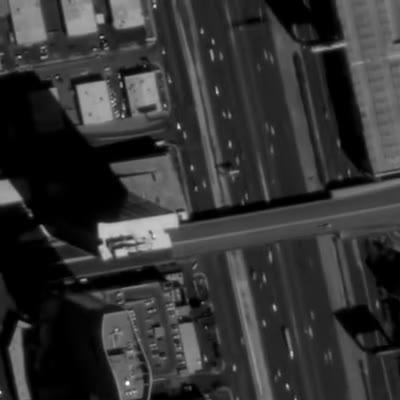}	& 
		\includegraphics[width=0.145 \linewidth]{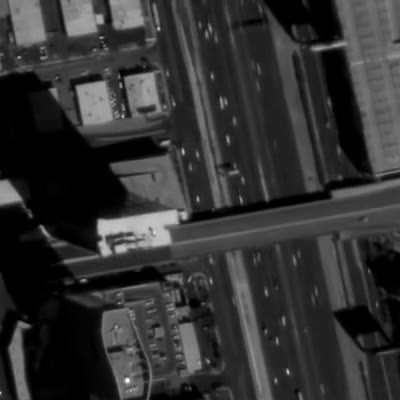}	&	
		\includegraphics[width=0.145 \linewidth]{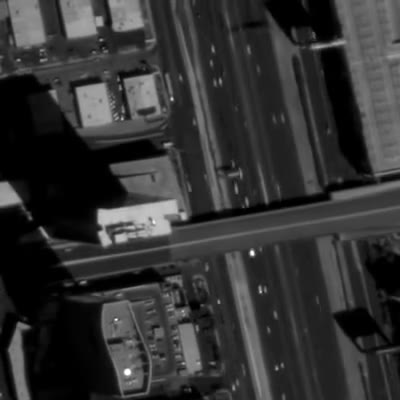} &
		\includegraphics[width=0.145 \linewidth]{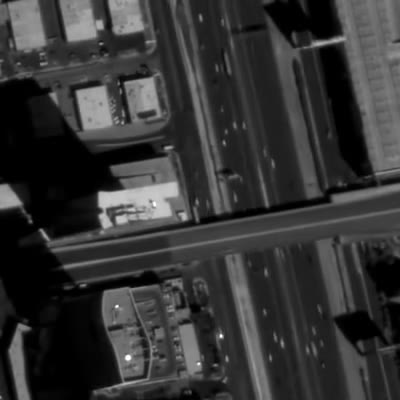} \\
		
		\includegraphics[width=0.145 \linewidth]{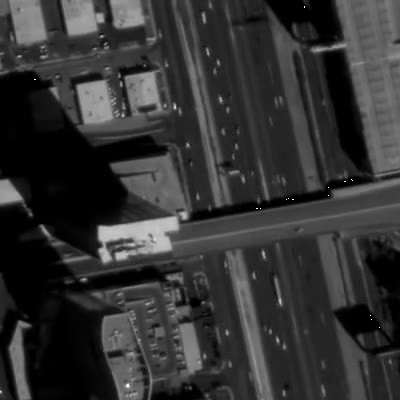}	&	
		\includegraphics[width=0.145 \linewidth]{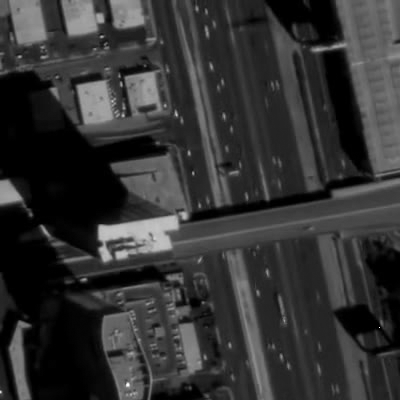}	&	
		\includegraphics[width=0.145 \linewidth]{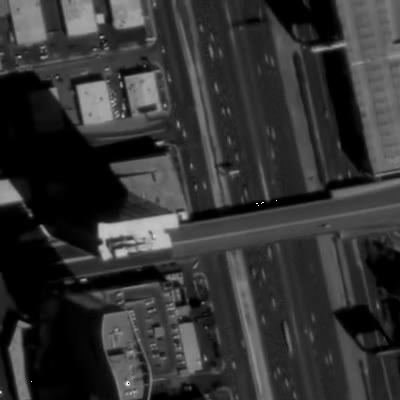}	& 
		\includegraphics[width=0.145 \linewidth]{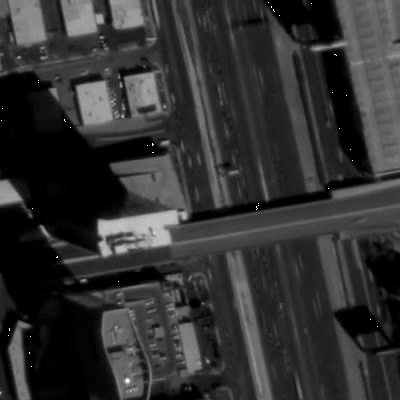}	&	
		\includegraphics[width=0.145 \linewidth]{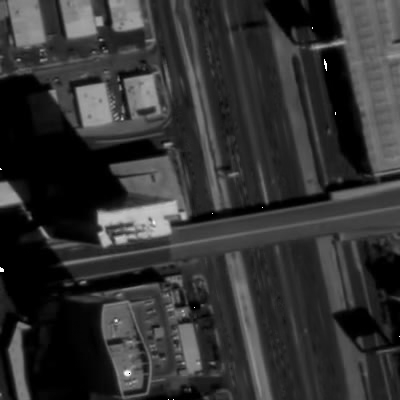} &
		\includegraphics[width=0.145 \linewidth]{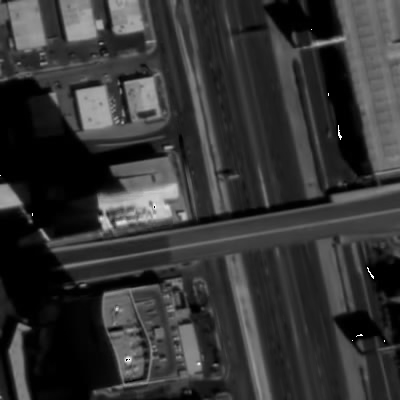} \\
		
		\includegraphics[width=0.145 \linewidth]{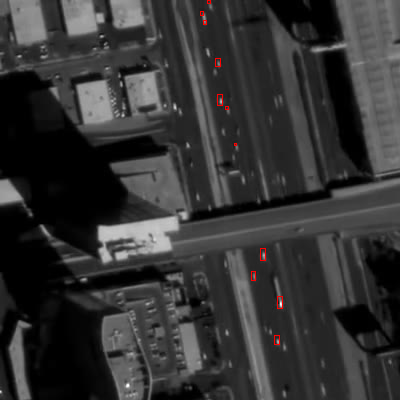}	&	
		\includegraphics[width=0.145 \linewidth]{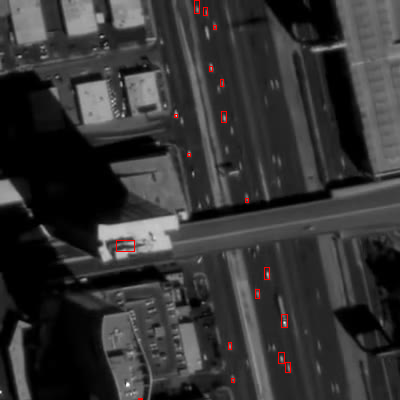}	&	
		\includegraphics[width=0.145 \linewidth]{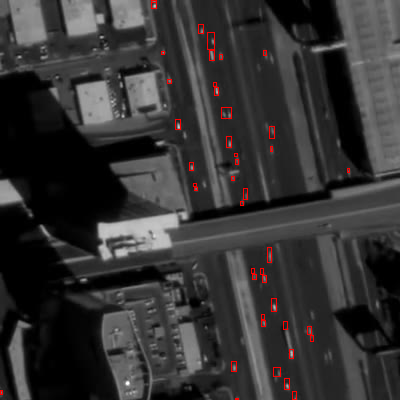}	& 
		\includegraphics[width=0.145 \linewidth]{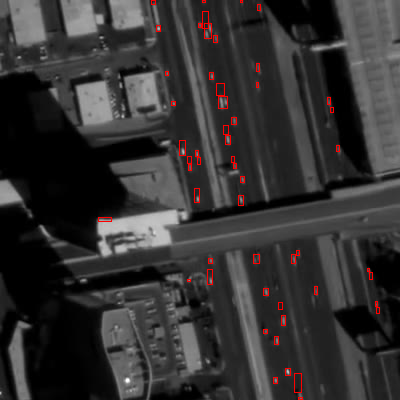}	&	
		\includegraphics[width=0.145 \linewidth]{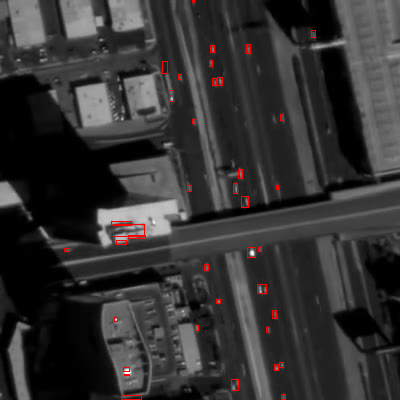} &
		\includegraphics[width=0.145 \linewidth]{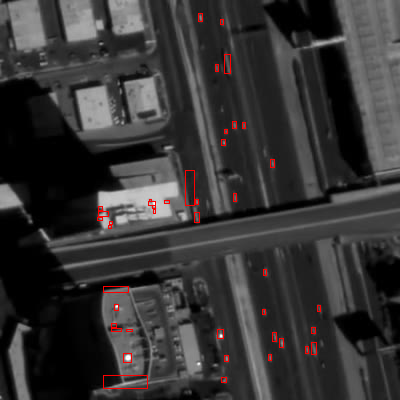} \\
		
		Frame-10	&	Frame-25	&	Frame-50	&	Frame-100	&	Frame-250 & Frame-500	 \\
	\end{tabular}
	\caption{Visualization of the estimated background and detection by O-LSD on Video 001. Images in the 1st row are the original input, and those in the 2nd and 3rd rows are the estimated background frames and detection results, respectively.} 
	\label{fig:detection_visualization_sq_1}
\end{figure*}

\begin{figure*}[t]
	\footnotesize
	\centering
	\begin{tabular}{cccc}
		\centering
		
		\includegraphics[width=0.21 \linewidth]{figs/det/olsd_sq_1/f-249}	&
		\includegraphics[width=0.21 \linewidth]{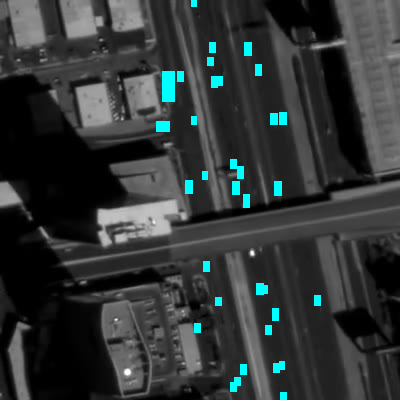}	&
		\includegraphics[width=0.21 \linewidth]{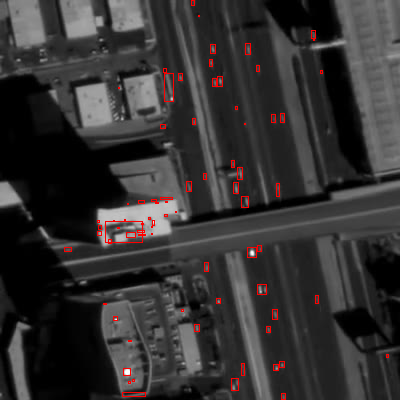}	&	
		\includegraphics[width=0.21 \linewidth]{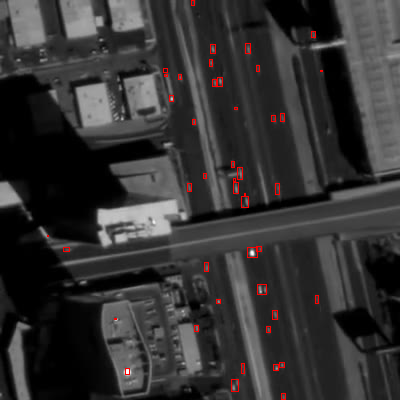}	\\	
		
		Input( Frame-250) & Ground Truth    & RPCA & LSD  \\
			
		\includegraphics[width=0.21 \linewidth]{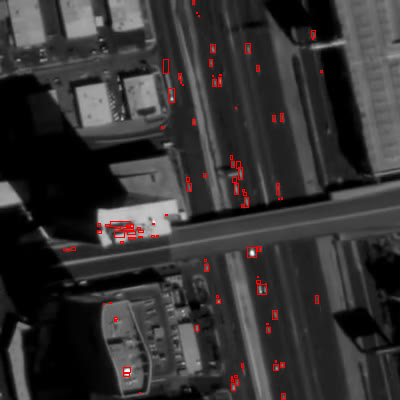}	&	
		\includegraphics[width=0.21 \linewidth]{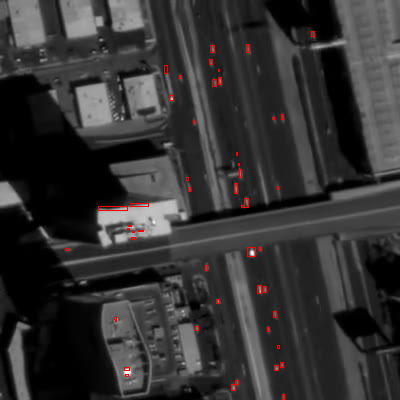}	& 
		\includegraphics[width=0.21 \linewidth]{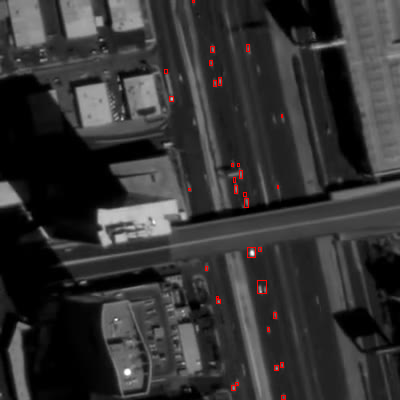}	&
		\includegraphics[width=0.21 \linewidth]{figs/det/olsd_sq_1/det-249}	 \\
		
		GRASTA & OR-PCA & GUSOS & \textbf{O-LSD}  \\

	\end{tabular}
	\caption{Detection results obtained by different algorithms.} 
	\label{fig:detection_visualization_methods_sq_1}
\end{figure*}

\section{Conclusion}
\label{sec:conclusion}

The main contribution of this paper is a effective algorithm, \textbf{O}nline \textbf{L}ow-rank and \textbf{S}tructured Sparse \textbf{D}ecomposition (O-LSD), which combines the stochastic optimization and structured sparsity penalty to improve online subspace estimation method for moving object detection in satellite videos. 
We elaborate the model of O-LSD and its optimization method that is proved to converge almost surely under mild condition.
The experiments on a dataset of two satellite videos validate the improvement of O-LSD to the existing state-of-the-art approaches.
With temporal down-sampling scheme, O-LSD also reduces the processing delay with almost unchanged performance.

\section*{Acknowledgment}

This work is partially supported by China Scholarship Council. The authors would like to thank Planet Team for providing the data in this research \cite{team2016planet}.

\appendices

\section{Technical Lemma}
\label{sec:lemma}

\begin{lem}[Danskin's Theorem from \cite{bertsekas1997nonlinear_optimization}]
	\label{lem:Danskin_theorem_subgradient}
	Let $\mathcal{C} \subset \mathbb{R}^{m}$ be a compact set.
	The function $\ell(\mathbf{x}, \mathbf{u}): \mathbb{R}^{n} \times \mathcal{C} \to \mathcal{R}$ is continuous, and $\ell(\cdot, \mathbf{u})$ is convex with regards to $\mathbf{x}$ for every $u \in \mathcal{C}$.
	Define $\ell(\mathbf{x}) = \min_{\mathbf{u} \in \mathcal{C}} \ell(\mathbf{x}, \mathbf{u}) $ and $\mathcal{C}(\mathbf{x}) = \{\mathbf{u}^{*} | \mathbf{u}^{*} = \argmin_{\mathbf{u}} \ell(\mathbf{x}, \mathbf{u}) \}$.

	If $\ell(\mathbf{x}, \mathbf{u})$ is differentiable with respect to $\mathbf{x}$ for all $\mathbf{u} \in \mathcal{C}$, and $\frac{\partial \ell(\mathbf{x}, \mathbf{u}) }{\partial \mathbf{x}} $ is continuous with respect to $\mathbf{u}$ for all $\mathbf{x}$, then the sub-gradient of $\ell(\mathbf{x})$ is given by 
	\begin{equation}
	\partial_{\mathbf{x}} \ell(\mathbf{x}) = \conv \{
	\frac{\partial \ell(\mathbf{x}, \mathbf{u})} {\partial \mathbf{x}} 
	| \mathbf{u} \in \mathcal{C}(\mathbf{x})
	\} 
	\end{equation}
	where $\conv\{\cdot\}$ indicates the convex hull operator. 
	
\end{lem}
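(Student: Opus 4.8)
The plan is to follow the classical route for Danskin-type identities: rather than attacking the subdifferential directly, I would first compute the one-sided directional derivatives of $\ell(\mathbf{x})$ and then recover $\partial_{\mathbf{x}}\ell(\mathbf{x})$ through the duality between sublinear functions and their generating convex sets. Since the statement is quoted from \cite{bertsekas1997nonlinear_optimization}, the argument mirrors the standard proof of Danskin's theorem specialized to the present form. As preliminaries, I would use compactness of $\mathcal{C}$ together with continuity of $\ell(\mathbf{x},\cdot)$ to guarantee that the minimum defining $\ell(\mathbf{x})$ is attained, so that $\mathcal{C}(\mathbf{x})$ is nonempty and compact for every $\mathbf{x}$, and I would record that the solution map $\mathbf{x}\mapsto\mathcal{C}(\mathbf{x})$ is upper semicontinuous (a Berge maximum theorem argument from continuity of $\ell$ and compactness of $\mathcal{C}$); this is the device that controls which minimizers survive under an infinitesimal perturbation of $\mathbf{x}$.

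The core step is the directional-derivative formula. Fixing a direction $\mathbf{y}$, I would analyze the quotient $\bigl(\ell(\mathbf{x}+h\mathbf{y})-\ell(\mathbf{x})\bigr)/h$ as $h\downarrow 0$ and show that it tends to $\min_{\mathbf{u}\in\mathcal{C}(\mathbf{x})}\nabla_{\mathbf{x}}\ell(\mathbf{x},\mathbf{u})^{T}\mathbf{y}$. One inequality is immediate: evaluating $\ell(\mathbf{x}+h\mathbf{y})\le\ell(\mathbf{x}+h\mathbf{y},\mathbf{u})$ at an arbitrary fixed minimizer $\mathbf{u}\in\mathcal{C}(\mathbf{x})$ and Taylor-expanding in $h$ bounds the quotient from above by $\nabla_{\mathbf{x}}\ell(\mathbf{x},\mathbf{u})^{T}\mathbf{y}+o(1)$. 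The reverse inequality is the delicate half: I would pick minimizers $\mathbf{u}_{h}\in\mathcal{C}(\mathbf{x}+h\mathbf{y})$, extract a convergent subsequence by compactness of $\mathcal{C}$, and use upper semicontinuity of the solution map to force the limit point into $\mathcal{C}(\mathbf{x})$, while continuity of $\nabla_{\mathbf{x}}\ell(\mathbf{x},\cdot)$ in $\mathbf{u}$ lets me pass the gradient through the limit.

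Finally, I would recognize the map $\mathbf{y}\mapsto\min_{\mathbf{u}\in\mathcal{C}(\mathbf{x})}\nabla_{\mathbf{x}}\ell(\mathbf{x},\mathbf{u})^{T}\mathbf{y}$ as a support function of the compact set $\{\nabla_{\mathbf{x}}\ell(\mathbf{x},\mathbf{u}):\mathbf{u}\in\mathcal{C}(\mathbf{x})\}$ (with the sign convention dictated by the minimization form), equivalently of its closed convex hull, and invoke the standard one-to-one correspondence between sublinear functions and compact convex sets to conclude $\partial_{\mathbf{x}}\ell(\mathbf{x})=\conv\{\nabla_{\mathbf{x}}\ell(\mathbf{x},\mathbf{u}):\mathbf{u}\in\mathcal{C}(\mathbf{x})\}$. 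I expect the main obstacle to be precisely the reverse inequality in the directional-derivative computation: justifying the interchange of the limit $h\downarrow 0$ with the inner minimization requires compactness of $\mathcal{C}$, continuity of the gradient in $\mathbf{u}$, and upper semicontinuity of $\mathcal{C}(\cdot)$ to be deployed simultaneously, and it is here that the hypotheses of the lemma genuinely carry the argument.
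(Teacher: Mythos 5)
The paper itself offers no proof of this lemma---it is quoted from the cited Bertsekas reference---so your proposal can only be judged against the classical textbook argument, which is indeed the route you chose. Your directional-derivative computation is correct and standard: the upper bound by fixing a minimizer $\mathbf{u}\in\mathcal{C}(\mathbf{x})$, and the reverse bound by taking $\mathbf{u}_{h}\in\mathcal{C}(\mathbf{x}+h\mathbf{y})$, extracting a convergent subsequence by compactness, and combining upper semicontinuity of the solution map with continuity of $\nabla_{\mathbf{x}}\ell(\mathbf{x},\cdot)$. (For the reverse bound, the convexity gradient inequality $\ell(\mathbf{x}+h\mathbf{y},\mathbf{u}_{h})\ge\ell(\mathbf{x},\mathbf{u}_{h})+h\nabla_{\mathbf{x}}\ell(\mathbf{x},\mathbf{u}_{h})^{T}\mathbf{y}$ is cleaner than a Taylor expansion, as it sidesteps any uniformity-in-$\mathbf{u}$ issue, but your version can be repaired.)

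The genuine gap is your final step. For the \emph{min} form stated here, the directional derivative you obtain, $\mathbf{y}\mapsto\min_{\mathbf{u}\in\mathcal{C}(\mathbf{x})}\nabla_{\mathbf{x}}\ell(\mathbf{x},\mathbf{u})^{T}\mathbf{y}$, is a pointwise minimum of linear functions, hence concave and positively homogeneous: it is \emph{superlinear}, not sublinear. The one-to-one correspondence you invoke is between \emph{sublinear} functions and compact convex sets (support functions are suprema of linear forms), so it does not apply, and no ``sign convention'' fixes this. The obstruction is structural: a pointwise minimum of convex functions need not be convex, so $\ell(\mathbf{x})$ is in general non-convex and its convex-analytic subdifferential can be empty. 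Indeed, if you define $\partial\ell(\mathbf{x})=\{\mathbf{g}:\mathbf{g}^{T}\mathbf{y}\le\ell^{\prime}(\mathbf{x};\mathbf{y})\ \forall\mathbf{y}\}$, then $\mathbf{g}^{T}\mathbf{y}\le\nabla_{\mathbf{x}}\ell(\mathbf{x},\mathbf{u})^{T}\mathbf{y}$ for all $\mathbf{y}$ (apply it also to $-\mathbf{y}$) forces $\mathbf{g}=\nabla_{\mathbf{x}}\ell(\mathbf{x},\mathbf{u})$ for every $\mathbf{u}\in\mathcal{C}(\mathbf{x})$, so this set is empty unless all active gradients coincide---contradicting the convex-hull formula you are trying to prove. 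The theorem in Bertsekas is actually stated for $\max_{\mathbf{u}\in\mathcal{C}}$ of functions convex in $\mathbf{x}$, where the value function is convex, the directional derivative is a maximum of linear forms (sublinear), and your support-function argument then works verbatim; the paper's min-form statement is a misquotation that your proof inherits. To establish the lemma as stated (and as the paper uses it, for $\ell(\mathbf{d},\mathbf{L})=\min_{\mathbf{r},\mathbf{s}}\hat{\ell}$), you would have to interpret $\partial_{\mathbf{x}}\ell$ as the Clarke generalized gradient and use the corresponding calculus for min-functions---or observe that what the paper actually needs downstream is only boundedness of the set $\conv\{\nabla_{\mathbf{x}}\ell(\mathbf{x},\mathbf{u})\mid\mathbf{u}\in\mathcal{C}(\mathbf{x})\}$, which your directional-derivative analysis already delivers without any subdifferential identification.
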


\begin{lem}[Lemma 2.6 from \cite{shalev2012online_convex_optimization}]
	\label{lem:bounded_gradient_Lipschitz}
	Let $f: \mathbf{X} \to \mathbb{R}$ be a convex function.
	Then, $f$ is $L$-Lipschitz over $\mathbf{X}$ with respect to a norm $\left\Vert \cdot \right\Vert$ if and only if for all $\forall \mathbf{w} \in \mathbf{X}$ and $\mathbf{z} \in \partial f(\mathbf{w})$ we have that $\left\Vert \mathbf{z} \right\Vert_{*} \le L$, where $\left\Vert \cdot \right\Vert_{*}$ is the dual norm.
\end{lem}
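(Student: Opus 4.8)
The plan is to establish the two implications of the equivalence independently, relying only on two elementary facts: the subgradient inequality $f(\mathbf{v}) \ge f(\mathbf{w}) + \langle \mathbf{z}, \mathbf{v} - \mathbf{w} \rangle$, valid for every $\mathbf{z} \in \partial f(\mathbf{w})$ and every $\mathbf{v} \in \mathbf{X}$, and the variational definition of the dual norm $\left\Vert \mathbf{z} \right\Vert_{*} = \sup_{\left\Vert \mathbf{x} \right\Vert \le 1} \langle \mathbf{z}, \mathbf{x} \rangle$, which immediately yields the generalized Cauchy--Schwarz bound $\langle \mathbf{z}, \mathbf{x} \rangle \le \left\Vert \mathbf{z} \right\Vert_{*} \left\Vert \mathbf{x} \right\Vert$. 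No convexity machinery beyond the existence and defining property of the subgradient is required.

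For the implication that bounded subgradients imply the Lipschitz bound, I would fix arbitrary $\mathbf{u}, \mathbf{v} \in \mathbf{X}$ and choose some $\mathbf{z} \in \partial f(\mathbf{u})$. Rearranging the subgradient inequality gives $f(\mathbf{u}) - f(\mathbf{v}) \le \langle \mathbf{z}, \mathbf{u} - \mathbf{v} \rangle$, and then the generalized Cauchy--Schwarz inequality together with the hypothesis $\left\Vert \mathbf{z} \right\Vert_{*} \le L$ bounds the right-hand side by $L \left\Vert \mathbf{u} - \mathbf{v} \right\Vert$. Swapping the roles of $\mathbf{u}$ and $\mathbf{v}$ and invoking a subgradient at $\mathbf{v}$ yields the matching lower bound, so $\left| f(\mathbf{u}) - f(\mathbf{v}) \right| \le L \left\Vert \mathbf{u} - \mathbf{v} \right\Vert$, that is, $f$ is $L$-Lipschitz.

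For the converse, I would fix $\mathbf{w} \in \mathbf{X}$ and $\mathbf{z} \in \partial f(\mathbf{w})$ and probe the dual norm directionally: for any unit vector $\mathbf{x}$ with $\left\Vert \mathbf{x} \right\Vert = 1$ and any small $t > 0$, applying the subgradient inequality at $\mathbf{v} = \mathbf{w} + t\mathbf{x}$ gives $t \langle \mathbf{z}, \mathbf{x} \rangle \le f(\mathbf{w} + t\mathbf{x}) - f(\mathbf{w}) \le L \left\Vert t\mathbf{x} \right\Vert = Lt$, whence $\langle \mathbf{z}, \mathbf{x} \rangle \le L$; taking the supremum over all unit $\mathbf{x}$ delivers $\left\Vert \mathbf{z} \right\Vert_{*} \le L$. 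The main technical point --- and the only place where the geometry of the domain $\mathbf{X}$ intervenes --- is guaranteeing that the probe point $\mathbf{w} + t\mathbf{x}$ stays in $\mathbf{X}$ and that $\partial f(\mathbf{w})$ is nonempty. I would dispatch this by working on the (relative) interior of $\mathbf{X}$, where a finite convex function is subdifferentiable and admits displacements in every admissible direction, and then extend the Lipschitz conclusion to the closure by continuity. Handling the possible non-smooth boundary behavior cleanly is the part that warrants the most care, whereas the two inequality chains themselves are routine.
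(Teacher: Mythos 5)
The paper does not prove this lemma at all --- it is imported verbatim from \cite{shalev2012online_convex_optimization} as a known tool --- so there is no internal proof to compare against; judged on its own, your argument is correct and is essentially the proof given in that cited monograph: the subgradient inequality plus the generalized Cauchy--Schwarz bound $\langle \mathbf{z}, \mathbf{x}\rangle \le \left\Vert \mathbf{z}\right\Vert_{*}\left\Vert \mathbf{x}\right\Vert$ for the sufficiency direction, and a directional probe of the dual norm for the necessity direction. One remark on the point you yourself flag as delicate: the domain issue in the converse is not a mere technicality. If the subdifferential is taken relative to $\mathbf{X}$, the ``Lipschitz $\Rightarrow$ bounded subgradients'' implication genuinely fails at boundary points: take $f \equiv 0$ on $\mathbf{X} = [0,1]$, which is $0$-Lipschitz, yet $\partial f(0) = (-\infty, 0]$ contains elements of arbitrarily large dual norm. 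So your restriction to the (relative) interior, with the Lipschitz conclusion extended to the closure by continuity, is exactly the right repair rather than optional care, and the lemma as stated should be read with $\mathbf{X}$ open or with $f$ defined on a neighborhood of $\mathbf{X}$. This reading is consistent with how the paper actually invokes the lemma (in Propositions A.2--A.4 of the appendix), since there the functions $g_{t}$, $\ell(\mathbf{d},\cdot)$ and $f_{t}$ are defined on the whole matrix space and only evaluated on a compact subset, so no boundary pathology arises.
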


\begin{lem}[Sufficient condition of convergence for a stochastic optimization from \cite{shalev2012online_convex_optimization}]
	\label{lem:convergence_g_t}
	Let $(\Omega, \mathcal{F}, P)$ be a measurable probability space, $\mu_{t}$, for $t > 0$, be the realization of a stochastic process and $\mathcal{F}_{t}$ be the filtration by the past information at time $t$. 
	Let 
	\begin{equation}
	\begin{aligned}
	\delta_{t} = 
	\begin{cases}
	1, &  \text{if } \mathbb{E}[u_{t+1} - u_{t} | \mathcal{F}_t] > 0, \\
	0, &  \text{otherwise}. 
	\end{cases}
	\end{aligned}
	\end{equation}
	
	If for all $t$, $\mu_t \ge 0$ and $\sum_{t=1}^{\infty} \mathbb{E}[\delta_t(u_{t+1} - u_{t}) ] < \infty $, then $\mu_t$ is a quasi-martingale and converges almost surely. Moreover,
	\begin{equation}
	\sum_{t=1}^{\infty} \left| \mathbb{E}[u_{t+1} - u_{t} | \mathcal{F}_t] \right| < + \infty \text{ a.s. }
	\end{equation}
	
\end{lem}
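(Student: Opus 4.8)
The plan is to prove this the classical way for quasi-martingales: form the Doob decomposition of $u_t$ and then invoke Doob's convergence theorem for $L^1$-bounded martingales. The crucial preliminary observation is that $\delta_t$ is $\mathcal{F}_t$-measurable, being a deterministic function of $\mathbb{E}[u_{t+1}-u_t\mid\mathcal{F}_t]$, which is itself $\mathcal{F}_t$-measurable. Hence $\delta_t$ pulls out of the conditional expectation, so that
\begin{equation}
\mathbb{E}[\delta_t(u_{t+1}-u_t)\mid\mathcal{F}_t] = \delta_t\,\mathbb{E}[u_{t+1}-u_t\mid\mathcal{F}_t] = \big(\mathbb{E}[u_{t+1}-u_t\mid\mathcal{F}_t]\big)^{+}.
\end{equation}
Taking unconditional expectations, the hypothesis $\sum_t \mathbb{E}[\delta_t(u_{t+1}-u_t)] < \infty$ is then precisely the statement that the expected total positive variation of the predictable drift is finite, which is the quantity I will need to control.

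First I would write the Doob decomposition $u_t = M_t + A_t$, where $M_t$ is a martingale with $M_1 = u_1$ and $A_t$ is the predictable compensator with $A_1 = 0$ and $A_{t+1}-A_t = \mathbb{E}[u_{t+1}-u_t\mid\mathcal{F}_t]$. Splitting each increment into its positive and negative parts produces two non-decreasing predictable processes $A_t^{+}$ and $A_t^{-}$ with $A_t = A_t^{+} - A_t^{-}$. By the identity above, $\mathbb{E}[A_\infty^{+}] = \sum_t \mathbb{E}[(\mathbb{E}[u_{t+1}-u_t\mid\mathcal{F}_t])^{+}] < \infty$, so monotone convergence gives $A_\infty^{+} < \infty$ almost surely. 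To control the negative part I would use the only other structural hypothesis, non-negativity of $u_t$: taking expectations in the decomposition yields $\mathbb{E}[u_t] = \mathbb{E}[u_1] + \mathbb{E}[A_t^{+}] - \mathbb{E}[A_t^{-}] \ge 0$, whence $\mathbb{E}[A_t^{-}] \le \mathbb{E}[u_1] + \mathbb{E}[A_\infty^{+}]$ uniformly in $t$, and monotone convergence again forces $A_\infty^{-} < \infty$ almost surely. Consequently the compensator converges almost surely to a finite limit, and at the same time $\sum_t |\mathbb{E}[u_{t+1}-u_t\mid\mathcal{F}_t]| = A_\infty^{+} + A_\infty^{-} < \infty$ almost surely, which establishes the ``moreover'' assertion.

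It then remains to handle the martingale part. The $L^1$ bounds just obtained give $\sup_t \mathbb{E}[|M_t|] \le \sup_t\big(\mathbb{E}[u_t] + \mathbb{E}[A_t^{+}] + \mathbb{E}[A_t^{-}]\big) < \infty$, so $M_t$ is $L^1$-bounded and Doob's martingale convergence theorem gives almost-sure convergence of $M_t$. Adding the two limits, $u_t = M_t + A_t$ converges almost surely, and the finiteness of the drift's total variation is exactly the quasi-martingale property. The main obstacle I anticipate is not any individual estimate but the bookkeeping linking the hypothesis to the variation bounds: one must verify carefully that $\delta_t$ is predictable so that $\delta_t(u_{t+1}-u_t)$ integrates to the positive part of the drift, and then lean on the sign constraint $u_t \ge 0$ to bound the negative variation. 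Once both $A_\infty^{+}$ and $A_\infty^{-}$ are shown finite, the remainder is a routine application of the Doob decomposition and $L^1$ martingale convergence.
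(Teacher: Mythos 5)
Your proof is correct, but note that there is nothing in the paper to compare it against: this lemma is stated in Appendix A as an imported technical tool (cited from the literature) and is used as a black box in the proof of \Cref{thm:surrogate_function_convergence}; the paper never proves it. What you have written is the classical quasi-martingale convergence argument, and it holds up. The two pivots are exactly right: (i) $\delta_t$ is $\mathcal{F}_t$-measurable, being the indicator of the event $\{\mathbb{E}[u_{t+1}-u_t \mid \mathcal{F}_t] > 0\} \in \mathcal{F}_t$, so the hypothesis $\sum_t \mathbb{E}[\delta_t(u_{t+1}-u_t)] < \infty$ is precisely finiteness of $\mathbb{E}[A_\infty^{+}]$, the expected positive variation of the Doob compensator; and (ii) non-negativity of $u_t$ combined with $\mathbb{E}[M_t] = \mathbb{E}[u_1]$ turns this one-sided control into the two-sided bound $\mathbb{E}[A_t^{-}] \le \mathbb{E}[u_1] + \mathbb{E}[A_\infty^{+}]$, which simultaneously delivers the a.s.\ convergence of the compensator, the ``moreover'' assertion $A_\infty^{+} + A_\infty^{-} < \infty$ a.s., the quasi-martingale property in expectation, and the $L^1$-boundedness of the martingale part needed for Doob's convergence theorem. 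If this were written out in the paper, only two small points should be made explicit: $u_t$ must be assumed integrable and adapted for the conditional expectations and the Doob decomposition to be defined (implicit in the lemma's statement), and the interchanges $\mathbb{E}[A_\infty^{\pm}] = \sum_t \mathbb{E}\bigl[(\mathbb{E}[u_{t+1}-u_t \mid \mathcal{F}_t])^{\pm}\bigr]$ are justified by monotone convergence. Your argument has the side benefit of making the appendix self-contained and of isolating exactly where each hypothesis ($u_t \ge 0$ and summability of the positive drift) enters.
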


\begin{lem}[Corollary of Donsker theorem from \cite{bensoussan2011Donsker_theorem}]
	\label{lem:Donsker_theorem}
	Let $\mathbf{F} = \{ f_{\theta}: \mathcal{X} \to \mathbb{R}, \theta \in \Theta \}$ be a set of measurable functions indexed by a bounded subset $\Theta $ of $\mathbb{R}^{d}$. 
	Suppose that there exists a constant $K$ such that 
	\begin{equation}
	\begin{aligned}
	\left| f_{\theta_{1}} - f_{\theta_{2}} \right| \le K \left\Vert \theta_{1} - \theta_{2} \right\Vert_{2},
	\end{aligned}
	\end{equation}
	for every $\theta_{1}$ and $\theta_{2}$ in $\Theta$ and $\mathbf{x}$ in $\mathcal{X}$. 
	Then $F$ is P-Donsker.
	For any $f$ in $F$, let us define $\mathbb{P}_{n} f$, $\mathbb{P} f$ and $\mathbb{G}_{n} f$ as 
	\begin{equation}
	\begin{aligned}
	& \mathbb{P}_{n} f = \frac{1}{n} \sum_{i=1}^{n} f(\mathbf{X}_{i}), \\
	& \mathbb{P} f = \mathbb{E}[f(X)], \\
	& \mathbb{G}_{n} f = \sqrt{n} (\mathbb{P}_{n} f - \mathbb{P} f ).
	\end{aligned}
	\end{equation}
	Let us also suppose that for all $\mathbb{P} f^{2} \le \delta^{2}$ and $\left\Vert f \right\Vert_{\infty} \le M$
	and that the random variables $\mathbf{X}_{1}, \mathbf{X}_{2}, \cdots$ are Borel-measurable. 
	Then, we have
	\begin{equation}
	\mathbb{E} \left| \mathbb{G}_{n} f \right|_{F}= O(1),
	\end{equation}
	where $\left| \mathbb{G}_{n} f \right|_{F}= \sup_{f \in F} \left| \mathbb{G}_{n} f \right|$.
	
\end{lem}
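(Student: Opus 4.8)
The plan is to derive the statement from two standard pillars of empirical process theory: a bracketing-entropy estimate obtained from the Lipschitz parametrization, and a bracketing maximal inequality that converts finite entropy into the uniform bound $\mathbb{E}\|\mathbb{G}_{n}f\|_{F}=O(1)$. I would structure the argument around covering the finite-dimensional index set $\Theta$ first, then transferring that covering to genuine $L_{2}(P)$ brackets for $\mathbf{F}$.

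\textbf{Step 1 (from a net of $\Theta$ to brackets for $\mathbf{F}$).} Since $\Theta$ is a bounded subset of $\mathbb{R}^{d}$, it admits an $\epsilon$-net of cardinality at most $(C/\epsilon)^{d}$, where $C$ depends only on the diameter of $\Theta$. For any $\theta$ lying within Euclidean distance $\epsilon/K$ of a net point $\theta_{j}$, the hypothesis $|f_{\theta}-f_{\theta_{j}}|\le K\|\theta-\theta_{j}\|_{2}$ gives the pointwise bound $|f_{\theta}-f_{\theta_{j}}|\le\epsilon$ uniformly in $\mathbf{x}$; hence the pairs $[f_{\theta_{j}}-\epsilon,\,f_{\theta_{j}}+\epsilon]$ form brackets that cover $\mathbf{F}$. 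Because $P$ is a probability measure, each such bracket has $L_{2}(P)$-width at most $2\epsilon$, so $N_{[]}(2\epsilon,\mathbf{F},L_{2}(P))\le(CK/\epsilon)^{d}$, and taking logarithms yields $\log N_{[]}(\epsilon,\mathbf{F},L_{2}(P))\le d\log(C'K/\epsilon)$.

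\textbf{Step 2 (finite entropy integral and the Donsker property).} From the previous bound, $\sqrt{\log N_{[]}(\epsilon,\mathbf{F},L_{2}(P))}=O\big(\sqrt{\log(1/\epsilon)}\big)$, and since $\int_{0}^{\delta}\sqrt{\log(1/\epsilon)}\,d\epsilon<\infty$, the bracketing integral $J_{[]}(\delta,\mathbf{F},L_{2}(P))=\int_{0}^{\delta}\sqrt{1+\log N_{[]}(\epsilon,\mathbf{F},L_{2}(P))}\,d\epsilon$ is finite for every $\delta$. The finiteness of this integral, together with the integrable envelope $\|f\|_{\infty}\le M$, is exactly the hypothesis of the bracketing central limit theorem, which certifies that $\mathbf{F}$ is P-Donsker. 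For the quantitative claim I would then invoke the bracketing maximal inequality, using the variance control $\mathbb{P}f^{2}\le\delta^{2}$ and the envelope bound $M$, which reads (up to a universal constant)
\begin{equation*}
\mathbb{E}\left| \mathbb{G}_{n}f\right|_{F}\lesssim J_{[]}(\delta,\mathbf{F},L_{2}(P))\left(1+\frac{M\,J_{[]}(\delta,\mathbf{F},L_{2}(P))}{\delta^{2}\sqrt{n}}\right).
\end{equation*}
The second term inside the parenthesis vanishes as $n\to\infty$, and the leading factor is the finite constant established in Step 2, so $\mathbb{E}\left|\mathbb{G}_{n}f\right|_{F}=O(1)$.

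The main obstacle I expect is the transfer step: turning the elementary covering of the finite-dimensional set $\Theta$ into honest $L_{2}(P)$ brackets for $\mathbf{F}$ at the correct polynomial rate, and verifying integrability of $\sqrt{\log(1/\epsilon)}$ near the origin so that the entropy integral genuinely converges. The remaining effort is bookkeeping in the maximal inequality, ensuring that the $1/\sqrt{n}$ term, scaled by the envelope $M$ and the variance proxy $\delta^{2}$, does not spoil the $O(1)$ conclusion.
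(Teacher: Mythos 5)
Your proposal is correct, but note that the paper itself contains no proof of this lemma: it is stated in the appendix as an imported technical tool, with the proof deferred entirely to the cited literature (the chain of citations leads back to van der Vaart's \emph{Asymptotic Statistics}, Example 19.7 together with Lemma 19.36, which is also the route taken in the online dictionary learning paper of Mairal et al.\ that this convergence analysis follows). What you have written is precisely that standard argument: the $\epsilon/K$-net of the bounded set $\Theta$, combined with the uniform Lipschitz hypothesis, produces pointwise brackets $[f_{\theta_j}-\epsilon,\,f_{\theta_j}+\epsilon]$ of $L_{2}(P)$-width $2\epsilon$, hence $\log N_{[]}(\epsilon,\mathbf{F},L_{2}(P))\le d\log(C'K/\epsilon)$; integrability of $\sqrt{\log(1/\epsilon)}$ near the origin makes the bracketing integral finite, which yields the Donsker property via the bracketing central limit theorem; and the maximal inequality you invoke, with variance proxy $\delta^{2}$ and envelope $M$, is exactly van der Vaart's Lemma 19.36, whose parenthetical factor is bounded (indeed $1+o(1)$ as $n\to\infty$), giving $\mathbb{E}\left|\mathbb{G}_{n}f\right|_{F}=O(1)$. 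You were also right to recognize that the Donsker property alone does not deliver the expectation bound and that the quantitative maximal inequality is the step that does the work.

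Two small points are worth making explicit if you flesh this out. First, the supremum over the uncountable class $\mathbf{F}$ need not be measurable in general; one either works with outer expectations or observes that the Lipschitz dependence on $\theta$ makes the class pointwise separable (a countable dense subset of $\Theta$ realizes the same supremum), which is what the Borel-measurability hypothesis on $\mathbf{X}_{1},\mathbf{X}_{2},\cdots$ is implicitly for. Second, your closing remark that the correction term vanishes as $n\to\infty$ is slightly stronger than needed: since $M J_{[]}(\delta,\mathbf{F},L_{2}(P))/(\delta^{2}\sqrt{n})$ is bounded uniformly in $n$, the $O(1)$ conclusion holds for every finite $n$, which is the form actually used in the proof of the paper's Theorem 3.1, where the bound $\mathbb{E}[\sqrt{t}\left\Vert f-f_{t}\right\Vert_{\infty}]\le\kappa$ must hold along the whole sequence.
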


\begin{lem}[Positive converging sums from \cite{mairal2010online_dictionary_learning}]
	\label{lem:positive_converging_sums}
	
	Let $a_{n}$,$b_{n}$ be two real sequences such that for all $n$,$a_{n} \ge 0$,$b_{n} \ge 0$, $\sum_{n=1}^{\infty} a_{n} = \infty$, $\sum_{n=1}^{\infty} a_{n} b_{n} < \infty$, $\exists K > 0 \text{ s.t.} \left| b_{n+1} - b_{n} \right| < K a_{n}$. 
	Then, $\lim_{n \to \infty} b_{n} = 0$.
	
\end{lem}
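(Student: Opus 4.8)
The plan is to argue by contradiction, exploiting the tension between $\sum_n a_n = \infty$ and $\sum_n a_n b_n < \infty$: the second sum can only be finite if $b_n$ is small on average along the weights $a_n$, while the step bound $|b_{n+1}-b_n| < K a_n$ forbids $b_n$ from spiking up and dropping back without spending a proportional amount of $a$-mass. Since $b_n \ge 0$, it suffices to prove $\limsup_n b_n = 0$. First I would record the easy half, $\liminf_n b_n = 0$: if instead $\liminf_n b_n = 2\delta > 0$ then $b_n \ge \delta$ for all $n \ge N$, whence $\sum_{n \ge N} a_n b_n \ge \delta \sum_{n \ge N} a_n = \infty$, contradicting $\sum_n a_n b_n < \infty$.

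For the main step, suppose toward a contradiction that $\limsup_n b_n = 2\epsilon > 0$. Combined with $\liminf_n b_n = 0$, the sequence lies below $\epsilon/2$ infinitely often and rises above $\epsilon$ infinitely often, so I can extract a strictly increasing family of indices $u_1 < v_1 < u_2 < v_2 < \cdots$ in which $b_{u_k} \ge \epsilon$ and $v_k$ is the \emph{first} index after $u_k$ with $b_{v_k} \le \epsilon/2$ (and $u_{k+1}$ the next index after $v_k$ with $b \ge \epsilon$). This first-passage choice is what makes the estimate clean: on the whole half-open block $u_k \le n < v_k$ one has $b_n > \epsilon/2$, a uniform lower bound with no bad endpoint. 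Telescoping the drop and using the step bound,
\begin{equation*}
\tfrac{\epsilon}{2} \le b_{u_k} - b_{v_k} = \sum_{n=u_k}^{v_k-1}(b_n - b_{n+1}) \le \sum_{n=u_k}^{v_k-1}\left| b_{n+1}-b_n \right| < K \sum_{n=u_k}^{v_k-1} a_n,
\end{equation*}
so each block carries $a$-mass $\sum_{n=u_k}^{v_k-1} a_n > \epsilon/(2K)$. Pairing this with the pointwise bound $b_n > \epsilon/2$ then yields $\sum_{n=u_k}^{v_k-1} a_n b_n > \epsilon^2/(4K)$ for every $k$.

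Because the blocks $[u_k, v_k-1]$ are pairwise disjoint, summing over $k$ gives $\sum_n a_n b_n \ge \sum_k \epsilon^2/(4K) = \infty$, contradicting the hypothesis; hence $\limsup_n b_n = 0$ and, since $b_n \ge 0$, $b_n \to 0$. I expect the only delicate point to be the construction of the descent blocks: I must verify that both events $\{b_n \ge \epsilon\}$ and $\{b_n \le \epsilon/2\}$ occur infinitely often (guaranteed by $\limsup_n b_n = 2\epsilon > \epsilon$ and $\liminf_n b_n = 0 < \epsilon/2$) so that the alternating indices are well defined and strictly increasing, and that taking $v_k$ as the first down-crossing legitimately forces $b_n > \epsilon/2$ throughout $[u_k, v_k-1]$. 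Everything else is a routine telescoping estimate.
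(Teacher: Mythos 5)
Your proof is correct: the easy half ($\liminf_n b_n = 0$ from $\sum_n a_n = \infty$), the first-passage construction of disjoint descent blocks $[u_k, v_k-1]$ on which $b_n > \epsilon/2$, the telescoping bound giving each block $a$-mass at least $\epsilon/(2K)$, and the resulting divergence of $\sum_n a_n b_n$ all hold up, yielding the desired contradiction. Note that the paper itself contains no proof of this statement --- it is imported verbatim as a technical tool from the cited reference \cite{mairal2010online_dictionary_learning} --- so there is nothing in-paper to compare against; your contradiction argument via up/down-crossing blocks is the standard proof of this classical lemma and is essentially the argument found in that source.
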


\section{Proposition}
\label{sec:prop}

\begin{prop}
	\label{prop:uniformly_bounded}
	Assume $\mathbf{d} \in \mathbf{D}$ is uniformly bounded, and $(\mathbf{r}^{*}, \mathbf{s}^{*})$ is the minimizers of the reconstruction cost function $\hat{\ell}(\mathbf{d}, \mathbf{L}, \mathbf{r}, \mathbf{s})$ obtained by \Cref{alg:frame_r_s}. Then,
	\begin{enumerate}
		\item $\mathbf{r}^{*}$ and $\mathbf{s}^{*}$ is uniformly bounded;
		\item $\frac{1}{t} \mathbf{A}_t $ and $\frac{1}{t} \mathbf{B}_t$ is uniformly bounded;
		\item $\mathbf{L}_t$ is supported by a compact subset $\mathcal{L}$,
	\end{enumerate}
\end{prop}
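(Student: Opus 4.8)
The plan is to control every quantity through the optimal value of the reconstruction cost $\hat{\ell}$, which is itself dominated by the trivial feasible choice $(\mathbf{r},\mathbf{s})=(\mathbf{0},\mathbf{0})$. For claim (1) I would start from the fact that $(\mathbf{r}^{*},\mathbf{s}^{*})$ minimizes $\hat{\ell}(\mathbf{d},\mathbf{L},\cdot,\cdot)$ and that all three summands of $\hat{\ell}$ are nonnegative, so that
\[
\frac{\lambda_{1}}{2}\left\Vert \mathbf{r}^{*}\right\Vert_{2}^{2}
+\lambda_{2}\left\Vert \mathbf{s}^{*}\right\Vert_{\ell_{1}/\ell_{\infty}}
\le \hat{\ell}(\mathbf{d},\mathbf{L},\mathbf{r}^{*},\mathbf{s}^{*})
\le \hat{\ell}(\mathbf{d},\mathbf{L},\mathbf{0},\mathbf{0})
=\tfrac{1}{2}\left\Vert \mathbf{d}\right\Vert_{2}^{2}.
\]
Since the data are uniformly bounded, say $\left\Vert \mathbf{d}\right\Vert_{2}\le\beta$, this yields $\left\Vert \mathbf{r}^{*}\right\Vert_{2}\le\beta/\sqrt{\lambda_{1}}$ and $\left\Vert \mathbf{s}^{*}\right\Vert_{\ell_{1}/\ell_{\infty}}\le\beta^{2}/(2\lambda_{2})$, uniformly in $t$ and in $\mathbf{L}$. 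Because the $3\times 3$ grid of groups covers every pixel, $\left\Vert\cdot\right\Vert_{\ell_{1}/\ell_{\infty}}$ is a genuine norm on $\mathbb{R}^{p}$ and hence equivalent to $\left\Vert\cdot\right\Vert_{2}$; this converts the bound on $\left\Vert \mathbf{s}^{*}\right\Vert_{\ell_{1}/\ell_{\infty}}$ into a uniform bound $\left\Vert \mathbf{s}^{*}\right\Vert_{2}\le S$, which settles (1).

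Claim (2) then follows by averaging. Unrolling the recursion \Cref{eq:A_B} gives $\mathbf{A}_{t}=\sum_{i=1}^{t}\mathbf{r}_{i}\mathbf{r}_{i}^{T}$ and $\mathbf{B}_{t}=\sum_{i=1}^{t}(\mathbf{d}_{i}-\mathbf{s}_{i})\mathbf{r}_{i}^{T}$, so subadditivity of $\left\Vert\cdot\right\Vert_{F}$ together with (1) gives $\left\Vert \tfrac{1}{t}\mathbf{A}_{t}\right\Vert_{F}\le\tfrac{1}{t}\sum_{i}\left\Vert \mathbf{r}_{i}\right\Vert_{2}^{2}\le R^{2}$ and $\left\Vert \tfrac{1}{t}\mathbf{B}_{t}\right\Vert_{F}\le\tfrac{1}{t}\sum_{i}\left\Vert \mathbf{d}_{i}-\mathbf{s}_{i}\right\Vert_{2}\left\Vert \mathbf{r}_{i}\right\Vert_{2}\le(\beta+S)R$, where $R$ and $S$ are the uniform bounds from (1). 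This is exactly (2), and it provides the constants needed in the last step.

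For (3) I would use that $\mathbf{L}_{t}$ minimizes the quadratic surrogate \Cref{eq:sub_problem_L}, whose first-order condition gives the closed form $\mathbf{L}_{t}=\mathbf{B}_{t}(\lambda_{1}\mathbf{I}+\mathbf{A}_{t})^{-1}$. The difficulty is that the only coercivity acting uniformly on $\mathbf{L}$ comes from the regularizer, whose effective weight $\lambda_{1}/t$ vanishes, so the crude estimate $\left\Vert \mathbf{L}_{t}\right\Vert_{F}\le\lambda_{1}^{-1}\left\Vert \mathbf{B}_{t}\right\Vert_{F}$ only yields $O(t)$ growth. To do better I would exploit the coupling between the two accumulators: writing $V:=\beta+S$, a double Cauchy--Schwarz argument gives $\mathbf{B}_{t}^{T}\mathbf{B}_{t}\preceq\big(\sum_{i}\left\Vert \mathbf{d}_{i}-\mathbf{s}_{i}\right\Vert_{2}^{2}\big)\mathbf{A}_{t}\preceq tV^{2}\mathbf{A}_{t}$, equivalently $\bar{\mathbf{B}}_{t}^{T}\bar{\mathbf{B}}_{t}\preceq V^{2}\bar{\mathbf{A}}_{t}$ for the normalized accumulators, so that $\mathbf{B}_{t}$ is small precisely in the directions where $\mathbf{A}_{t}$ is small. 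Diagonalizing $\bar{\mathbf{A}}_{t}$ with eigenvalues $\mu_{j}$, the contribution of each eigendirection is bounded by $V^{2}\mu_{j}/(\lambda_{1}/t+\mu_{j})^{2}$.

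The main obstacle is exactly the intermediate regime $\mu_{j}\approx\lambda_{1}/t$, where this quantity is of order $t$: the coupling controls $\mathbf{L}_{t}$ in the well-excited directions but not in the nearly unexcited ones, and indeed a fixed input sequence with $\left\Vert\mathbf{r}_{i}\right\Vert_{2}\to0$ slowly can drive $\left\Vert\mathbf{L}_{t}\right\Vert_{F}\to\infty$. I therefore expect the genuinely uniform (compact) bound to require the standard well-conditioning condition $\sigma_{\min}(\tfrac{1}{t}\mathbf{A}_{t})\ge\kappa>0$ used in \cite{mairal2010online_dictionary_learning}, under which $\left\Vert \mathbf{L}_{t}\right\Vert_{F}\le\left\Vert\bar{\mathbf{B}}_{t}\right\Vert_{F}/\kappa\le(\beta+S)R/\kappa$, so that $\mathbf{L}_{t}$ lives in the compact ball $\mathcal{L}=\{\mathbf{L}:\left\Vert \mathbf{L}\right\Vert_{F}\le(\beta+S)R/\kappa\}$. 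Establishing, or explicitly assuming, this spectral lower bound is the crux of (3), and is where I would concentrate the argument.
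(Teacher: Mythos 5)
Your treatment of claims (1) and (2) is correct and follows essentially the paper's route. The paper compares against the feasible point $(\mathbf{0},\mathbf{d})$ rather than your $(\mathbf{0},\mathbf{0})$, obtaining $\left\Vert \mathbf{r}^{*}\right\Vert_{2}^{2}\le \frac{2\lambda_{2}}{\lambda_{1}}\left\Vert \mathbf{d}\right\Vert_{\ell_{1}/\ell_{\infty}}$ and $\left\Vert \mathbf{s}^{*}\right\Vert_{\ell_{1}/\ell_{\infty}}\le\left\Vert \mathbf{d}\right\Vert_{\ell_{1}/\ell_{\infty}}$, but either feasible point does the job, and claim (2) is in both cases just averaging of uniformly bounded rank-one terms. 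Your additional observation that $\left\Vert\cdot\right\Vert_{\ell_{1}/\ell_{\infty}}$ is a genuine norm on $\mathbb{R}^{p}$ (because the $3\times 3$ groups cover every pixel), hence equivalent to $\left\Vert\cdot\right\Vert_{2}$, is a detail the paper leaves implicit.

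The divergence is claim (3), and there you have not missed the paper's argument — you have identified a genuine gap in it. The paper's proof writes $\mathbf{L}_{t}=\frac{1}{t}\mathbf{B}_{t}\left(\frac{1}{t}\mathbf{A}_{t}+\frac{\lambda_{1}}{t}\mathbf{I}\right)^{-1}$ and concludes that $\mathbf{L}_{t}$ is uniformly bounded \emph{because} $\frac{1}{t}\mathbf{A}_{t}$ and $\frac{1}{t}\mathbf{B}_{t}$ are; this is precisely the non sequitur you flagged, since boundedness of $\frac{1}{t}\mathbf{A}_{t}$ gives no lower bound on the spectrum of $\frac{1}{t}\mathbf{A}_{t}+\frac{\lambda_{1}}{t}\mathbf{I}$, whose ridge term vanishes as $t\to\infty$. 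Your coupling $\mathbf{B}_{t}^{T}\mathbf{B}_{t}\preceq\bigl(\sum_{i}\left\Vert \mathbf{d}_{i}-\mathbf{s}_{i}\right\Vert_{2}^{2}\bigr)\mathbf{A}_{t}$ is valid and genuinely sharpens the crude $O(t)$ estimate, and your eigendirection analysis correctly isolates the critical regime $\mu_{j}\approx\lambda_{1}/t$, where $\left\Vert\mathbf{L}_{t}\right\Vert_{F}$ can still grow like $\sqrt{t}$; the rank-one scenario with slowly decaying $\left\Vert\mathbf{r}_{i}\right\Vert_{2}$ shows that (1)--(2) alone cannot imply (3). So your conclusion is the right one: a spectral condition such as $\sigma_{\min}(\frac{1}{t}\mathbf{A}_{t})\ge\kappa>0$ as in \cite{mairal2010online_dictionary_learning} (or an explicit projection of $\mathbf{L}_{t}$ onto a fixed ball inside the algorithm) must be assumed, and the paper assumes it tacitly. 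The only caveat is that in O-LSD the $\mathbf{r}_{i}$ are not free inputs but outputs of the separation step given $\mathbf{L}_{i-1}$, so your construction does not by itself prove the algorithm realizes the divergence; but as a critique of the proof, which uses nothing beyond the bounds in (1)--(2), your objection is decisive, and this gap propagates to the convergence theorems that rely on the compactness of $\mathcal{L}$.
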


\begin{proof}
	\label{proof:uniformly_bounded}
	
	Given $(\mathbf{0}, \mathbf{d})$ is a non-trivial feasible solution to \Cref{eq:reconstruction_cost_function}, for the optimal solution $(\mathbf{r}^{*}, \mathbf{s}^{*})$,
	\begin{equation}
	\begin{aligned}
	& \frac{1}{2} \left\Vert \mathbf{d}- \mathbf{L} \mathbf{r}^{*} - \mathbf{s}^{*} \right\Vert^{2}_{2}
	+  \frac{\lambda_{1}}{2} \left\Vert \mathbf{r}^{*}  \right\Vert_{2}^{2} 
	+  \lambda_{2} \left\Vert \mathbf{s}^{*} \right\Vert_{\ell_{1}/\ell_{\infty}} \\
	\le & \hat{\ell}(\mathbf{d}, \mathbf{L}, \mathbf{0}, \mathbf{d}) 
	\le \lambda_{2} \left\Vert \mathbf{d} \right\Vert_{\ell_{1}/\ell_{\infty}} ,
	\end{aligned}
	\end{equation}
	thus, we obtain that
	\begin{equation}
	\begin{aligned}
	\left\Vert \mathbf{r}^{*}  \right\Vert_{2}^{2} 
	\le \frac{2 \lambda_{2} }{\lambda_{1}}  \left\Vert \mathbf{d} \right\Vert_{\ell_{1}/\ell_{\infty}} .\\
	\left\Vert \mathbf{s}^{*} \right\Vert_{\ell_{1}/\ell_{\infty}} 
	\le \left\Vert \mathbf{d} \right\Vert_{\ell_{1}/\ell_{\infty}} .
	\end{aligned}
	\end{equation}
	Based on the assumption that $\mathbf{d}$ is uniformly bounded, then $\mathbf{r}^{*}, \mathbf{s}^{*}$ is uniformly bounded.
	
	Similarly, we show that the accumulation matrices $\mathbf{A}_t$ and $\mathbf{B}_t$ are also uniformly bounded, as
	\begin{equation}
	\begin{aligned}
	\frac{1}{t} \mathbf{A}_t & = \frac{1}{t} \sum_{i=1}^{t} \mathbf{r}_{i} \mathbf{r}_{i}^{T}, \\
	\frac{1}{t} \mathbf{B}_t & = \frac{1}{t} \sum_{i=1}^{t} (\mathbf{d}_{i} - \mathbf{s}_{i}) {\mathbf{r}_{i}}^T.
	\end{aligned}
	\end{equation}
	
	The closed-from solution $\mathbf{L}_t$ is given as
	\begin{equation}
	\begin{aligned}
	\mathbf{L}_t & = \mathbf{B}_t (\mathbf{A}_t + \lambda_{1} \mathbf{I} )^{-1} \\
	& = \frac{1}{t} \mathbf{B}_t ( \frac{1}{t} \mathbf{A}_t + \frac{\lambda_{1} }{t} \mathbf{I} )^{-1}.
	\end{aligned}
	\end{equation}
	in which $\frac{1}{t} \mathbf{A}_t$ and $\frac{1}{t} \mathbf{B}_t$ is uniformly bounded, therefore, $\mathbf{L}_t $ is uniformly bounded.
	
\end{proof}

\begin{prop}
	\label{prop:surrogate_function_Lispchitz}
	Let $\mathbf{r}$, $\mathbf{s}$, $\mathbf{L}_{t}$ be the solution obtained by \Cref{alg:online_alg},
	
	\begin{enumerate}
		\item $\hat{\ell}(\mathbf{d}, \mathbf{L}, \mathbf{r}, \mathbf{s})$ and $ \ell(\mathbf{d}, \mathbf{L})$ are uniformly bounded;
		
		\item The surrogate function $g_{t}(\mathbf{L})$ is uniformly bounded and Lipschitz.
	\end{enumerate}
\end{prop}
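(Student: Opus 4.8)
The plan is to reduce everything to \Cref{prop:uniformly_bounded}, which already supplies the three ingredients needed here: the uniform boundedness of the per-frame minimizers $(\mathbf{r}^{*}, \mathbf{s}^{*})$, the uniform boundedness of the normalized accumulation matrices $\tfrac{1}{t}\mathbf{A}_{t}$ and $\tfrac{1}{t}\mathbf{B}_{t}$, and the compact support $\mathcal{L}$ of the iterates $\mathbf{L}_{t}$. Combined with the standing assumption that the observations $\mathbf{d}$ are uniformly bounded, these turn both claims into the observation that continuous functions of uniformly bounded arguments stay bounded.

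For the first claim, I would note that each of the three terms of $\hat{\ell}(\mathbf{d}, \mathbf{L}, \mathbf{r}, \mathbf{s})$ in \Cref{eq:reconstruction_cost_function} is continuous in $(\mathbf{d}, \mathbf{L}, \mathbf{r}, \mathbf{s})$, and by \Cref{prop:uniformly_bounded} each argument ranges over a bounded set, so $\hat{\ell}$ evaluated at the iterates is uniformly bounded. For $\ell(\mathbf{d}, \mathbf{L}) = \min_{\mathbf{r}, \mathbf{s}} \hat{\ell}(\mathbf{d}, \mathbf{L}, \mathbf{r}, \mathbf{s})$, nonnegativity of $\hat{\ell}$ gives the lower bound $\ell \ge 0$, while evaluating at the feasible point $(\mathbf{0}, \mathbf{d})$---exactly as in the proof of \Cref{prop:uniformly_bounded}---gives $\ell(\mathbf{d}, \mathbf{L}) \le \hat{\ell}(\mathbf{d}, \mathbf{L}, \mathbf{0}, \mathbf{d}) = \lambda_{2} \left\Vert \mathbf{d} \right\Vert_{\ell_{1}/\ell_{\infty}}$, which the data assumption bounds. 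Hence $\ell$ is uniformly bounded as well.

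For the second claim, the key step is to rewrite the surrogate of \Cref{eq:surrogate_function} in the trace form already used in \Cref{eq:sub_problem_L}. Expanding the quadratic and collecting the $\mathbf{L}$-independent terms into a constant yields
\begin{equation}
g_{t}(\mathbf{L}) = \frac{1}{2t}\trace\!\big(\mathbf{L}^{T}(\lambda_{1}\mathbf{I} + \mathbf{A}_{t})\mathbf{L}\big) - \frac{1}{t}\trace\!\big(\mathbf{L}^{T}\mathbf{B}_{t}\big) + C_{t},
\end{equation}
with $C_{t} = \tfrac{1}{t}\sum_{i=1}^{t}\big(\tfrac{1}{2}\left\Vert \mathbf{d}_{i} - \mathbf{s}_{i} \right\Vert_{2}^{2} + \tfrac{\lambda_{1}}{2}\left\Vert \mathbf{r}_{i} \right\Vert_{2}^{2} + \lambda_{2}\left\Vert \mathbf{s}_{i} \right\Vert_{\ell_{1}/\ell_{\infty}}\big)$. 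Boundedness of $g_{t}$ over $\mathcal{L}$ then follows term by term: $C_{t}$ is an average of uniformly bounded quantities, and the two trace terms pair the uniformly bounded matrices $\tfrac{1}{t}\mathbf{A}_{t}$ and $\tfrac{1}{t}\mathbf{B}_{t}$ against $\mathbf{L} \in \mathcal{L}$. For the Lipschitz property I would invoke \Cref{lem:bounded_gradient_Lipschitz}: $g_{t}$ is convex, since its Hessian $\tfrac{1}{t}(\lambda_{1}\mathbf{I} + \mathbf{A}_{t})$ is positive definite ($\lambda_{1} > 0$, $\mathbf{A}_{t} \succeq 0$), and its gradient
\begin{equation}
\nabla_{\mathbf{L}} g_{t}(\mathbf{L}) = \Big(\tfrac{\lambda_{1}}{t}\mathbf{I} + \tfrac{1}{t}\mathbf{A}_{t}\Big)\mathbf{L} - \tfrac{1}{t}\mathbf{B}_{t}
\end{equation}
is uniformly bounded in Frobenius norm on $\mathcal{L}$, once more because $\tfrac{1}{t}\mathbf{A}_{t}$ and $\tfrac{1}{t}\mathbf{B}_{t}$ are uniformly bounded and $\mathbf{L}$ is confined to the compact set $\mathcal{L}$. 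By \Cref{lem:bounded_gradient_Lipschitz} a uniform bound on the (sub)gradient is equivalent to $g_{t}$ being Lipschitz, with a constant independent of $t$.

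The one point I would be most careful about is uniformity in $t$. For the convergence analysis that follows it is not enough that each $g_{t}$ is bounded and Lipschitz for a fixed $t$; the bound and the Lipschitz constant must be uniform across all $t$. This is exactly why the argument is routed through the normalized matrices $\tfrac{1}{t}\mathbf{A}_{t}$ and $\tfrac{1}{t}\mathbf{B}_{t}$: their uniform boundedness from \Cref{prop:uniformly_bounded}, as opposed to the trivial per-$t$ boundedness of $\mathbf{A}_{t}$ and $\mathbf{B}_{t}$ themselves, is what makes the gradient bound---and hence the Lipschitz constant---independent of $t$. A secondary subtlety is that, because the quadratic term makes $\nabla_{\mathbf{L}} g_{t}$ grow with $\left\Vert \mathbf{L} \right\Vert_{F}$, the Lipschitz claim should be read as Lipschitz continuity restricted to the compact set $\mathcal{L}$ (or a convex compact set containing all iterates), which is all the later analysis requires.
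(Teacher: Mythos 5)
Your proposal is correct and takes essentially the same route as the paper's proof: both reduce everything to \Cref{prop:uniformly_bounded}, bound $\hat{\ell}$ and $g_{t}$ directly from their definitions, and obtain the Lipschitz property by showing $\nabla g_{t}(\mathbf{L})$ is uniformly bounded through the normalized matrices $\tfrac{1}{t}\mathbf{A}_{t}$, $\tfrac{1}{t}\mathbf{B}_{t}$ and then invoking \Cref{lem:bounded_gradient_Lipschitz} together with convexity. The only nitpick is dimensional: since $\mathbf{A}_{t} \in \mathbb{R}^{r \times r}$ and $\mathbf{L} \in \mathbb{R}^{p \times r}$, the gradient should read $\mathbf{L}\bigl(\tfrac{1}{t}\mathbf{A}_{t} + \tfrac{\lambda_{1}}{t}\mathbf{I}\bigr) - \tfrac{1}{t}\mathbf{B}_{t}$ rather than $\bigl(\tfrac{\lambda_{1}}{t}\mathbf{I} + \tfrac{1}{t}\mathbf{A}_{t}\bigr)\mathbf{L} - \tfrac{1}{t}\mathbf{B}_{t}$, a harmless ordering slip that does not affect the argument.
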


\begin{proof}
	\label{proof:surrogate_function_Lispchitz}
	The first claim is proved by combining the definition of $\hat{\ell}(\mathbf{d}, \mathbf{L}, \mathbf{r}, \mathbf{s})$ and the uniform boundedness of $\mathbf{d}$, $\mathbf{L}$, $\mathbf{r}$ and $\mathbf{s}$..
	Similarly, we can show $g_{t}(\mathbf{L})$ is uniformly bounded.
	
	To proof $g_{t}(\mathbf{L})$ is Lipschitz, we show that the gradient of $g_{t}(\mathbf{L})$ is uniformly bounded as 
	\begin{equation}
	\begin{aligned}
	\left\Vert \nabla g_{t}(\mathbf{L}) \right\Vert_{F} = &
	\left\Vert \mathbf{L} (\frac{1}{t} \mathbf{A}_t - \frac{1}{t}\mathbf{I}) - \frac{1}{t} \mathbf{B}_t \right\Vert_{F} \\
	\le &
	\left\Vert \mathbf{L} \right\Vert_{F} \left(\left\Vert \frac{1}{t} \mathbf{A}_t \right\Vert_{F} + \left\Vert \frac{1}{t} \mathbf{I} \right\Vert_{F}\right)  + \left\Vert \frac{1}{t} \mathbf{B}_t \right\Vert_{F}
	\end{aligned}
	\end{equation}
	where the terms on the right side of the inequality are uniformly bounded, $\left\Vert \nabla g_{t}(\mathbf{L}) \right\Vert_{F}$ is uniformly bounded. 
	According to \Cref{lem:bounded_gradient_Lipschitz}, $g_{t}(\mathbf{L})$ is convex with respect to $\mathbf{L}$, the boundedness of the gradient implies that $g_{t}(\mathbf{L})$ is Lipschitz.
	
\end{proof}

\begin{prop}
	\label{prop:reconstruction_cost_function_Lipschitz}
	$\mathcal{X}_{0}(\mathbf{L})$ refers to the set of all minimizers to $\hat{\ell}(\mathbf{d}, \mathbf{L}, \mathbf{r}, \mathbf{s})$ as
	\begin{equation}
	\begin{aligned}
	\mathcal{X}_{0}(\mathbf{L}) = \{ 
	(\mathbf{\overline{r}}, \mathbf{\overline{s}}) |
	(\mathbf{\overline{r}}, \mathbf{\overline{s}}) = \argmin_{\mathbf{r}, \mathbf{s}} 
	\hat{\ell}(\mathbf{d}, \mathbf{L}, \mathbf{r}, \mathbf{s}) \}.
	\end{aligned}
	\end{equation}
	
	\begin{enumerate}
		\item The sub-gradient of function $\ell(\mathbf{d}, \mathbf{L})$ with respect to $\mathbf{L}$ is given as 
		\begin{equation}
		\begin{aligned}
		\partial_{L} \ell(\mathbf{d}, \mathbf{L}) =
		\conv\{
		(\mathbf{L} \mathbf{r}^{*} + \mathbf{s}^{*} - \mathbf{d}) { \mathbf{r}^{*}}^{T} \\
		| (\mathbf{r}^{*}, \mathbf{s}^{*}) & \in \mathcal{X}_{0}(\mathbf{L})
		\},
		\end{aligned}
		\end{equation}
		where $\conv\{\cdot \}$ is convex hull operator.
		
		\item The subgradient $\partial_{L} \ell(\mathbf{d}, \mathbf{L})$ is uniformly bounded, and $\ell(\mathbf{d}, \mathbf{L})$ is uniformly Lipschitz.
	\end{enumerate}
	
\end{prop}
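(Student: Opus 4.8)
The plan is to obtain the subgradient formula in the first claim as a direct consequence of Danskin's theorem (\Cref{lem:Danskin_theorem_subgradient}), identifying the outer variable $\mathbf{x}$ with the subspace basis $\mathbf{L}$ and the inner variable $\mathbf{u}$ with the pair $(\mathbf{r}, \mathbf{s})$. First I would verify the hypotheses: $\hat{\ell}(\mathbf{d}, \mathbf{L}, \mathbf{r}, \mathbf{s})$ is jointly continuous; for each fixed $(\mathbf{r}, \mathbf{s})$ only the term $\tfrac{1}{2}\left\Vert \mathbf{d} - \mathbf{L}\mathbf{r} - \mathbf{s}\right\Vert_{2}^{2}$ depends on $\mathbf{L}$, so $\hat{\ell}(\mathbf{d}, \cdot, \mathbf{r}, \mathbf{s})$ is a convex quadratic, differentiable in $\mathbf{L}$ with gradient $(\mathbf{L}\mathbf{r} + \mathbf{s} - \mathbf{d})\mathbf{r}^{T}$; and this gradient is continuous in $(\mathbf{r}, \mathbf{s})$. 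The structural point that makes Danskin the right tool is that the only nonsmoothness, the $\ell_{1}/\ell_{\infty}$ term, sits in the inner variable $\mathbf{s}$, while Danskin requires differentiability only in the outer variable $\mathbf{L}$. The theorem then returns exactly $\partial_{\mathbf{L}}\ell(\mathbf{d}, \mathbf{L}) = \conv\{(\mathbf{L}\mathbf{r}^{*} + \mathbf{s}^{*} - \mathbf{d})\mathbf{r}^{*T} \mid (\mathbf{r}^{*}, \mathbf{s}^{*}) \in \mathcal{X}_{0}(\mathbf{L})\}$.

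The one hypothesis that requires work is the compactness of the inner feasible set, since \Cref{lem:Danskin_theorem_subgradient} is stated over a compact $\mathcal{C}$ whereas $(\mathbf{r}, \mathbf{s})$ a priori range over all of $\mathbb{R}^{r}\times\mathbb{R}^{p}$. I would dispatch this by coercivity: the penalties $\tfrac{\lambda_{1}}{2}\left\Vert \mathbf{r}\right\Vert_{2}^{2}$ and $\lambda_{2}\left\Vert \mathbf{s}\right\Vert_{\ell_{1}/\ell_{\infty}}$ together with the quadratic fidelity force $\hat{\ell}\to\infty$ as $\|(\mathbf{r},\mathbf{s})\|\to\infty$, so all minimizers lie in a bounded region; combined with the uniform bounds on $\mathbf{r}^{*}$ and $\mathbf{s}^{*}$ from \Cref{prop:uniformly_bounded}, every element of $\mathcal{X}_{0}(\mathbf{L})$ sits inside one fixed compact ball $\mathcal{C}$ independent of $\mathbf{L}$ and $\mathbf{d}$. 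Restricting the inner minimization to $\mathcal{C}$ alters neither the optimal value nor the minimizer set, so Danskin applies verbatim.

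For the second claim I would first bound every generator of the convex hull. By the triangle inequality and submultiplicativity of the Frobenius norm, $\left\Vert (\mathbf{L}\mathbf{r}^{*} + \mathbf{s}^{*} - \mathbf{d})\mathbf{r}^{*T}\right\Vert_{F} \le (\left\Vert \mathbf{L}\right\Vert_{F}\left\Vert \mathbf{r}^{*}\right\Vert_{2} + \left\Vert \mathbf{s}^{*}\right\Vert_{2} + \left\Vert \mathbf{d}\right\Vert_{2})\left\Vert \mathbf{r}^{*}\right\Vert_{2}$, and each factor is uniformly bounded by \Cref{prop:uniformly_bounded}; since the convex hull of a bounded set is bounded, $\partial_{\mathbf{L}}\ell(\mathbf{d}, \mathbf{L})$ is uniformly bounded.

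Then comes the Lipschitz conclusion, which I expect to be the main obstacle. The delicate point is that $\ell(\mathbf{d}, \cdot)$ is a partial minimum of functions that are convex in $\mathbf{L}$ individually but not jointly convex in $(\mathbf{L}, \mathbf{r}, \mathbf{s})$, because the bilinear coupling $\mathbf{L}\mathbf{r}$ destroys joint convexity; hence $\ell(\mathbf{d}, \cdot)$ need not be globally convex and \Cref{lem:bounded_gradient_Lipschitz} cannot be invoked off the shelf. The robust route I would take is a direct two-point comparison: for any $\mathbf{L}_{1}, \mathbf{L}_{2}$, pick $(\mathbf{r}_{2}, \mathbf{s}_{2}) \in \mathcal{X}_{0}(\mathbf{L}_{2})$ so that $\ell(\mathbf{d}, \mathbf{L}_{1}) - \ell(\mathbf{d}, \mathbf{L}_{2}) \le \hat{\ell}(\mathbf{d}, \mathbf{L}_{1}, \mathbf{r}_{2}, \mathbf{s}_{2}) - \hat{\ell}(\mathbf{d}, \mathbf{L}_{2}, \mathbf{r}_{2}, \mathbf{s}_{2})$; the right-hand side involves only the smooth quadratic part, whose mean-value bound along the segment from $\mathbf{L}_{2}$ to $\mathbf{L}_{1}$ is controlled by the same uniformly bounded quantities, and symmetrizing yields $|\ell(\mathbf{d}, \mathbf{L}_{1}) - \ell(\mathbf{d}, \mathbf{L}_{2})| \le K\left\Vert \mathbf{L}_{1} - \mathbf{L}_{2}\right\Vert_{F}$. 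This gives the uniform Lipschitz property with a constant governed by the bound on $\partial_{\mathbf{L}}\ell$ established above.
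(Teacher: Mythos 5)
Your proof is correct, and for the first claim and for the boundedness part of the second it follows the paper's own route: Danskin's theorem (\Cref{lem:Danskin_theorem_subgradient}) yields the convex-hull formula for $\partial_{\mathbf{L}}\ell(\mathbf{d},\mathbf{L})$, and the estimate $\left\Vert (\mathbf{L}\mathbf{r}^{*}+\mathbf{s}^{*}-\mathbf{d}){\mathbf{r}^{*}}^{T}\right\Vert \le \left\Vert \mathbf{r}^{*}\right\Vert_{2}\left(\left\Vert \mathbf{L}\right\Vert_{F}\left\Vert \mathbf{r}^{*}\right\Vert_{2}+\left\Vert \mathbf{s}^{*}\right\Vert_{2}+\left\Vert \mathbf{d}\right\Vert_{2}\right)$ combined with \Cref{prop:uniformly_bounded} and the fact that the convex hull of a bounded set is bounded is exactly the paper's argument. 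You depart from the paper in two places, both to your advantage. First, you verify the compactness hypothesis of \Cref{lem:Danskin_theorem_subgradient} --- the lemma minimizes over a compact $\mathcal{C}$, whereas $(\mathbf{r},\mathbf{s})$ a priori ranges over $\mathbb{R}^{r}\times\mathbb{R}^{p}$ --- by coercivity plus the $\mathbf{L}$-independent bounds of \Cref{prop:uniformly_bounded}; the paper applies the lemma without addressing this. Second, and more substantively, for the Lipschitz conclusion the paper asserts that $\ell(\mathbf{d},\cdot)$ is convex in $\mathbf{L}$ and then invokes \Cref{lem:bounded_gradient_Lipschitz} (bounded subgradient of a \emph{convex} function implies Lipschitz). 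That convexity assertion is unjustified, for precisely the reason you identify: $\ell(\mathbf{d},\cdot)$ is a partial minimum of $\hat{\ell}$, which is convex in $\mathbf{L}$ for each fixed $(\mathbf{r},\mathbf{s})$ but not jointly convex in $(\mathbf{L},\mathbf{r},\mathbf{s})$ owing to the bilinear coupling $\mathbf{L}\mathbf{r}$, and a pointwise minimum of convex functions is generally non-convex. Your two-point envelope argument --- compare $\ell(\mathbf{d},\mathbf{L}_{1})$ against $\hat{\ell}(\mathbf{d},\mathbf{L}_{1},\mathbf{r}_{2},\mathbf{s}_{2})$ at a minimizer $(\mathbf{r}_{2},\mathbf{s}_{2})$ for $\mathbf{L}_{2}$, bound the resulting difference of the smooth quadratic along the segment by the uniform gradient bound, then symmetrize --- requires no convexity of the value function and delivers the same uniform Lipschitz constant. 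This is the standard device in the online dictionary-learning literature and it repairs the one shaky step in the paper's proof, at the cost of a slightly longer derivation where the paper has a one-line appeal to \Cref{lem:bounded_gradient_Lipschitz}.
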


\begin{proof}
	\label{proof:reconstruction_cost_function_Lipschitz}
	
	To the best of our knowledge, there is no available necessary and sufficient condition for uniqueness of minimizer to the reconstruction cost function, which implies that more than one minimizers of $\hat{\ell}(\mathbf{d}, \mathbf{L}, \mathbf{r}, \mathbf{s})$ probably exist. 
	
	$\hat{\ell}(\mathbf{z}, \mathbf{L}, \mathbf{r}, \mathbf{s})$ is convex and differentiable with respect to $\mathbf{L}$ for every feasible $(\mathbf{r}^{*}, \mathbf{s}^{*})$.
	Given $\frac{\partial \hat{\ell}(\mathbf{z}, \mathbf{L}, \mathbf{r}, \mathbf{s})}{\partial \mathbf{L}} = (\mathbf{L} \mathbf{r} + \mathbf{s} - \mathbf{d}) { \mathbf{r}}^{T}$ is differentiable with respect to $(\mathbf{r}, \mathbf{s})$ for all $\mathbf{L}$, according to \Cref{lem:Danskin_theorem_subgradient}, the sub-gradient of $\ell(\mathbf{d}, \mathbf{L})$ is given as 
	\begin{equation}
	\begin{aligned}
	\partial_{L} \ell(\mathbf{d}, \mathbf{L}) \in
	\conv\{
	(\mathbf{L} \mathbf{r}^{*} + \mathbf{s}^{*} - \mathbf{d}) { \mathbf{r}^{*}}^{T} \\
	| (\mathbf{r}^{*}, \mathbf{s}^{*}) & \in \mathcal{Z}_{0}(\mathbf{L})
	\}.
	\end{aligned}
	\end{equation}
	
	Then we proof that $(\mathbf{L} \mathbf{r} + \mathbf{s} - \mathbf{d}) { \mathbf{r}}^{T}$ is uniformly bounded
	\begin{equation}
	\begin{aligned}
	\left\Vert (\mathbf{L} \mathbf{r} + \mathbf{s} - \mathbf{d}) { \mathbf{r}}^{T} \right\Vert_{2} 
	\le  \left\Vert \mathbf{r} \right\Vert_{2} ( 
	\left\Vert \mathbf{L} \right\Vert_{F} \left\Vert \mathbf{r} \right\Vert_{2} 
	+ \left\Vert \mathbf{s} \right\Vert_{2} + \left\Vert \mathbf{d} \right\Vert_{2}),
	\end{aligned}
	\end{equation}
	in which every item on the right side of the inequality is uniformly bounded.
	Therefore $(\mathbf{L} \mathbf{r} + \mathbf{s} - \mathbf{d}) { \mathbf{r}}^{T}$ is uniformly bounded. 
	The convex hull of the bounded set is also bounded, thus the sub-gradient $\partial_{L} \ell(\mathbf{d}, \mathbf{L})$ is bounded.
	
	By \Cref{lem:bounded_gradient_Lipschitz}, $\ell(\mathbf{d}, \mathbf{L})$ is convex with respect to $\mathbf{L}$, and its sub-gradient is is uniformly bounded, thus $\ell(\mathbf{d}, \mathbf{L})$ is uniformly Lipschitz.
	
\end{proof}

\begin{prop}
	\label{prop:empirical_cost_function_Lipschitz}
	
	The empirical cost function $f_t(\mathbf{L})$ is uniformly bounded and Lipschitz.
\end{prop}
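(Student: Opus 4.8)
The plan is to read off every property of $f_t$ directly from the corresponding property of the per-frame reconstruction cost $\ell(\mathbf{d},\mathbf{L})$, which \Cref{prop:reconstruction_cost_function_Lipschitz} has already shown is convex in $\mathbf{L}$, uniformly bounded, and uniformly Lipschitz with a uniformly bounded sub-gradient. Recalling the definition in \Cref{eq:empirical_cost_function}, I would write $f_t(\mathbf{L}) = \frac{1}{t}\sum_{i=1}^{t}\ell(\mathbf{d}_i,\mathbf{L}) + \frac{\lambda_1}{2t}\left\Vert \mathbf{L}\right\Vert_F^2$, so that $f_t$ is a finite average of the maps $\ell(\mathbf{d}_i,\cdot)$ plus a convex quadratic regularizer. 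Throughout, I would use that, by \Cref{prop:uniformly_bounded}, the iterates $\mathbf{L}_t$ are supported on a fixed compact set $\mathcal{L}$, so $\left\Vert \mathbf{L}\right\Vert_F$ is bounded by a constant independent of $t$.

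For boundedness, each term $\ell(\mathbf{d}_i,\mathbf{L})$ is uniformly bounded by \Cref{prop:reconstruction_cost_function_Lipschitz}, say by a constant that depends on neither $i$ nor $\mathbf{L}\in\mathcal{L}$, since the data are uniformly bounded; hence the average $\frac{1}{t}\sum_{i=1}^{t}\ell(\mathbf{d}_i,\mathbf{L})$ inherits the same bound regardless of $t$. The regularization term $\frac{\lambda_1}{2t}\left\Vert \mathbf{L}\right\Vert_F^2$ is bounded on $\mathcal{L}$ and is further damped by the $1/t$ factor. Adding the two bounds gives a bound on $f_t$ that is uniform in both $\mathbf{L}\in\mathcal{L}$ and $t$.

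For the Lipschitz property, I would first note that, as a nonnegative combination of the convex maps $\ell(\mathbf{d}_i,\cdot)$ and the convex quadratic $\left\Vert \mathbf{L}\right\Vert_F^2$, the function $f_t$ is convex in $\mathbf{L}$, so \Cref{lem:bounded_gradient_Lipschitz} applies and it suffices to bound the sub-gradient. Because the quadratic regularizer is differentiable, sub-differential additivity gives $\partial_{\mathbf{L}} f_t(\mathbf{L}) = \frac{1}{t}\sum_{i=1}^{t}\partial_{\mathbf{L}}\ell(\mathbf{d}_i,\mathbf{L}) + \frac{\lambda_1}{t}\mathbf{L}$. Each $\partial_{\mathbf{L}}\ell(\mathbf{d}_i,\mathbf{L})$ is uniformly bounded by \Cref{prop:reconstruction_cost_function_Lipschitz}, so the averaged first term stays uniformly bounded, while the second term is controlled by $\frac{\lambda_1}{t}\sup_{\mathbf{L}\in\mathcal{L}}\left\Vert \mathbf{L}\right\Vert_F$. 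Both bounds are uniform in $t$, so every element of $\partial_{\mathbf{L}} f_t(\mathbf{L})$ has Frobenius norm below a fixed constant, and \Cref{lem:bounded_gradient_Lipschitz} yields that $f_t$ is uniformly Lipschitz.

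The step I expect to require the most care is the uniformity in $t$: I must verify that averaging does not degrade the per-frame constants and that the $1/t$ scaling prevents the regularizer's contribution from growing, so that a single bound and a single Lipschitz constant serve for all $t$ rather than ones that deteriorate as the sequence lengthens. The remaining structural point, the additivity of the sub-differentials, is immediate here since the quadratic term is smooth and all summands share the common domain $\mathcal{L}$.
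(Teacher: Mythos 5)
Your proposal is correct and follows essentially the same route as the paper: uniform boundedness read off termwise from the definition, then convexity plus a triangle-inequality bound on the averaged sub-gradient $\frac{1}{t}\sum_{i}\partial_{\mathbf{L}}\ell(\mathbf{d}_i,\mathbf{L})$ and the regularizer's gradient, concluding via \Cref{lem:bounded_gradient_Lipschitz}. Your version is in fact slightly more careful than the paper's (you compute the regularizer's gradient correctly as $\frac{\lambda_1}{t}\mathbf{L}$, where the paper writes $\frac{\lambda_1}{2n}\mathbf{L}$, and you make the uniformity in $t$ explicit), but the argument is the same.
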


\begin{proof}
	\label{proof:empirical_cost_function_Lipschitz}
	By checking the definition of the empirical cost function $f_{n}(\mathbf{L})$, $f_{n}(\mathbf{L})$ is also uniformly bounded.
	
	For $\mathbf{g}_{i} \in \partial_{L} \ell(\mathbf{d}_{i}, \mathbf{L})$,the sub-gradient $\left\Vert \partial_{\mathbf{L}} \right\Vert_{F} $ is uniformly bounded, 
	\begin{equation}
	\begin{aligned}
	\left\Vert \partial_{\mathbf{L}} \right\Vert_{F} 
	= \left\Vert \frac{1}{n} \sum_{i=1}^{n} \mathbf{g}_{i} + \frac{\lambda_{1}}{2n} \mathbf{L} \right\Vert_{F}
	\le \frac{1}{n} \sum_{i=1}^{n} \left\Vert \mathbf{g}_{i} \right\Vert_{F} 
	+ \frac{\lambda_{1}}{2n} \left\Vert \mathbf{L} \right\Vert_{F}.
	\end{aligned}
	\end{equation}
	Because $f_{n}(\mathbf{L})$ is convex with respect to $\mathbf{L}$, and its sub-gradient $\left\Vert \partial_{\mathbf{L}} \right\Vert_{F} $ is uniformly bounded, $f_{n}(\mathbf{L})$ is Lipschitz.
	
\end{proof}

\section{Proof Details}
\label{sec:proof}

\begin{thm}
	\label{thm:surrogate_function_convergence}
	
	Let $\{\mathbf{L}_{t} \}_{t=1}^{\infty}$ be the sequence of solution obtained by \Cref{alg:online_alg}, the surrogate function $g_{t}(\mathbf{L_t})$ converges almost surely.
	
\end{thm}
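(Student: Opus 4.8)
The plan is to show that the nonnegative sequence $u_t := g_t(\mathbf{L}_t)$ is a quasi-martingale and then invoke \Cref{lem:convergence_g_t} to conclude almost-sure convergence. Nonnegativity of $u_t$ is immediate, since $g_t$ is a sum of squared $\ell_2$-norms and of the structured-sparsity norm. The real work is to control the conditional expected increment $\mathbb{E}[u_{t+1}-u_t \mid \mathcal{F}_t]$, where $\mathcal{F}_t$ is the filtration generated by the first $t$ frames.

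First I would establish the recursion linking consecutive surrogates. Because the pair $(\mathbf{r}_{t+1},\mathbf{s}_{t+1})$ is produced by the Foreground and Background Separation step as a minimizer of $\hat{\ell}(\mathbf{d}_{t+1},\mathbf{L}_t,\cdot,\cdot)$, we have $\hat{\ell}(\mathbf{d}_{t+1},\mathbf{L}_t,\mathbf{r}_{t+1},\mathbf{s}_{t+1})=\ell(\mathbf{d}_{t+1},\mathbf{L}_t)$; rescaling the averages in \Cref{eq:surrogate_function} then gives the exact identity $g_{t+1}(\mathbf{L}_t)=\tfrac{t}{t+1}g_t(\mathbf{L}_t)+\tfrac{1}{t+1}\ell(\mathbf{d}_{t+1},\mathbf{L}_t)$, so that $g_{t+1}(\mathbf{L}_t)-g_t(\mathbf{L}_t)=\tfrac{1}{t+1}[\ell(\mathbf{d}_{t+1},\mathbf{L}_t)-g_t(\mathbf{L}_t)]$. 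Combining the optimality $g_{t+1}(\mathbf{L}_{t+1})\le g_{t+1}(\mathbf{L}_t)$ with the surrogate bound $g_t(\mathbf{L}_t)\ge f_t(\mathbf{L}_t)$ yields the one-sided estimate $u_{t+1}-u_t \le \tfrac{1}{t+1}[\ell(\mathbf{d}_{t+1},\mathbf{L}_t)-f_t(\mathbf{L}_t)]$. Taking the conditional expectation and using that $\mathbf{L}_t$ is $\mathcal{F}_t$-measurable while $\mathbf{d}_{t+1}$ is independent of the past (Assumption 1), I get $\mathbb{E}[\ell(\mathbf{d}_{t+1},\mathbf{L}_t)\mid\mathcal{F}_t]=f(\mathbf{L}_t)$ and hence $\mathbb{E}[u_{t+1}-u_t\mid\mathcal{F}_t]\le \tfrac{1}{t+1}[f(\mathbf{L}_t)-f_t(\mathbf{L}_t)]\le \tfrac{1}{t+1}\sup_{\mathbf{L}\in\mathcal{L}}|f(\mathbf{L})-f_t(\mathbf{L})|$. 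To bound this in expectation I would appeal to the Donsker machinery: by \Cref{prop:reconstruction_cost_function_Lipschitz} the maps $\mathbf{L}\mapsto\ell(\mathbf{d},\mathbf{L})$ are uniformly Lipschitz on the compact support $\mathcal{L}$ of \Cref{prop:uniformly_bounded}, so the class is P-Donsker and \Cref{lem:Donsker_theorem} gives $\mathbb{E}[\sup_{\mathbf{L}}|f_t(\mathbf{L})-f(\mathbf{L})|]=O(1/\sqrt{t})$, the vanishing $O(1/t)$ regularizer being absorbed harmlessly. Since $\delta_t$ is $\mathcal{F}_t$-measurable, this gives $\mathbb{E}[\delta_t(u_{t+1}-u_t)]\le \mathbb{E}[\tfrac{1}{t+1}|f(\mathbf{L}_t)-f_t(\mathbf{L}_t)|]=O(t^{-3/2})$, which is summable, so \Cref{lem:convergence_g_t} applies and $g_t(\mathbf{L}_t)$ converges almost surely.

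The step I expect to be the main obstacle is the uniform deviation bound $\mathbb{E}[\sup_{\mathbf{L}}|f_t-f|]=O(1/\sqrt{t})$. In the classical online dictionary-learning analysis this rests on strong convexity of the coding problem, which forces a unique code and a Lipschitz \emph{gradient} of $\ell(\cdot,\mathbf{L})$; here the $\ell_1/\ell_\infty$ structured-sparsity norm destroys uniqueness of $(\mathbf{r},\mathbf{s})$, so that route is unavailable. The remedy, which is the crux of the whole convergence argument, is to replace the Lipschitz-gradient hypothesis by the \emph{boundedness of the subgradient set} $\partial_{\mathbf{L}}\ell(\mathbf{d},\mathbf{L})$ obtained through Danskin's theorem in \Cref{prop:reconstruction_cost_function_Lipschitz}; this is precisely what delivers the uniform Lipschitz constant required by the Donsker hypothesis, after which the remaining quasi-martingale bookkeeping is routine.
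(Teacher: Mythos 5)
Your proposal is correct and follows essentially the same route as the paper's proof: the identical quasi-martingale decomposition of $u_{t+1}-u_t$ via the optimality of $\mathbf{L}_{t+1}$ and the bound $g_t(\mathbf{L}_t)\ge f_t(\mathbf{L}_t)$, the same conditional-expectation step, the same Donsker argument resting on the uniform Lipschitz property of $\mathbf{L}\mapsto\ell(\mathbf{d},\mathbf{L})$ obtained from bounded subgradients via Danskin's theorem, and the same $O(t^{-3/2})$ summability feeding \Cref{lem:convergence_g_t}. Your closing remark correctly identifies the paper's key deviation from the classical dictionary-learning analysis, namely replacing uniqueness/strong-convexity of the coding step by boundedness of the subgradient set.
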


\begin{proof}
	\label{proof:surrogate_function_convergence}
	
	In stochastic optimization, the expected cost function is defined all the samples,
	\begin{equation}
	f(\mathbf{L}) = \mathbb{E}_{\mathbf{d}} [ \ell(\mathbf{d}, \mathbf{L}) ] 
	= \lim_{n \to \infty} f_{n}(\mathbf{L}).
	\end{equation}
	$g_{t}(\mathbf{L}_t)$ is analyzed as a stochastic positive process, since each term in it is non-negative and samples are drawn randomly (independent).
	
	Note $u_{t} = g_{t}(\mathbf{L}_t)$, the difference between two consecutive time instances is given as
	\begin{equation}
	\label{eq:u_t_difference}
	\begin{aligned}
	& u_{t+1} - u_{t} \\
	= & g_{t+1}(\mathbf{L}_{t+1}) - g_{t}(\mathbf{L}_{t}) \\
	= & g_{t+1}(\mathbf{L}_{t+1}) - g_{t+1}(\mathbf{L}_{t}) + g_{t+1}(\mathbf{L}_{t})- g_{t}(\mathbf{L}_{t}) \\
	= & g_{t+1}(\mathbf{L}_{t+1}) - g_{t+1}(\mathbf{L}_{t}) \\
	& + \frac{\ell(\mathbf{d}_{t+1}, \mathbf{L}_{t}) - f_{t}(\mathbf{L}_{t})}{t+1}
	+ \frac{f_{t}(\mathbf{L}_{t}) - g_{t}(\mathbf{L}_{t}) }{t+1}.
	\end{aligned}
	\end{equation}
	The first two term satisfy $g_{t+1}(\mathbf{L}_{t+1}) \le g_{t+1}(\mathbf{L}_{t})$, and $f_t(\mathbf{L}_{t}) - g_{t}(\mathbf{L}_{t}) \le 0$, therefore,
	\begin{equation}
	\begin{aligned}
	u_{t+1} - u_{t} 
	\le & \frac{\ell(\mathbf{d}_{t+1}, \mathbf{L}_{t}) - f_{t}(\mathbf{L}_{t})}{t+1} \\
	\le & \frac{\ell(\mathbf{d}_{t+1}, \mathbf{L}_{t}) - \frac{1}{t} \sum_{i=1}^{t}  \ell(\mathbf{d}_{t}, \mathbf{L}_{t}) }{t+1} 
	\end{aligned}
	\end{equation}
	The expectation conditioned on past information $\mathcal{F}_t$ is given as
	\begin{equation}
	\begin{aligned}
	\mathbb{E}[ u_{t+1} - u_{t} | \mathcal{F}_{t} ]
	\le & \frac{\mathbb{E}[\ell(\mathbf{d}_{t+1}, \mathbf{L}_{t})| \mathcal{F}_{t}] - \frac{1}{t} \sum_{i=1}^{t}  \ell(\mathbf{d}_{t}, \mathbf{L}_{t}) }{t+1} \\
	\le & \frac{ f(\mathbf{L}_t) - \frac{1}{t} \sum_{i=1}^{t}  \ell(\mathbf{d}_{t}, \mathbf{L}_{t})  }{t+1} \\
	\le & \frac{ \left\Vert f - f_{t} \right\Vert_{\infty}}{t+1},
	\end{aligned}
	\end{equation}
	where we define $f_t = \frac{1}{t} \sum_{i=1}^{t}  \ell(\mathbf{d}_{t}, \mathbf{L}_{t})$, $f = \mathbb{E}_{\mathbf{d}} [\ell(\mathbf{d}, \mathbf{L}_{t}) ]$, and $\left\Vert f - f_{t} \right\Vert_{\infty} = \sup_{f \in \mathcal{F}} \left| f - f_{t}\right|$.
	$F= \{ \ell(\mathbf{d}, \mathbf{L}): \mathcal{D} \to \mathbb{R}, \mathbf{L} \in \mathcal{L} \} $ defines a set of measurement function indexed by $\mathbf{L}_{t}$ from a compact subset $\mathcal{L}$, and P-Donsker.
	In addition, the boundedness of $\ell(\mathbf{d}, \mathbf{L})$ implies that $\mathbb{E} [\ell(\mathbf{d}, \mathbf{L})^{2}]$ is uniformly bounded.
	Then, the requirements of \Cref{lem:Donsker_theorem} are all satisfied such that 
	\begin{equation}
	\begin{aligned}
	\mathbb{E}[\sqrt{t}\left\Vert f - f_{t} \right\Vert_{\infty} ] \le \kappa, \kappa > 0.
	\end{aligned}
	\end{equation}
	Therefore,
	\begin{equation}
	%\label{eq:positive_variation}
	\begin{aligned}
	\mathbb{E}[ [\mathbb{E}[ u_{t+1} - u_{t} | \mathcal{F}_{t} ] ]^{+} ]  
	= & \mathbb{E}[ \max \{0, \mathbb{E}[ u_{t+1} - u_{t} | \mathcal{F}_{t} ]\} ] \\
	\le & \frac{\kappa}{t^{\frac{3}{2}}},
	\end{aligned}
	\end{equation}
	where $[\cdot]^{+}$ is the positive variation operator.
	
	Then we deploy \Cref{lem:convergence_g_t} to present the convergence of $g_t(\mathbf{x})$.
	We define that
	\begin{equation}
	\begin{aligned}
	\delta_{t} =
	\begin{cases}
	1, & \text{if } \mathbb{E}[u_{t+1} - u_{t} | \mathcal{F}_t] > 0 \\
	0, & \text{otherwise}
	\end{cases}
	\end{aligned}.
	\end{equation}
	We have
	\begin{equation}
	%\label{eq:inequality_positive_g_t}
	\begin{aligned}
	\sum_{t=1}^{\infty} \mathbb{E}[\delta_{t} ( u_{t+1} - u_{t} ) | \mathcal{F}_{t}  ]
	= &\sum_{t=1}^{\infty} \mathbb{E}[ [ \mathbb{E}[ u_{t+1} - u_{t} | \mathcal{F}_{t} ] ]^{+} ] \\
	\le & \sum_{t=1}^{\infty} \frac{\kappa}{t^{\frac{3}{2}}} 
	<  + \infty .
	\end{aligned}
	\end{equation}
	Conclusively, $g_t(\mathbf{L}_t)$ is quasi-martingale and converges almost sure. Moreover,
	\begin{equation}
	\begin{aligned}
	\sum_{t=1}^{\infty} \left|  \mathbb{E}[ u_{t+1} - u_{t} | \mathcal{F}_{t} ] \right| < +\infty \text{ a.s.}
	\end{aligned}
	\end{equation}
	
\end{proof}

\begin{thm}
	\label{thm:solution_convergence}
	
	For two solutions produced by \Cref{alg:online_alg} at two consecutive time instances, 
	\begin{equation}
	\left\Vert \mathbf{L}_{t} - \mathbf{L}_{t+1} \right\Vert_{F} = O(\frac{1}{t}).
	\end{equation}
	
\end{thm}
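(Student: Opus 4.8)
The plan is to follow the strong-convexity route used for the analogous online schemes in \cite{mairal2010online_dictionary_learning,feng2013ORPCA_SO}, exploiting the fact that the surrogate is an explicit quadratic in $\mathbf{L}$. First I would record that $\mathbf{L}_t$ is the exact minimizer of $g_t$, so that setting $\nabla_{\mathbf{L}} g_t(\mathbf{L}_t)=0$ in \Cref{eq:surrogate_function} gives the closed form $\mathbf{L}_t(\mathbf{A}_t+\lambda_1\mathbf{I})=\mathbf{B}_t$ underlying \Cref{alg:online_update_basis}, and that $g_t$ is quadratic in $\mathbf{L}$ with Hessian proportional to $\mathbf{A}_t+\lambda_1\mathbf{I}$; in particular $g_t$ is $m_t$-strongly convex with $m_t = \tfrac{1}{t}(\lambda_1 + \lambda_{\min}(\mathbf{A}_t))$.

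The heart of the argument is a one-sample perturbation bound. Writing $g_{t+1} = \tfrac{t}{t+1}g_t + \tfrac{1}{t+1}\hat\ell(\mathbf{d}_{t+1},\cdot,\mathbf{r}_{t+1},\mathbf{s}_{t+1})$, which follows directly from \Cref{eq:surrogate_function,eq:A_B}, I would sandwich the gap $g_{t+1}(\mathbf{L}_t) - g_{t+1}(\mathbf{L}_{t+1})$. From below, strong convexity of $g_{t+1}$ together with $\nabla_{\mathbf{L}} g_{t+1}(\mathbf{L}_{t+1})=0$ yields $g_{t+1}(\mathbf{L}_t) - g_{t+1}(\mathbf{L}_{t+1}) \ge \tfrac{m_{t+1}}{2}\left\Vert \mathbf{L}_t - \mathbf{L}_{t+1} \right\Vert_{F}^{2}$. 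From above, the decomposition of $g_{t+1}$, the optimality of $\mathbf{L}_t$ for $g_t$ (which annihilates the contribution of $g_t$), and the uniform Lipschitz bound on $\hat\ell(\mathbf{d},\cdot,\mathbf{r},\mathbf{s})$ stemming from the bounded gradient $(\mathbf{L}\mathbf{r}+\mathbf{s}-\mathbf{d})\mathbf{r}^{T}$ in \Cref{prop:reconstruction_cost_function_Lipschitz} give $g_{t+1}(\mathbf{L}_t) - g_{t+1}(\mathbf{L}_{t+1}) \le \tfrac{C}{t+1}\left\Vert \mathbf{L}_t - \mathbf{L}_{t+1} \right\Vert_{F}$ for a uniform constant $C$. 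Combining and cancelling one factor of the norm yields
\begin{equation}
\left\Vert \mathbf{L}_t - \mathbf{L}_{t+1} \right\Vert_{F} \le \frac{2C}{(t+1)\, m_{t+1}} = \frac{2C}{\lambda_1 + \lambda_{\min}(\mathbf{A}_{t+1})}.
\end{equation}

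The main obstacle is the last equality: to conclude $O(\tfrac{1}{t})$ the denominator must grow linearly in $t$, i.e.\ the strong-convexity modulus of the un-normalized surrogate must stay bounded away from zero. Equivalently, $\tfrac{1}{t}\mathbf{A}_t = \tfrac{1}{t}\sum_{i}\mathbf{r}_i\mathbf{r}_i^{T}$ must have smallest eigenvalue bounded below by some $\kappa > 0$, so that $\lambda_{\min}(\mathbf{A}_{t+1}) = \Omega(t)$, whereas \Cref{prop:uniformly_bounded} only supplies the matching upper bound on $\tfrac{1}{t}\mathbf{A}_t$. I would therefore either carry this lower-curvature condition as part of the ``mild condition'' (as is done in \cite{mairal2010online_dictionary_learning,feng2013ORPCA_SO}) or establish it directly from Assumption~1, after which the displayed estimate is exactly $O(\tfrac{1}{t})$.

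As an alternative that makes the dependence transparent, I would observe that the same estimate drops out of a direct Sherman--Morrison computation on the closed form: since $\mathbf{A}_{t+1}=\mathbf{A}_t+\mathbf{r}_{t+1}\mathbf{r}_{t+1}^{T}$ and $\mathbf{B}_{t+1}=\mathbf{B}_t+(\mathbf{d}_{t+1}-\mathbf{s}_{t+1})\mathbf{r}_{t+1}^{T}$, one obtains
\begin{equation}
\mathbf{L}_{t+1}-\mathbf{L}_t = \frac{\left(\mathbf{d}_{t+1}-\mathbf{s}_{t+1}-\mathbf{L}_t\mathbf{r}_{t+1}\right)\mathbf{r}_{t+1}^{T}(\mathbf{A}_t+\lambda_1\mathbf{I})^{-1}}{1+\mathbf{r}_{t+1}^{T}(\mathbf{A}_t+\lambda_1\mathbf{I})^{-1}\mathbf{r}_{t+1}},
\end{equation}
whose Frobenius norm is controlled by $\left\Vert (\mathbf{A}_t+\lambda_1\mathbf{I})^{-1}\right\Vert_{2} \le (\lambda_1+\lambda_{\min}(\mathbf{A}_t))^{-1}$ times uniformly bounded factors from \Cref{prop:uniformly_bounded}; this again reduces the claim to the linear growth of $\lambda_{\min}(\mathbf{A}_t)$, confirming that this curvature condition is the single quantitative ingredient needed.
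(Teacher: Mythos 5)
Your sandwich argument is essentially the paper's own proof of \Cref{thm:solution_convergence} (strong convexity of the surrogate at its minimizer from below, a one-sample perturbation bound from above), but carried out with consistent normalization, and that consistency exposes a defect in the paper's version rather than in yours. The paper asserts that the Hessian of $g_t$ is $\mathbf{I}\otimes(\mathbf{A}_t+\lambda_1\mathbf{I})$ and hence that $g_t$ has strong-convexity modulus $\lambda_1$ uniformly in $t$ (\Cref{eq:L_t_left_hand}); but $g_t$ as defined in \Cref{eq:surrogate_function} carries a $\tfrac{1}{t}$ factor, so its Hessian is $\tfrac{1}{t}(\mathbf{A}_t+\lambda_1\mathbf{I})\otimes\mathbf{I}$ and the correct modulus is your $m_t=\tfrac{1}{t}\bigl(\lambda_1+\lambda_{\min}(\mathbf{A}_t)\bigr)$. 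Meanwhile the Lipschitz constant $\kappa_t=O(\tfrac{1}{t})$ of $G_t=g_t-g_{t+1}$ in \Cref{eq:L_t_right_hand} \emph{is} computed from the $\tfrac{1}{t}$-normalized gradients. The paper's $O(\tfrac{1}{t})$ rate therefore comes from mixing the un-normalized curvature bound with the normalized perturbation bound; done consistently in either convention, the argument yields exactly your estimate $\left\Vert\mathbf{L}_{t+1}-\mathbf{L}_t\right\Vert_F\le 2C/\bigl(\lambda_1+\lambda_{\min}(\mathbf{A}_{t+1})\bigr)$, which in general is only $O(1)$. Your Sherman--Morrison computation (which is correct, and does not appear in the paper) confirms the same dependence directly from the closed form $\mathbf{L}_t=\mathbf{B}_t(\mathbf{A}_t+\lambda_1\mathbf{I})^{-1}$.

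So your diagnosis is right: the single missing quantitative ingredient is a uniform lower bound $\lambda_{\min}(\tfrac{1}{t}\mathbf{A}_t)\ge\kappa_1>0$, i.e., the data-curvature assumption (Assumption~B) of \cite{mairal2010online_dictionary_learning}, which this paper never states or verifies, and without which the theorem as stated is not actually established by the paper's argument. One caveat on your closing hedge: this condition cannot be ``established directly from Assumption~1.'' Boundedness and independence of the frames $\mathbf{d}_t$ impose no lower bound on the spectrum of $\tfrac{1}{t}\sum_{i}\mathbf{r}_i\mathbf{r}_i^{T}$; for instance, if the frames are (nearly) identical, the coefficients $\mathbf{r}_i$ are (nearly) collinear and $\lambda_{\min}(\tfrac{1}{t}\mathbf{A}_t)$ stays (nearly) zero whenever $r\ge 2$, which is the regime the paper operates in ($r=25$). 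The condition must therefore be added as an explicit hypothesis, exactly as in \cite{mairal2010online_dictionary_learning,feng2013ORPCA_SO}; once it is, either of your two routes completes the proof.
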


\begin{proof}
	\label{proof:solution_convergence}
	
	The Hessian matrix of $g_t(\mathbf{L})$ is $\mathbf{H} = \mathbf{I} \otimes (\mathbf{A_t} + \lambda_{1} \mathbf{I})$, where $\otimes$ is the Kronecker production operator. 
	By the definition of $\mathbf{A}_{t}$, $\mathbf{A_t}$ is always semi-positive, therefore, the smallest eigenvalue of $\mathbf{H}$ is greater than $\lambda_{1}$, which implies that $g_t(\mathbf{L})$ is strictly convex (perhaps strongly convex) with respect to $\mathbf{L}$ and 
	\begin{equation}
	\label{eq:L_t_left_hand}
	\begin{aligned}
	g_t(\mathbf{L}_{t+1}) - g_t(\mathbf{L}_{t}) 
	\ge \lambda_{1} \left\Vert \mathbf{L}_{t+1} - \mathbf{L}_{t} \right\Vert_{F}^{2}
	\end{aligned}
	\end{equation}
	
	Because $\mathbf{L}_{t+1}$ minimizes $g_{t+1}(\mathbf{L})$, $g_{t+1}(\mathbf{L}_{t+1}) - g_{t+1}(\mathbf{L}_{t})< 0$, which presents that 
	\begin{equation}
	\label{eq:L_t_mid}
	\begin{aligned}
	& g_{t}(\mathbf{L}_{t+1}) - g_{t}(\mathbf{L}_{t}) \\
	= & g_{t}(\mathbf{L}_{t+1}) - g_{t+1}(\mathbf{L}_{t+1}) 
	+ g_{t+1}(\mathbf{L}_{t+1}) - g_{t}(\mathbf{L}_{t}) \\
	\le & g_{t}(\mathbf{L}_{t+1}) - g_{t+1}(\mathbf{L}_{t+1}) 
	+ g_{t+1}(\mathbf{L}_{t}) - g_{t}(\mathbf{L}_{t}) \\
	\le & (g_{t}(\mathbf{L}_{t+1}) - g_{t+1}(\mathbf{L}_{t+1}))
	-  (g_{t}(\mathbf{L}_{t}) - g_{t+1}(\mathbf{L}_{t}) )
	\end{aligned}
	\end{equation}
	
	We define $G_t(\mathbf{L}) = g_{t}(\mathbf{L}) - g_{t+1}(\mathbf{L})$, then gradient of  $G_t(\mathbf{L})$ is extracted as
	\begin{equation}
	\begin{aligned}
	\nabla G_t(\mathbf{L}) 
	= & \nabla g_{t}(\mathbf{L}) - \nabla g_{t+1}(\mathbf{L}) \\
	= & \frac{1}{t} \mathbf{L} \hat{\mathbf{A}}_{t} + \frac{1}{t} \mathbf{B}_{t} -
	\frac{1}{t+1} \mathbf{L} \hat{\mathbf{A}}_{t+1} - \frac{1}{t+1} \mathbf{B}_{t+1} \\
	= & \frac{1}{t} \mathbf{L} (\hat{\mathbf{A}}_{t} - \frac{t}{t+1} \hat{\mathbf{A}}_{t+1} ) 
	+ \frac{1}{t} (\hat{\mathbf{B}}_{t} - \frac{t}{t+1} \hat{\mathbf{A}}_{t+1} ) ,
	\end{aligned}
	\end{equation}
	in which $\hat{\mathbf{A}}_{t} = \mathbf{A}_{t} + \lambda_{1} \mathbf{I}$.
	
	Given that $\mathbf{A}_t$, $\mathbf{B}_t$ and $\mathbf{L}$ is uniformly bounded, the gradient $\nabla G_t(\mathbf{L}) $ is also uniformly bounded, 
	\begin{equation}
	\begin{aligned}
	& \left\Vert \nabla G_t(\mathbf{L}) \right\Vert_{F} \\
	\le & \frac{1}{t} (
	\left\Vert \mathbf{L} \right\Vert_{F} \left\Vert \hat{\mathbf{A}}_{t} - \frac{t}{t+1} \hat{\mathbf{A}}_{t+1} \right\Vert_{F}
	+ \left\Vert \hat{\mathbf{B}}_{t} - \frac{t}{t+1} \hat{\mathbf{A}}_{t+1}  \right\Vert_{F}
	).
	\end{aligned}
	\end{equation}
	Thus, we conclude that $G_t(\mathbf{L})$ is uniformly Lipschitz with respect to $\mathbf{L}$, and there exist a positive constant $\kappa_{t} = \frac{1}{t} ( \left\Vert \mathbf{L} \right\Vert_{F} \left\Vert \hat{\mathbf{A}}_{t} - \frac{t}{t+1} \hat{\mathbf{A}}_{t+1} \right\Vert_{F}
	+ \left\Vert \hat{\mathbf{B}}_{t} - \frac{t}{t+1} \hat{\mathbf{A}}_{t+1}  \right\Vert_{F})$ that satisfies 
	\begin{equation}
	\label{eq:L_t_right_hand}
	\begin{aligned}
	G_t(\mathbf{L}_{t}) - G_t(\mathbf{L}_{t+1}) 
	\le \kappa_{t} \left\Vert \mathbf{L}_{t+1} - \mathbf{L}_{t}\right\Vert_{F}.
	\end{aligned}
	\end{equation}
	Combining \Cref{eq:L_t_left_hand,eq:L_t_mid,eq:L_t_right_hand}, we conclude that 
	\begin{equation}
	\begin{aligned}
	\left\Vert \mathbf{L}_{t+1} - \mathbf{L}_{t} \right\Vert_{F} 
	\le \frac{\kappa_{t}}{\lambda_{1}}.
	\end{aligned}
	\end{equation}
	and $\left\Vert \mathbf{L}_{t+1} - \mathbf{L}_{t} \right\Vert_{F} = O(\frac{1}{t}) $.
	
\end{proof}

\begin{thm}
	\label{thm:surrogate_function_converge_to_emiprical_cost_function}
	
	Note $f_{t}(\mathbf{L})$ is the empirical cost function, and $g_{t}(\mathbf{L})$ is its surrogate function. $\mathbf{L}_{t}$ is the solution obtained by \Cref{alg:online_alg}, when $t$ tends to infinity, $g_t(\mathbf{L}_{t}) - f_t(\mathbf{L}_{t})$ converges to 0 almost surely.
\end{thm}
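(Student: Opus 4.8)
The plan is to follow the classical online-learning argument of Mairal et al.\ and show that the non-negative gap $b_t \equiv g_t(\mathbf{L}_t) - f_t(\mathbf{L}_t)$ (non-negative since the surrogate upper-bounds the empirical cost, $g_t \ge f_t$) tends to zero by appealing to \Cref{lem:positive_converging_sums} with the weights $a_t = 1/(t+1)$. Two ingredients are required: the almost-sure summability $\sum_t a_t b_t < \infty$, and the controlled variation $|b_{t+1} - b_t| = O(1/t)$.

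First I would establish summability. Reusing the telescoping identity \Cref{eq:u_t_difference} with $u_t = g_t(\mathbf{L}_t)$, its last term equals $-b_t/(t+1)$ while the term $g_{t+1}(\mathbf{L}_{t+1}) - g_{t+1}(\mathbf{L}_t)$ is non-positive because $\mathbf{L}_{t+1}$ minimizes $g_{t+1}$. Rearranging gives $b_t/(t+1) \le (u_t - u_{t+1}) + [\ell(\mathbf{d}_{t+1}, \mathbf{L}_t) - f_t(\mathbf{L}_t)]/(t+1)$. Taking the conditional expectation given $\mathcal{F}_t$, using that $\mathbf{d}_{t+1}$ is independent of $\mathcal{F}_t$ so that $\mathbb{E}[\ell(\mathbf{d}_{t+1}, \mathbf{L}_t)\mid \mathcal{F}_t] = f(\mathbf{L}_t)$, and using the $\mathcal{F}_t$-measurability of $b_t$ and $f_t(\mathbf{L}_t)$, yields $b_t/(t+1) \le \mathbb{E}[u_t - u_{t+1}\mid \mathcal{F}_t] + [f(\mathbf{L}_t) - f_t(\mathbf{L}_t)]/(t+1)$. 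Summing over $t$, the first series is finite almost surely by the quasi-martingale conclusion $\sum_t |\mathbb{E}[u_{t+1}-u_t\mid\mathcal{F}_t]| < \infty$ of \Cref{thm:surrogate_function_convergence}; for the second, the Donsker estimate $\mathbb{E}[\sqrt{t}\,\|f - f_t\|_\infty] \le \kappa$ from \Cref{lem:Donsker_theorem} gives $\sum_t \mathbb{E}[|f(\mathbf{L}_t) - f_t(\mathbf{L}_t)|]/(t+1) \le \sum_t \kappa/((t+1)\sqrt{t}) < \infty$, so by Tonelli that series converges almost surely. Since each $b_t/(t+1) \ge 0$, this forces $\sum_t b_t/(t+1) < \infty$ almost surely.

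Next I would verify $|b_{t+1} - b_t| = O(1/t)$ by splitting both $g$ and $f$ into a change of argument and a change of index. For the change of argument, $|g_{t+1}(\mathbf{L}_{t+1}) - g_{t+1}(\mathbf{L}_t)|$ and $|f_{t+1}(\mathbf{L}_{t+1}) - f_{t+1}(\mathbf{L}_t)|$ are each $O(\|\mathbf{L}_{t+1} - \mathbf{L}_t\|_F) = O(1/t)$, combining the uniform Lipschitzness from \Cref{prop:surrogate_function_Lispchitz,prop:empirical_cost_function_Lipschitz} with the step bound of \Cref{thm:solution_convergence}. For the change of index, a direct computation gives $g_{t+1}(\mathbf{L}) - g_t(\mathbf{L}) = \frac{1}{t+1}\hat{\ell}(\mathbf{d}_{t+1}, \mathbf{L}, \mathbf{r}_{t+1}, \mathbf{s}_{t+1}) - \frac{1}{t(t+1)}\sum_{i=1}^{t}\hat{\ell}(\mathbf{d}_i, \mathbf{L}, \mathbf{r}_i, \mathbf{s}_i) + O(1/t^2)$, and an analogous identity holds for $f_{t+1}(\mathbf{L}) - f_t(\mathbf{L})$ with $\hat{\ell}$ replaced by $\ell$. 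Since the per-frame costs $\hat{\ell}$ and $\ell$ are uniformly bounded by \Cref{prop:surrogate_function_Lispchitz}, both index-change terms are $O(1/t)$. Adding the two contributions establishes $|b_{t+1} - b_t| \le K/(t+1) = K a_t$.

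Finally I would apply \Cref{lem:positive_converging_sums} with $a_t = 1/(t+1)$: since $\sum_t a_t = \infty$, $\sum_t a_t b_t < \infty$ almost surely, and $|b_{t+1} - b_t| < K a_t$, the lemma yields $b_t \to 0$, that is, $g_t(\mathbf{L}_t) - f_t(\mathbf{L}_t) \to 0$ almost surely. I expect the first step to be the main obstacle: the almost-sure summability of $\sum_t b_t/(t+1)$ hinges on correctly invoking the quasi-martingale property of \Cref{thm:surrogate_function_convergence} and on controlling the empirical-process deviation $\|f - f_t\|_\infty$ through the Donsker estimate. The variation bound, although tedious, is routine once the boundedness and Lipschitz propositions are in hand.
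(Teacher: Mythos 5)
Your proposal is correct and follows essentially the same route as the paper's own proof: the same telescoping identity for $u_t = g_t(\mathbf{L}_t)$ combined with the quasi-martingale conclusion of \Cref{thm:surrogate_function_convergence} and the empirical-process bound to get $\sum_t \bigl(g_t(\mathbf{L}_t)-f_t(\mathbf{L}_t)\bigr)/(t+1) < \infty$ a.s., then the Lipschitz properties plus \Cref{thm:solution_convergence} to control $|b_{t+1}-b_t|$, and finally \Cref{lem:positive_converging_sums}. If anything, your version is slightly tighter than the paper's in two spots: you invoke the Donsker estimate with Tonelli where the paper loosely cites the central limit theorem, and you correctly separate the change-of-index terms (bounded by uniform boundedness of $\hat{\ell}$ and $\ell$) from the change-of-argument terms, rather than folding everything into a single $\kappa_{2}\left\Vert \mathbf{L}_{t+1}-\mathbf{L}_{t}\right\Vert_{F}$ bound.
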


\begin{proof}
	\label{proof:surrogate_function_converge_to_emiprical_cost_function}
	
	This proof is originally presented by \cite{feng2013ORPCA_SO}, for the completeness of this proof, we introduce it here. Combining \Cref{eq:u_t_difference} with $g_{t+1}(\mathbf{L}_{t+1}) - g_{t+1}(\mathbf{L}_{t}) \le 0$, we obtain
	\begin{equation}
	\begin{aligned}
	& \frac{g_{t}(\mathbf{L}_{t}) - f_{t}(\mathbf{L}_{t}) }{t+1} \\
	\le & \frac{\ell(\mathbf{d}_{t+1}, \mathbf{L}_{t}) - f_{t}(\mathbf{L}_{t})}{t+1}
	- (g_{t+1}(\mathbf{L}_{t+1}) - g_{t}(\mathbf{L}_{t}) ) \\
	\le & \frac{\ell(\mathbf{d}_{t+1}, \mathbf{L}_{t}) - f_{t}(\mathbf{L}_{t})}{t+1} 
	+ [g_{t+1}(\mathbf{L}_{t+1}) - g_{t}(\mathbf{L}_{t})]^{-},
	\end{aligned}
	\end{equation}
	in which $[\cdot]^{-}$ refers to the negative variation operator.
	
	Similar to the proof of \Cref{thm:surrogate_function_convergence}, we take the expectation conditioned on past information $\mathcal{F}_{t}$
	\begin{equation}
	\begin{aligned}
	\mathbb{E} [ \frac{g_{t}(\mathbf{L}_{t}) - f_{t}(\mathbf{L}_{t}) }{t+1} | \mathcal{F}_t  ] 
	= & \frac{g_{t}(\mathbf{L}_{t}) - f_{t}(\mathbf{L}_{t}) }{t+1} \\
	\le & \mathbb{E} [ \frac{\ell(\mathbf{d}_{t+1}, \mathbf{L}_{t}) - f_{t}(\mathbf{L}_{t})}{t+1} | \mathcal{F}_t ] \\
	& + \mathbb{E} [[g_{t+1}(\mathbf{L}_{t+1}) - g_{t}(\mathbf{L}_{t})]^{-} | \mathcal{F}_t ].
	\end{aligned}
	\end{equation}
	Accumulating $\frac{g_{t}(\mathbf{L}_{t}) - f_{t}(\mathbf{L}_{t}) }{t+1} $ with $t$ tending to $\infty$, we have
	\begin{equation}
	\begin{aligned}
	& \sum_{t= 1}^{\infty} \frac{g_{t}(\mathbf{L}_{t}) - f_{t}(\mathbf{L}_{t}) }{t+1} \\
	\le & \sum_{t= 1}^{\infty} \mathbb{E} [ \frac{\ell(\mathbf{d}_{t+1}, \mathbf{L}_{t}) -  \frac{1}{t} \sum_{i=1}^{t}  \ell(\mathbf{d}_{t}, \mathbf{L}_{t}) }{t+1} | \mathcal{F}_t ] \\
	& + \sum_{t= 1}^{\infty} \mathbb{E} [[g_{t+1}(\mathbf{L}_{t+1}) - g_{t}(\mathbf{L}_{t})]^{-} | \mathcal{F}_t ]\\
	\le & \sum_{t= 1}^{\infty}  \frac{ \vert f - f_{t} \vert }{t+1}  + \sum_{t= 1}^{\infty} \mathbb{E} [[g_{t+1}(\mathbf{L}_{t+1}) - g_{t}(\mathbf{L}_{t})]^{-} | \mathcal{F}_t ].
	\end{aligned}
	\end{equation}
	
	According to central limit theorem, $\sqrt{t}\vert f - f_{t} \vert $ converges almost surely as $t$ tends to infinity.
	From \Cref{thm:surrogate_function_convergence}, we also obtain that 
	\begin{equation}
	\begin{aligned}
	\sum_{t= 1}^{\infty} \vert \mathbb{E} [[g_{t+1}(\mathbf{L}_{t+1}) - g_{t}(\mathbf{L}_{t})]^{-} | \mathcal{F}_t ] \vert < + \infty.
	\end{aligned}
	\end{equation}
	Hence, we have the almost sure convergence of the positive sum
	\begin{equation}
	\begin{aligned}
	& \sum_{t= 1}^{\infty} \frac{g_{t}(\mathbf{L}_{t}) - f_{t}(\mathbf{L}_{t}) }{t+1}  \\
	\le & \sum_{t= 1}^{\infty}  \frac{ \vert f - f_{t} \vert }{t+1}  + \sum_{t= 1}^{\infty} \mathbb{E} [[g_{t+1}(\mathbf{L}_{t+1}) - g_{t}(\mathbf{L}_{t})]^{-} | \mathcal{F}_t ] \\
	< & \infty
	\end{aligned}
	\end{equation}

	As demonstrated in \Cref{prop:surrogate_function_Lispchitz,prop:empirical_cost_function_Lipschitz}, $g_t(\mathbf{L})$ and $f_t(\mathbf{L})$ are both Lipschitz with respect to $\mathbf{L}$, which implies there exists $\kappa_{2}$ such that 
	\begin{equation}
	\begin{aligned}
	& \left| g_{t+1}(\mathbf{L}_{t+1})- f_{t+1}(\mathbf{L}_{t+1}) 
	- ( g_t(\mathbf{L}_t) - f_t(\mathbf{L}_t) ) \right|  \\
	= & \left| g_{t+1}(\mathbf{L}_{t+1})- g_t(\mathbf{L}_t) 
	- (f_{t+1}(\mathbf{L}_{t+1})- f_t(\mathbf{L}_t) ) \right| \\
	\le & \left| g_{t+1}(\mathbf{L}_{t+1})- g_t(\mathbf{L}_t) \right| 
	+ \left| f_{t+1}(\mathbf{L}_{t+1})- f_t(\mathbf{L}_t) \right| \\
	\le & \left| g_{t+1}(\mathbf{L}_{t+1})- g_{t+1}(\mathbf{L}_t) \right|
	+  \left| g_{t+1}(\mathbf{L}_{t})- g_t(\mathbf{L}_t) \right| \\
	& + \left| f_{t+1}(\mathbf{L}_{t+1})- f_{t+1}(\mathbf{L}_t) \right|
	+  \left| f_{t+1}(\mathbf{L}_{t})- f_t(\mathbf{L}_t) \right| \\ 
	\le & \kappa_{2} \left\Vert \mathbf{L}_{t+1} - \mathbf{L}_{t} \right\Vert_{F}.
	\end{aligned}
	\end{equation}
	
	According to \Cref{lem:positive_converging_sums}, combining with  $\sum_{t=1}^{\infty} \frac{1}{t+1} = \infty$, we have that 
	\begin{equation}
	\begin{aligned}
	\lim_{t \to \infty} g_{t}(\mathbf{L}_{t}) - f_{t}(\mathbf{L}_{t}) = 0, a.s. .
	\end{aligned}
	\end{equation}
	
\end{proof}

\begin{thm}
	\label{thm:surrogate_function_converge_to_expected_cost_function}
	As $t$ tends to infinity, given the $\mathbf{L}_{t}$ is obtained by \Cref{alg:online_alg}, $g_{t}(\mathbf{L}_{t}) - f(\mathbf{L}_{t})$ converges to 0 almost surely.
\end{thm}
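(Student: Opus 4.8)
The plan is to split the target gap into one piece that is already controlled and one piece that is a uniform law-of-large-numbers statement. Writing
\[
g_t(\mathbf{L}_t) - f(\mathbf{L}_t) = \bigl(g_t(\mathbf{L}_t) - f_t(\mathbf{L}_t)\bigr) + \bigl(f_t(\mathbf{L}_t) - f(\mathbf{L}_t)\bigr),
\]
the first parenthesis tends to $0$ almost surely by \Cref{thm:surrogate_function_converge_to_emiprical_cost_function}. Hence it suffices to show that the second parenthesis, the difference between the empirical average cost and its expectation evaluated at the data-dependent iterate $\mathbf{L}_t$, vanishes almost surely.

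For that second piece I would first peel off the regularization, which is harmless: by the definition of the empirical cost function,
\[
f_t(\mathbf{L}_t) - f(\mathbf{L}_t) = \Bigl( \tfrac1t \textstyle\sum_{i=1}^t \ell(\mathbf{d}_i, \mathbf{L}_t) - \mathbb{E}_{\mathbf{d}}[\ell(\mathbf{d}, \mathbf{L}_t)] \Bigr) + \tfrac{\lambda_1}{2t}\left\Vert \mathbf{L}_t \right\Vert_F^2 ,
\]
and the last term tends to $0$ because $\mathbf{L}_t$ lies in the compact set $\mathcal{L}$ of \Cref{prop:uniformly_bounded}. The essential obstacle is that $\mathbf{L}_t$ is random and adapted to $\mathbf{d}_1,\dots,\mathbf{d}_t$, so a pointwise strong law cannot be applied to the remaining bracket; I would instead control it uniformly over $\mathcal{L}$:
\[
\left| \tfrac1t \textstyle\sum_{i=1}^t \ell(\mathbf{d}_i, \mathbf{L}_t) - \mathbb{E}_{\mathbf{d}}[\ell(\mathbf{d}, \mathbf{L}_t)] \right| \le \sup_{\mathbf{L}\in\mathcal{L}} \left| \tfrac1t \textstyle\sum_{i=1}^t \ell(\mathbf{d}_i, \mathbf{L}) - \mathbb{E}_{\mathbf{d}}[\ell(\mathbf{d}, \mathbf{L})] \right| =: \left\Vert f_t - f \right\Vert_\infty .
\]

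To bound the right-hand side I would invoke \Cref{lem:Donsker_theorem} on the class $F = \{\, \mathbf{d} \mapsto \ell(\mathbf{d}, \mathbf{L}) : \mathbf{L}\in\mathcal{L} \,\}$. Its hypotheses are supplied by the earlier estimates: $\ell(\mathbf{d},\cdot)$ is uniformly Lipschitz in $\mathbf{L}$ with a constant independent of $\mathbf{d}$ and $\ell$ is uniformly bounded (so $\mathbb{P}f^2$ and $\left\Vert f\right\Vert_\infty$ are bounded) by \Cref{prop:reconstruction_cost_function_Lipschitz}, while $\mathcal{L}$ is a bounded subset of a finite-dimensional space by \Cref{prop:uniformly_bounded}. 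This yields $\mathbb{E}[\sqrt{t}\,\left\Vert f_t - f \right\Vert_\infty] = O(1)$, from which $\left\Vert f_t - f \right\Vert_\infty \to 0$ almost surely (a P-Donsker class is Glivenko–Cantelli, exactly as already used in the proof of \Cref{thm:surrogate_function_convergence}). Combining the uniform bound with the vanishing regularization term gives $f_t(\mathbf{L}_t) - f(\mathbf{L}_t) \to 0$ almost surely, and adding back the first parenthesis gives the claim.

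I expect the main obstacle to be precisely the passage from a pointwise result to one valid at the moving point $\mathbf{L}_t$: because $\mathbf{L}_t$ is not independent of the data, the proof must pay for uniformity over $\mathcal{L}$, which is where the Donsker/Glivenko–Cantelli machinery (and hence the uniform-Lipschitz and boundedness Propositions) becomes indispensable. A secondary technical point is the honest upgrade from the moment bound $\mathbb{E}[\sqrt{t}\,\left\Vert f_t - f\right\Vert_\infty] = O(1)$ to almost-sure convergence of the supremum, which should rest on the Donsker property itself rather than on a naive Borel–Cantelli argument, since the companion series $\sum_t t^{-1/2}$ diverges.
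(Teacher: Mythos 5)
Your proposal is correct and takes essentially the same route as the paper: decompose $g_t(\mathbf{L}_t)-f(\mathbf{L}_t)$ into $\bigl(g_t(\mathbf{L}_t)-f_t(\mathbf{L}_t)\bigr)$, which vanishes by \Cref{thm:surrogate_function_converge_to_emiprical_cost_function}, plus $\bigl(f_t(\mathbf{L}_t)-f(\mathbf{L}_t)\bigr)$, which is bounded by the uniform deviation $\left\Vert f_t - f\right\Vert_{\infty}\to 0$. The paper's own proof is a two-line sketch asserting this uniform convergence (with a citation pointing to \Cref{thm:solution_convergence} rather than to the Donsker argument); you supply the details it leaves implicit, namely the handling of the $\tfrac{\lambda_1}{2t}\left\Vert \mathbf{L}_t\right\Vert_F^2$ term, the passage to a supremum over the compact set $\mathcal{L}$ to cope with the data-dependence of $\mathbf{L}_t$, and the Glivenko--Cantelli upgrade from the moment bound of \Cref{lem:Donsker_theorem} to almost-sure convergence.
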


\begin{proof}
	\label{proof:surrogate_function_converge_to_expected_cost_function}
	
	From \Cref{thm:solution_convergence}, we have $\left\Vert f - f_{t} \right\Vert_{\infty}$ converges to 0 almost surely, which implies 
	\begin{equation}
	\lim_{t \to \infty} g_t(\mathbf{L}_t) - f(\mathbf{L}_t) = 0, a.s. .
	\end{equation}
	
\end{proof}

\bibliographystyle{IEEEtran}
\footnotesize\bibliography{refs}

\begin{IEEEbiography}[{\includegraphics[width=0.95 \textwidth]{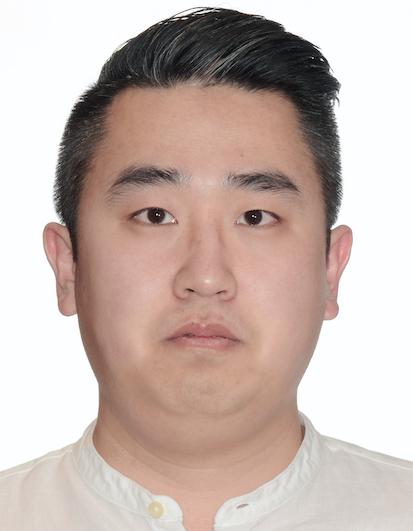}}]{Junpeng Zhang}
	received the B.Sci. degree from the China University of Mining and Technology, Xuzhou, China, in 2013 and the Master's degree in surveying engineering from the same university in 2016. 
	He is currently pursuing the Ph.D. degree in electrical engineering from The University of New South Wales, Australia.
	His research interests include object detection and tracking in remote sensing imaginary.
	He was the winner of "DSTG Best Contribution to Science Award" in Digital Image Computing: Techniques and Applications 2018 (DICTA 2018). 	
\end{IEEEbiography}

\begin{IEEEbiography}[{\includegraphics[width=0.95 \textwidth]{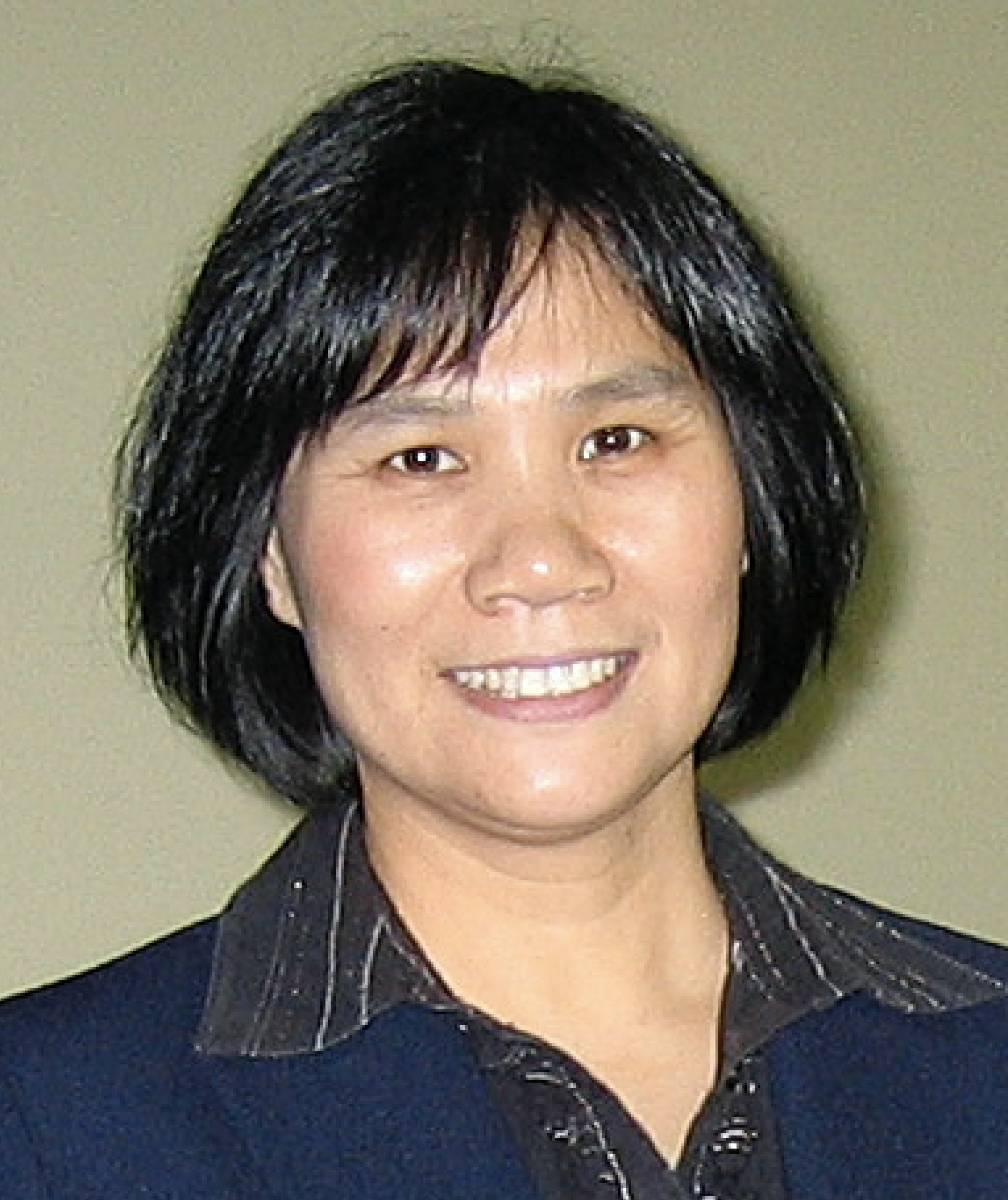}}]{Xiuping Jia}
	(M’93 \textendash SM’03) received the B.Eng. degree from the Beijing University of Posts and Telecommunications, Beijing, China, in 1982 and the Ph.D. degree in electrical engineering from The University of New South Wales, Australia, in 1996. Since 1988, she has been with the School of Engineering and Information Technology, The University of New South Wales at Canberra, Australia, where she is currently an Associate Professor. 
	Her research interests include remote sensing, image processing and spatial data analysis. Dr. Jia has authored or coauthored more than 200 referred papers, including over 100 journal papers with h-index of 34 and i10 of 102. 
	She has co-authored of the remote sensing textbook titled Remote Sensing Digital Image Analysis [Springer-Verlag, 3rd (1999) and 4th eds. (2006)]. 
	She is a Subject Editor for the Journal of Soils and Sediments and an Associate Editor of the IEEE TRANSACTIONS ON GEOSCIENCE AND REMOTE SENSING.
\end{IEEEbiography}

\begin{IEEEbiography}[{\includegraphics[width=0.95 \textwidth]{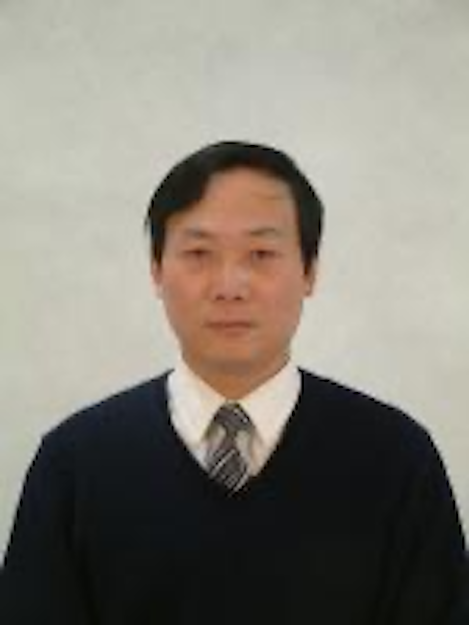}}]{Jiankun Hu}
	receive the Ph.D. degree in control engineering from the Harbin Institute of Technology, China, in 1993, and the master’s degree in computer science and software engineering from Monash University, Australia, in 2000. 
	He was a Research Fellow with Delft University, The Netherlands, from 1997 to 1998, and The University of Melbourne, Australia, from 1998 to 1999. He is a full professor of Cyber Security at the School of Engineering and Information Technology, the University of New South Wales at Canberra, Australia. 
	His main research interest is in the field of cyber security, including biometrics security, where he has published many papers in high-quality conferences and journals including the IEEE TRANSACTIONS ON PATTERN ANALYSIS AND MACHINE INTELLIGENCE. 
	He has served on the editorial boards of up to seven international journals and served as a Security Symposium Chair of the IEEE Flagship Conferences of IEEE ICC and IEEE GLOBECOM. 
	He has obtained nine Australian Research Council (ARC) Grants. He served at the prestigious Panel of Mathematics, Information and Computing Sciences, ARC ERA (The Excellence in Research for Australia) Evaluation Committee 2012.
\end{IEEEbiography}

\begin{IEEEbiography}[{\includegraphics[width=0.95 \textwidth]{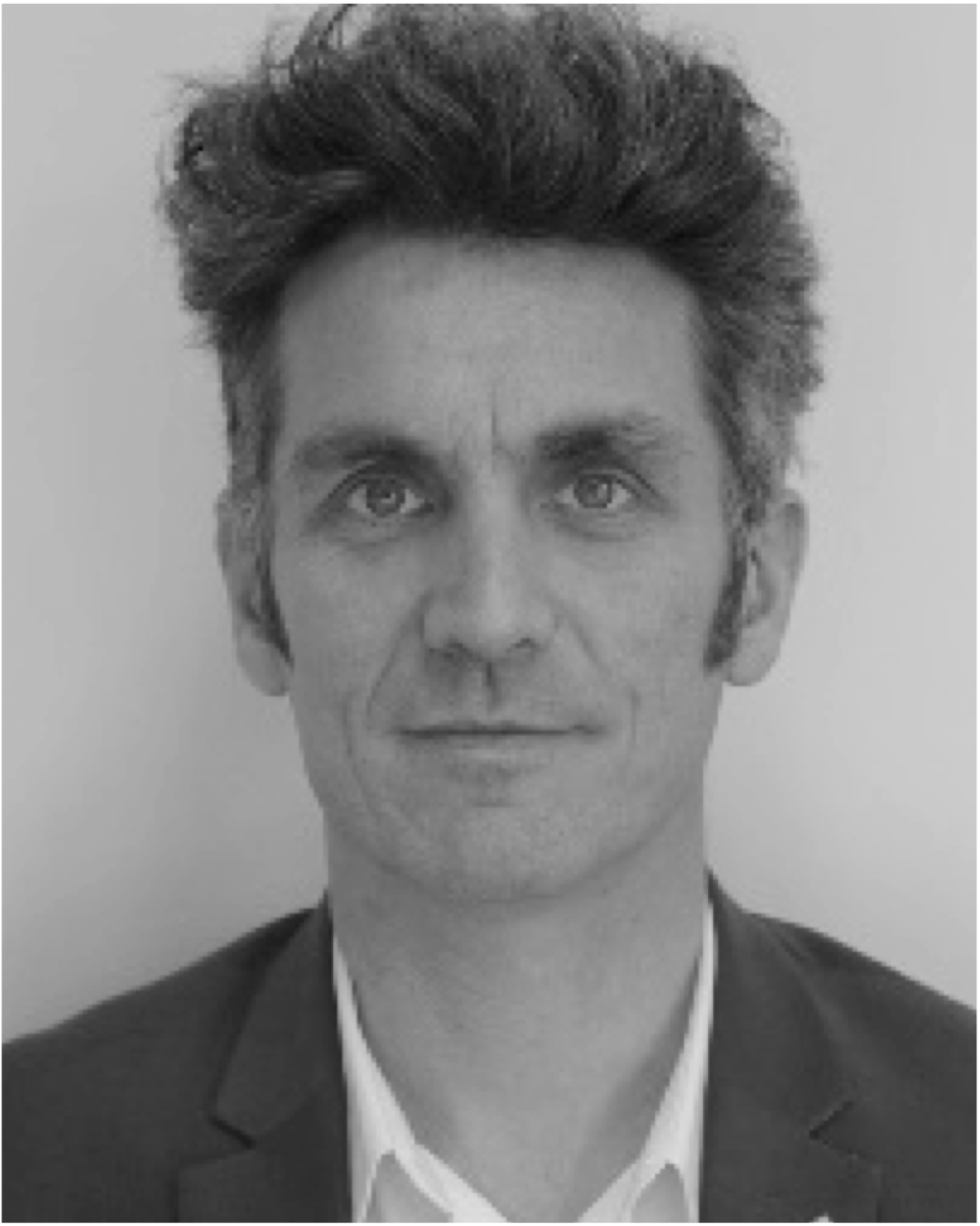}}]{Jocelyn Chanussot}
	(M’04 \textendash SM’04 \textendash F’12) received the M.Sc. degree in electrical engineering from the Grenoble Institute of Technology (Grenoble INP), Grenoble, France, in 1995, and the Ph.D. degree from the Universit de Savoie, Annecy, France, in 1998. In 1999, he was with the Geography Imagery Perception Laboratory for the Delegation Generale de l’ Armement (French National Defense Department). Since 1999, he has been with Grenoble INP, where he is currently a Professor of signal and image processing. He has been a Visiting Scholar
	with Stanford University, Stanford, CA, USA; KTH, Stockholm, Sweden; and NUS, Singapore. Since 2013, he has been an Adjunct Professor with the University of Iceland, Reykjavik, Iceland. From 2015 to 2017, he was a Visiting Professor with the University of California at Los Angeles, Los Angeles, CA, USA. He is conducting his research at GIPSA-Lab. His research interests include image analysis, multicomponent image processing, nonlinear filtering, and data fusion in remote sensing.

	Dr. Chanussot was a member of the IEEE Geoscience and Remote Sensing Society AdCom from 2009 to 2010, in charge of membership development and Machine Learning for Signal Processing Technical Committee of the IEEE Signal Processing Society from 2006 to 2008. He is a member of the Institut Universitaire de France from 2012 to 2017. He was the General Chair of the first IEEE GRSS Workshop on Hyperspectral Image and Signal Processing, Evolution in Remote sensing. He was the Chair from 2009 to 2011 and the Co-Chair of the GRS Data Fusion Technical Committee from 2005 to 2008. He was the Program Chair of the IEEE International Workshop on Machine Learning for Signal Processing in 2009. He is the Founding President of IEEE Geoscience and Remote Sensing French Chapter from 2007 to 2010 which received the 2010 IEEE GRSS Chapter Excellence Award. He was a co-recipient of the NORSIG 2006 Best Student Paper Award, the IEEE GRSS 2011 and 2015 Symposium Best Paper Award, the IEEE GRSS 2012 Transactions Prize Paper Award, and the IEEE GRSS 2013 Highest Impact Paper Award. He was an Associate Editor for IEEE GEOSCIENCE AND REMOTE SENSING LETTERS from 2005 to 2007 and Pattern Recognition from 2006 to 2008. He was the Editor-in-Chief of the IEEE JOURNAL OF SELECTED TOPICS IN APPLIED EARTH OBSERVATIONS AND REMOTE SENSING from 2011 to 2015. Since 2007, he has been an Associate Editor for the IEEE TRANSACTIONS ON GEOSCIENCE AND REMOTE SENSING, and since 2018, he has also been an Associate Editor for the IEEE TRANSACTIONS ON IMAGE PROCESSING. He was the Guest Editor for the PROCEEDINGS OF THE IEEE in 2013 and IEEE SIGNAL PROCESSING MAGAZINE in 2014.
\end{IEEEbiography}

\end{document}